\documentclass[10pt]{article}
\usepackage[top=1in, bottom=1in, left=0.9in, right=0.9in]{geometry}

\usepackage{zhiyuz}

\title{\textbf{Unconstrained Dynamic Regret via Sparse Coding}}
\author{
  Zhiyu Zhang\thanks{Work done at Boston University.} \\
  Harvard University\\
  \texttt{zhiyuz@seas.harvard.edu}\\
  \and
  Ashok Cutkosky \\
  Boston University\\
  \texttt{ashok@cutkosky.com}\\
  \and
  Ioannis Ch. Paschalidis\\
  Boston University\\
  \texttt{yannisp@bu.edu}\\
}
\date{\vspace{-5ex}}

\begin{document}
\maketitle

\begin{abstract}
Motivated by the challenge of nonstationarity in sequential decision making, we study Online Convex Optimization (OCO) under the coupling of two problem structures: the domain is unbounded, and the comparator sequence $u_1,\ldots,u_T$ is arbitrarily time-varying. As no algorithm can guarantee low regret simultaneously against all comparator sequences, handling this setting requires moving from minimax optimality to comparator adaptivity. That is, sensible regret bounds should depend on certain complexity measures of the comparator relative to one's prior knowledge. 

This paper achieves a new type of these adaptive regret bounds via a sparse coding framework. The complexity of the comparator is measured by its energy and its sparsity on a user-specified dictionary, which offers considerable versatility. Equipped with a wavelet dictionary for example, our framework improves the state-of-the-art bound \cite{jacobsen2022parameter} by adapting to both ($i$) the magnitude of the comparator average $\norm{\bar u}=\norms{\sum_{t=1}^Tu_t/T}$, rather than the maximum $\max_t\norm{u_t}$; and ($ii$) the comparator variability $\sum_{t=1}^T\norm{u_t-\bar u}$, rather than the uncentered sum $\sum_{t=1}^T\norm{u_t}$. Furthermore, our analysis is simpler due to decoupling function approximation from regret minimization.
\end{abstract}

\section{Introduction}

Nonstationarity is prevalent in sequential decision making, which poses a critical challenge to the vast majority of existing approaches developed offline. Consider weather forecasting for example \cite{schultz2021can}. A meteorologist typically starts from the governing physical equations and simulates them online using high performance computing; the imperfection of this physical model can lead to time-varying patterns in its forecasting error. Alternatively, a machine learning scientist may build a data-driven model from historical weather datasets, but its online deployment is subject to distribution shifts. If the structure of such nonstationarity can be exploited in our algorithm, then we may expect better forecasting performance. This paper investigates the problem from a theoretical angle -- we aim to improve nonstationary online decision making by incorporating \emph{temporal representations}. 

Concretely, we study \emph{Online Convex Optimization} (OCO), which is a repeated game between us (the player) and an adversarial environment $\calE$. In each (the $t$-th) round, with a mutually known Lipschitz constant $G$, 
\begin{enumerate}
\item We make a prediction $x_t\in\R^d$ based on the observations before the $t$-th round.
\item The environment $\calE$ reveals a convex loss function $l_t:\R^d\rightarrow\R$ dependent on our prediction history $x_1,\ldots,x_t$; $l_t$ is $G$-Lipschitz with respect to $\norm{\cdot}_2$. 
\item We suffer the loss $l_t(x_t)$.
\end{enumerate}
The game ends after $T$ rounds, and then, our total loss is compared to that of an alternative sequence of predictions $u_1,\ldots,u_T\in\R^d$. Without knowing the time horizon $T$, the environment $\calE$ and the \emph{comparator sequence} $\{u_t\}_{t\in\Z}$, our goal is to achieve low \emph{unconstrained dynamic regret}
\begin{equation}\label{eq:regret}
\reg_T(u_{1:T})\defeq\sup_{\calE}\spar{\sum_{t=1}^Tl_t(x_t)-\sum_{t=1}^Tl_t(u_t)}.
\end{equation}
Fixing any comparator $\{u_t\}_{t\in\Z}$: if such an expression can be upper-bounded by a sublinear function of $T$, then asymptotically, in any environment, we perform at least as well as the $\{u_t\}_{t\in\Z}$ sequence.

The above setting deviates from the most standard setting of OCO \cite{hazan2016introduction,orabona2019modern} in two ways.
\begin{itemize}
\item \textbf{Structure 1.} The domain $\R^d$ is unbounded.
\item \textbf{Structure 2.} The comparator is allowed to be time-varying.
\end{itemize}
While the latter has been studied extensively in the literature (since \cite{zinkevich2003online}) to account for nonstationarity, most existing approaches require a \emph{time-invariant bounded domain} to set the hyperparameters properly, which, to some extent, limits the amount of nonstationarity they can handle. One might argue that most practical problems have a finite range, which could be heuristically estimated from offline datasets. However, such a heuristic is not robust in nature, as underestimates will invalidate the theoretical analysis, and overestimates will make the regret bound excessively conservative. It is thus important to study the more challenging unconstrained dynamic setting\footnote{It is known that unconstrained OCO algorithms can also handle \emph{time-varying} (but not necessarily bounded) domains in a black-box manner \cite[Section~4]{cutkosky2020parameter}.} combining the two problem structures, where algorithms cannot rely on pre-selected range estimates at all. 

Taking a closer look at their analysis, it is perhaps a little surprising that these two problem structures share a common theme, despite being studied mostly separately. In either the unconstrained static setting \cite{mcmahan2014unconstrained,orabona2016coin,cutkosky2018black} or the bounded dynamic setting \cite{zinkevich2003online,hall2015online,zhang2018adaptive}, the standard form of \emph{minimax optimality} \cite{abernethy2008optimal,abernethy2009stochastic,rakhlin2014statistical} becomes vacuous, as it is impossible to guarantee that $\sup_{u_{1:T}}\reg_T(u_{1:T})$ is sublinear in $T$. Circumventing this issue relies on \emph{comparator adaptivity}\footnote{In general, adaptivity means achieving near minimax optimality simultaneously for many restricted sub-classes of the problem, where minimax optimality is well-defined \cite[Chapter 6]{johnstone2019gaussian}.} -- instead of only depending on $T$, any appropriate regret upper bound, denoted by $\mathrm{Bound}_T(u_{1:T})$, should also depend on the comparator $u_{1:T}$ through a certain \emph{complexity measure}. Intuitively, despite the intractability of hard comparators, nonvacuous bounds can be established against ``easy ones''.\footnote{Related is the comparison of \emph{uniform consistency} and \emph{universal consistency} in \cite[Chapter~6.2]{rakhlin2014statistical}.} A total loss bound then follows from the \emph{oracle inequality}
\begin{equation}\label{eq:oracle}
\sum_{t=1}^Tl_t(x_t)\leq \inf_{u_{1:T}}\spar{\sum_{t=1}^Tl_t(u_t)+\mathrm{Bound}_T(u_{1:T})}.
\end{equation}

A crucial observation is that the complexity of $u_{1:T}$ is not uniquely defined: one could imagine bounding $\reg_T(u_{1:T})$ by many different non-comparable functions of $u_{1:T}$. Essentially, this complexity measure serves as a \emph{Bayesian prior}:\footnote{The prior can be selected on the fly, depending on the observation history. This brings key practical benefits: Appendix~\ref{section:application} discusses how an empirical forecaster based on domain knowledge or deep learning could be ``robustified'' using our framework.} choosing it amounts to assigning different priorities to different comparators $u_{1:T}\in\R^{d\times T}$. The associated algorithm guarantees lower $\mathrm{Bound}_T(u_{1:T})$ against comparators with higher priority, and due to Eq.(\ref{eq:oracle}), the total loss of our algorithm is low if some of these high priority comparators \emph{actually} achieve low loss $\sum_{t=1}^Tl_t(u_t)$. Such a Bayesian reasoning highlights the importance of versatility in this workflow: in order to place an arbitrary application-dependent prior, we need a versatile algorithmic framework that adapts to a wide range of complexity measures. This leads to the limitations of existing results, discussed next.

To our knowledge, \cite{jacobsen2022parameter} is the only existing work that considers our setting. Two unconstrained dynamic regret bounds are presented based on three statistics of the comparator sequence, the \emph{maximum range} $M\defeq \max_t\norm{u_t}_2$, the \emph{norm sum} $S\defeq\sum_{t=1}^T\norm{u_t}_2$ and the \emph{path length} $P\defeq\sum_{t=1}^{T-1}\norm{u_{t+1}-u_t}_2$. First, using a 1D unconstrained static algorithm as a simple range scaler, the paper achieves \cite[Lemma~10]{jacobsen2022parameter}
\begin{equation}\label{eq:known_first}
\reg_T(u_{1:T})\leq\tilde O\rpar{\sqrt{(M+P)MT}}.
\end{equation}
Then, by developing a customized mirror descent approach, most of the effort is devoted to improving $MT$ to $S$ \cite[Theorem~4]{jacobsen2022parameter}, i.e., adapting to the magnitude of individual $u_t$.
\begin{equation}\label{eq:known_second}
\reg_T(u_{1:T})\leq \tilde O\rpar{\sqrt{(M+P)S}}.
\end{equation}
Despite the strengths of these results and their nontrivial analysis, a shared limitation is that both bounds depend explicitly on the path length $P$. Intuitively, it means that good performance is only guaranteed in \emph{almost static} environments: in the typical situation of $S=\Theta(T)$, these bounds are only sublinear when $P=o(T)$, which rules out important persistent dynamics such as periodicity. Moreover, even the second bound still depends on $MS$ instead of a finer characterization of each individual $u_t$'s magnitude. That is, the mission of removing $M$ is not fully accomplished yet.\footnote{The significance of this issue could be seen through an analogy to (static $D$-bounded domain) \emph{gradient adaptive} OCO: although there are algorithms achieving the already adaptive $O\rpar{D\sqrt{G\sum_{t=1}^T\norm{g_t}_2}}$ static regret bound, the hallmark of gradient adaptivity is the so-called ``second-order bound'' $O\rpar{D\sqrt{\sum_{t=1}^T\norm{g_t}^2_2}}$, popularized by \textsc{AdaGrad} \cite{duchi2011adaptive}. In a rough but related sense, we aim to achieve ``second order comparator adaptivity'', which is only manifested in the less studied dynamic regret setting.} 

The goal of this paper is to extend comparator adaptivity to a wider range of complexity measures. For almost static environments in particular, quantitative benefits will be obtained from specific instances of this general approach.

\subsection{Contribution}

The contributions of this paper are twofold.

\begin{enumerate}
\item First, we present an algorithmic framework achieving a new type of unconstrained dynamic regret bounds. It is based on a conversion to vector-output \emph{Online Linear Regression} (OLR): given a dictionary $\calH$ of orthogonal feature vectors spanning the \emph{sequence space} $\R^{dT}$, we use an unconstrained static OCO algorithm to linearly aggregate these feature vectors, which are themselves time-varying prediction sequences. Such a procedure guarantees
\begin{equation}\label{eq:regret_generic}
\reg_T(u_{1:T})\leq \tilde O\rpar{\sqrt{E\cdot\sparsity_\calH}},
\end{equation}
where $E=\sum_{t=1}^T\norm{u_t}^2_2$ is the \emph{energy} of the comparator $u_{1:T}$, and $\sparsity_\calH$ measures the sparsity of $u_{1:T}$ on the dictionary $\calH$.\footnote{For conciseness, we omit $u_{1:T}$ in the notation. Throughout this paper, the regularity parameters on the RHS of the regret bound generally depend on $u_{1:T}$. A list of these parameters is presented in Appendix~\ref{section:list}, including their relations.} Both $E$ and $\sparsity_\calH$ are unknown beforehand.

Compared to \cite{jacobsen2022parameter}, the main advantage of this framework is its versatility. Prior knowledge on the \emph{transform domain} can be incorporated by picking $\calH$, and favorable algorithmic properties can be conveniently inherited from static online learning. 

\item Our second contribution is quantitative: although \cite{jacobsen2022parameter} is specifically crafted to handle almost static environments, we show that equipped with a \emph{Haar wavelet} dictionary, our framework actually guarantees better bounds (Table~\ref{table:comparison}) in this setting, which is a surprising finding to us.

\begin{table}[t]
\centering
\resizebox{\textwidth}{!}{
\small
\begin{tabular}{|c|c|c|c|c|c|}
\hline
\textbf{Algorithm} & \textbf{$P$-dependent bound} & \textbf{$K$-switching regret} & \textbf{Example~\ref{example:outlier}} & \textbf{Example~\ref{example:oscillation}}\\
\hline 
\hline 
\textsc{Ader} \cite{zhang2018adaptive} (meta-expert OGD) & $\tilde O\rpar{\sqrt{(D+P)DT}}$   & $\tilde O\rpar{D\sqrt{(1+K)T}}$ & N/A & $\tilde O(T^{3/4})$\\
\cite[Algorithm~6]{jacobsen2022parameter} (range scaling) & $\tilde O\rpar{\sqrt{(M+P)MT}}$   & $\tilde O\rpar{M\sqrt{(1+K)T}}$ & $\tilde O(T)$ & $\tilde O(T^{3/4})$\\
\cite[Algorithm~2]{jacobsen2022parameter} (centered MD) & $\tilde O\rpar{\sqrt{(M+P)S}}$   & $\tilde O\rpar{\sqrt{(1+K)MS}}$ & $\tilde O(T^{3/4})$ & $\tilde O(T^{3/4})$\\
\hline
\textcolor{red}{Ours (Haar OLR)} & \textcolor{red}{$\tilde O\rpar{\norm{\bar u}_2\sqrt{T}+\sqrt{P\bar S}}$}   & \textcolor{red}{$\tilde O\rpar{\norm{\bar u}_2\sqrt{T}+\sqrt{K\bar E}}$} & \textcolor{red}{$\tilde O(\sqrt{T})$} & \textcolor{red}{$\tilde O(\sqrt{T})$}\\
\hline
\end{tabular}
}
\caption{Comparison in almost static environments. Each row improves the previous row (omitting logarithmic factors), c.f., Appendix~\ref{section:list}. The \textsc{Ader} algorithm requires a $D$-bounded domain, while the other three algorithms are unconstrained. The rates in the two examples refer to the minimum of the $P$ and $K$ dependent bounds.
\label{table:comparison}}
\end{table}

\begin{itemize}[topsep=0pt,itemsep=0pt,leftmargin=*]
\item With the \emph{comparator average} $\bar u\defeq\sum_{t=1}^Tu_t/T$ and the \emph{first order variability} $\bar S\defeq\sum_{t=1}^T\norm{u_t-\bar u}_2$, our Haar wavelet algorithm guarantees
\begin{equation*}
\reg_T(u_{1:T})\leq
\tilde O\rpar{\norm{\bar u}_2\sqrt{T}+\sqrt{P\bar S}}.
\end{equation*}
It improves Eq.(\ref{eq:known_second}) by ($i$) a better dependence on the comparator magnitude ($\sqrt{MS}\rightarrow \norm{\bar u}_2\sqrt{T}$); and ($ii$) decoupling the bias $\bar u$ from the characterization of variability ($\sqrt{PS}\rightarrow \sqrt{P\bar S}$). 
\item With the \emph{number of switches} $K\defeq \sum_{t=1}^{T-1}\bm{1}[u_{t+1}\neq u_t]$ and the \emph{second order variability} $\bar E\defeq\sum_{t=1}^T\norm{u_t-\bar u}^2_2$, the same Haar wavelet algorithm guarantees an \emph{unconstrained switching regret bound}
\begin{equation*}
\reg_T(u_{1:T})\leq \tilde O\rpar{\norm{\bar u}_2\sqrt{T}+\sqrt{K\bar E}},
\end{equation*}
which improves the existing $\tilde O\rpar{\sqrt{(1+K)MS}}$ bound resulting from Eq.(\ref{eq:known_second}) and $P=O(KM)$. 
\end{itemize}
Due to the \emph{local property} of wavelets, our algorithm runs in $O(d\log T)$ time per round, matching that of the baselines. As for the regret, our bounds are never worse than the baselines, and in two examples corresponding to $\norm{\bar u}_2\ll M$ and $\bar S\ll S$, they reduce to clearly improved rates in $T$. Furthermore, our analysis follows from the generic regret bound Eq.(\ref{eq:regret_generic}) and the \emph{wavelet approximation theory}, providing an intriguing connection between disparate fields.
\end{enumerate}

The paper concludes with an application in fine-tuning time series forecasters, where unconstrained dynamic OCO is naturally motivated. Due to limited space, this is deferred to Appendix~\ref{section:application}, with experiments that support our theoretical results.

\subsection{Related work}\label{subsection:related}

Our paper addresses the connection between unconstrained OCO and dynamic OCO. Although they both embody the idea of comparator adaptivity, unified studies have been scarce. 

\paragraph{Unconstrained OCO} To obtain static regret bounds in OCO, \emph{Online Gradient Descent} (OGD) \cite{zinkevich2003online} is often the default approach. With learning rate $\eta$, it guarantees $O(\eta^{-1}\norm{u}^2_2+\eta T)$ regret with respect to any \emph{unconstrained} static comparator $u\in\R^d$, and the optimal choice in hindsight is $\eta=O(\norm{u}_2/\sqrt{T})$. Without the oracle knowledge of $\norm{u}_2$, it is impossible to tune $\eta$ optimally. To address this issue, a series of works (also called \emph{parameter-free online learning}) \cite{streeter2012no,mcmahan2013minimax,gerchinovitz2014adaptive,mcmahan2014unconstrained,orabona2016coin,foster2017parameter,cutkosky2018black,foster2018online,mhammedi2020lipschitz,chen2021impossible,zhang2022pde} developed vastly different strategies to achieve the \emph{oracle optimal rate} $O(\norm{u}\sqrt{T})$ up to logarithmic factors. That is, the algorithm performs as if the complexity measure $\norm{u}$ is known beforehand.

There is certain flexibility in the choice of the norm $\norm{\cdot}$: $L_1$ and $L_2$ norm bounds were presented in \cite{streeter2012no}, while Banach norm bounds were developed by \cite{foster2018online,cutkosky2018black}. Historically, the connection between the $L_1$ norm and sparsity has powered breakthroughs in batch data science, including LASSO \cite{tibshirani1996regression} and compressed sensing \cite{candes2006robust}. However, the parallel path in online learning remains less studied: while the sparsity implication of the $L_1$ norm adaptive bounds has been discussed in the literature \cite{streeter2012no,gerchinovitz2013sparsity,van2019user}, there is in general a lack of downstream investigations with concrete benefits. In this paper, we show that the sparsity of the comparator can be naturally associated to the \emph{structural simplicity} of a nonstationary environment. 

\paragraph{Dynamic OCO} Comparing against dynamic sequences is a classical research topic. It is clear that one cannot go beyond linear regret in the worst case, therefore various notions of complexity should be introduced. 
\begin{itemize}
\item The closest topic to ours is the \emph{universal dynamic regret}, where the regret bound adapts to the complexity of an arbitrary $u_{1:T}$ on a bounded domain with $L_p$-diameter $D$. In the most common framework, the complexity measure is an $L_{p,q}$ norm of the difference sequence $\{u_{t+1}-u_t\}$, such as the $L_{p,1}$ norm, i.e., the path length $P=\sum_{t=1}^{T-1}\norm{u_{t+1}-u_{t}}_p$ \cite{herbster2001tracking}.\footnote{Motivated by nonparametric statistics, the $L_{p,q}$ norm $\rpar{\sum_{t=1}^{T-1}\norm{u_{t+1}-u_{t}}^q_p}^{1/q}$ with $q>1$ is associated to more homogeneous measures of comparator smoothness \cite{baby2019online,baby2021optimal}.} Omitting the dependence on the dimension $d$ (thus also the choice of $p$), the optimal bound under convex Lipschitz losses is $\tilde O\rpar{\sqrt{(D+P)DT}}$ \cite{zinkevich2003online,hall2015online,jadbabaie2015online,gyorgy2016shifting,zhang2018adaptive}, while the accelerated rate\footnote{Further omitting the dependence on the diameter $D$.} $\tilde O(P^{2/3}T^{1/3}\vee 1)$ can be achieved with strong convexity \cite{baby2021optimal,baby2022optimal}. Improvements have been studied under the additional smoothness assumption \cite{zhao2020dynamic,zhao2021adaptivity}. These bounds subsume results in \emph{switching (a.k.a., shifting) regret}, where the complexity of $u_{1:T}$ is measured by its number of switches $K$, as $P$ is dominated by $DK$. 

A notable exception is the \emph{dynamic model} framework from \cite{hall2015online,zhang2018adaptive}. Still considering a bounded domain, it takes a collection of dynamic models as input, which are mappings from the domain to itself. Then, the complexity of a comparator $u_{1:T}$ is measured by how well it can be reconstructed by the best dynamic model in hindsight. Essentially, the use of temporal representations is somewhat similar to the dictionary in our framework. The important difference is that instead of using the best feature (or the best convex combination of the features) to measure the comparator, we use \emph{linear combinations} of the features -- this allows handling unconstrained domains through subspace modeling. 
\item Besides the universal dynamic regret, there are other notions of dynamic regret that do not induce oracle inequalities like Eq.(\ref{eq:oracle}), including ($i$) the \emph{restricted dynamic regret} \cite{yang2016tracking,zhang2017improved,baby2019online,baby2020adaptive,baby2021optimalb}, which depends on the complexity of certain \emph{offline optimal} comparators;\footnote{Notably, \cite{baby2019online,baby2020adaptive} creatively employed wavelet techniques to detect change-points of the environment, which, to the best of our knowledge, is the only existing use of wavelets in the online learning literature.} and ($ii$) regret bounds that depend on the \emph{functional variation} $\sum_{t=1}^{T-1}\max_x\abs{l_t(x)-l_{t+1}(x)}$ \cite{besbes2015non,chen2019nonstationary}. They are not as relevant to our purpose, due to being vacuous on unbounded domains under the linear losses -- this is an important setting in our investigation. 
\end{itemize}

\paragraph{Unconstrained (universal) dynamic regret} To our knowledge, \cite{jacobsen2022parameter} is the only work studying the universal dynamic regret without a bounded domain, whose contributions have been summarized in our Introduction. Here we survey some negative results in the literature, which will be revisited in Section~\ref{subsection:haar_main}.
\begin{itemize}
\item The restricted dynamic regret is a special case of the universal dynamic regret, therefore lower bounds for the former apply to the latter as well. For convex Lipschitz losses \cite{yang2016tracking} and strongly convex losses \cite{baby2019online}, any algorithm should suffer the dynamic regret of $\Omega(P)$ and $\Omega(P^2)$, respectively. 
\item For dynamic OCO on bounded domains, a recurring analysis goes through the notion of \emph{strong adaptivity} \cite{daniely2015strongly}: one first achieves low \emph{static} regret bounds on \emph{every subinterval} of the time horizon $[1:T]$, and then assembles these local bounds appropriately to bound the global dynamic regret \cite{zhang2018dynamic,cutkosky2020parameter,baby2021optimal,baby2022optimal}. Following this route in the unconstrained setting appears to be challenging, as \cite[Section~4]{jacobsen2022parameter} showed that (a natural form of) strong adaptivity cannot be achieved there. 
\end{itemize}

Additional discussions of related works are deferred to Appendix~\ref{section:more_related}, including the more general problem of \emph{online nonparametric regression}, the more specific problem of \emph{parametric time series forecasting}, and other orthogonal uses of sparsity in online learning. 

\subsection{Notation}\label{subsection:notation}

For two integers $a\leq b$, $[a:b]$ is the set of all integers $c$ such that $a\leq c\leq b$. The brackets are removed when on the subscript, denoting a tuple with indices in $[a:b]$. Treating all vectors as column vectors, $\spn(A)$ represents the column space of a matrix $A$. $\log$ is natural logarithm when the base is omitted, and $\log_+(\cdot)\defeq 0\vee\log(\cdot)$. $\polylog$ denotes a poly-logarithmic function of its input. $0$ represents a zero vector whose dimension depends on the context.

\section{The general sparse coding framework}\label{section:coding}

This section presents our sparse coding framework, achieving the generic sparsity adaptive regret bound Eq.(\ref{eq:regret_generic}). The key idea is to view online learning on the sequence space $\R^{dT}$, rather than the default domain $\R^d$. Despite its central role in signal processing (e.g., the \emph{Fourier transform}), such a view is (in our opinion) under-explored by the online learning community.\footnote{Possibly due to the emphasis on the static regret: the sequence $u_{1:T}$ collapses into a time-invariant $u$, which is contained in $\R^d$.} Along this line, Section~\ref{subsection:coding_setting} converts our setting into a variant of \emph{Online Linear Regression} (OLR). Our main result and related discussions are presented in Section~\ref{subsection:main}.

\subsection{Setting}\label{subsection:coding_setting}

To begin with, we follow the conventions in online learning \cite{hazan2016introduction,orabona2019modern} to linearize convex losses. Consider that instead of the full loss function $l_t$, we only observe its subgradient $g_t\in\partial l_t(x_t)$ at our prediction $x_t$. By using the linear loss $\inner{g_t}{\cdot}$ as a surrogate, we can still upper bound the regret Eq.(\ref{eq:regret}) due to $l_t(x_t)-l_t(u)\leq \inner{g_t}{x_t-u}$. The linear loss problem is also called \emph{Online Linear Optimization} (OLO), where each observation $g_t$ is a $d$ dimensional vector satisfying $\norm{g_t}_2\leq G$.

Now, consider the length $T$ sequences of predictions $x_{1:T}$, gradients $g_{1:T}$ and comparators $u_{1:T}$. Let us flatten everything and treat them as $dT$ dimensional vectors, concatenating per-round quantities in $\R^d$. These are called \emph{signals}. The comparator statistics could be more succinctly represented using vector notations, e.g., the energy $E=\sum_{t=1}^T\norm{u_t}_2^2=\norm{u_{1:T}}_2^2$.

Our framework requires a \emph{dictionary} matrix $\calH\in\R^{dT\times N}$, possibly revealed online, whose columns are $N$ nonzero \emph{feature vectors}. We write $\calH$ in an equivalent block form as $[h_{t,n}]_{1\leq t\leq T,1\leq n\leq N}$, where each block $h_{t,n}\in\R^{d\times 1}$. The accompanied linear transform $u=\calH\hat u$ relates a signal $u\in\R^{dT}$ to a coefficient vector $\hat u\in\R^N$ (if it exists). Adopting the convention in signal processing, we will call $\R^{dT}$ the \emph{time domain}, and $\R^N$ the \emph{transform domain}. In general, symbols without hat refer to time domain quantities, while their transform domain counterparts are denoted with hat. 

Summarizing the above, we consider the following concise interaction protocol.\footnote{\label{footnote:olr_clarification}Despite also using features, the considered setting slightly differs from the standard notion of regression, as the loss function here does not necessarily have a minimizer. We use the term OLR for cleaner exposition.} Despite its parametric appearance, our main focus is on the \emph{nonparametric} regime, where the dictionary size $N$ scales with the amount of data $T$. 

\paragraph{Vector-output OLR with linear losses} In the $t$-th round, our algorithm observes a $d$-by-N feature matrix $\calH_t\defeq[h_{t,n}]_{1\leq n\leq N}$, linearly combines its columns into a prediction $x_t\in\R^{d}$, receives a loss gradient $g_t\in\R^d$, and then suffers the linear loss $\inner{g_t}{x_t}$. We assume that\footnote{The assumptions on the features are mild: an important special case is $\max_t\norm{h_{t,n}}_2=1$, as in the Haar wavelet dictionary. We impose these assumptions to apply unconstrained static algorithms verbatim.} $\norm{h_{t,n}}_2\leq 1$, $\sum_{t=1}^T\norm{h_{t,n}}^2_2\geq 1$ and $\norm{g_t}_2\leq G$. The performance metric is the unconstrained dynamic regret defined in Eq.(\ref{eq:regret}). 

\subsection{Main result}\label{subsection:main}

In a nutshell, our strategy is to apply an unconstrained static OLO algorithm on the transform domain, and in a coordinate-wise fashion. This is remarkably simple, but also contains a few twists. To make it concrete, let us start with a single feature vector. 

\paragraph{Size 1 dictionary} Consider an index $n\in[1:N]$, which is associated to the feature $h_{1:T,n}\defeq[h_{1,n},\ldots,h_{T,n}]\in\R^{dT}$. We suppress the index $n$ and write it as $h_{1:T}=[h_1,\ldots,h_T]$. For any comparator $u_{1:T}\in\spn(h_{1:T})$, there exists $\hat u\in\R$ such that $u_{1:T}=h_{1:T}\hat u$. The cumulative loss of $u_{1:T}$ can be rewritten as
\begin{equation*}
\inner{g_{1:T}}{u_{1:T}}=\inner{g_{1:T}}{h_{1:T}}\hat u=\sum_{t=1}^T\inner{g_t}{h_t}\hat u,
\end{equation*}
which is the loss of the coefficient $\hat u$ in a 1D OLO problem with surrogate loss gradients $\inner{g_t}{h_t}$. Essentially, to compete with a 1D comparator subspace $\spn(h_{1:T})$, it suffices to run a 1D static regret algorithm $\A$ that competes with $\hat u\in\R$. Such a procedure is presented as Algorithm~\ref{alg:size1}.

\begin{algorithm*}[ht]
\caption{Sparse coding with size 1 dictionary.\label{alg:size1}}
\begin{algorithmic}[1]
\REQUIRE An algorithm $\A$ for static 1D unconstrained OLO with $G$-Lipschitz losses; and a nonzero feature vector $h_{1:T}\subset\R^{dT}$.
\FOR{$t=1,2,\ldots,$}
\STATE \label{line:surrogate_lipschitz}Receive $h_t\in\R^d$. 
\STATE \label{line:x_hat}If $h_t$ is nonzero, query $\A$ for its output, and assign it to $\hat x_t\in\R$; otherwise, $\hat x_t$ is arbitrary.
\STATE Predict $x_t=\hat x_t h_t\in\R^d$, and receive the loss gradient $g_t\in\R^d$. 
\STATE If $h_t$ is nonzero, compute $\hat g_t=\inner{g_t}{h_t}$ and send it to $\A$ as its surrogate loss gradient.
\ENDFOR
\end{algorithmic}
\end{algorithm*}

It still remains to choose the static algorithm $\A$. Technically, all known static comparator adaptive algorithms can be applied. As an illustrative example, we adopt the \textsc{FreeGrad} algorithm \cite{mhammedi2020lipschitz}, which simultaneously achieves static comparator adaptivity and \emph{second order gradient adaptivity} \cite{duchi2011adaptive}.\footnote{A gradient adaptive regret bound refines our definition Eq.(\ref{eq:regret}) by depending on the \emph{actually encountered} environment $\calE$ as well. \textsc{FreeGrad} enjoys another favorable property called \emph{scale-freeness}: the predictions are invariant to any positive scaling of the loss gradients and the Lipschitz constant $G$.} Its pseudocode and static regret bound are presented in Appendix~\ref{subsection:freegrad} for completeness. 

In summary, our single feature learner (Algorithm~\ref{alg:size1}) has the following simplified guarantee, with the full gradient adaptive version deferred to Appendix~\ref{section:detail_sparse}.

\begin{lemma}\label{lemma:size1}
Let $\eps>0$ be an arbitrary hyperparameter for \textsc{FreeGrad} (Algorithm~\ref{alg:freegrad} in Appendix~\ref{subsection:freegrad}). Applying its 1D version as the static subroutine, for all $T\in\N_+$ and $u_{1:T}\in\spn(h_{1:T})$, Algorithm~\ref{alg:size1} guarantees
\begin{equation*}
\reg_T(u_{1:T})\leq \eps G+\norm{u_{1:T}}_2G\cdot\polylog\rpar{\max_t\norm{u_t}_2, T, \eps^{-1}}.
\end{equation*}
\end{lemma}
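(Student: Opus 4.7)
The plan is to reduce the $d$-dimensional dynamic regret against $u_{1:T}\in\spn(h_{1:T})$ to the 1D static regret that \textsc{FreeGrad} already handles. Linearizing by convexity, $\reg_T(u_{1:T})\leq\sum_{t=1}^T\inner{g_t}{x_t-u_t}$. Since $u_{1:T}\in\spn(h_{1:T})$, there is a unique scalar $\hat u\in\R$ with $u_t=\hat u h_t$ for every $t$, and the algorithm produces $x_t=\hat x_t h_t$. The per-round contribution then becomes $\inner{g_t}{h_t}(\hat x_t-\hat u)=\hat g_t(\hat x_t-\hat u)$ when $h_t\neq 0$, and is identically $0$ when $h_t=0$, since then both $x_t$ and $u_t$ vanish. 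Thus the linearized regret equals exactly the static 1D regret of $\A$ against $\hat u$ on the subsequence of nonzero rounds.

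Next I would check the preconditions of \textsc{FreeGrad}'s static guarantee and invoke it. Cauchy--Schwarz with $\norm{h_t}_2\leq 1$ gives $|\hat g_t|\leq G$, so the surrogate losses are $G$-Lipschitz as required. Applying the second-order static comparator-adaptive bound of \textsc{FreeGrad} (Appendix~\ref{subsection:freegrad}) to $\hat u\in\R$ yields a regret bound of the form $\eps G+|\hat u|\sqrt{\sum_{t=1}^T\hat g_t^2}\cdot\polylog\rpar{|\hat u|,T,\eps^{-1}}$ against $\hat u$.

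The last step is to translate this back to the form in the statement. The identity $\norm{u_{1:T}}_2^2=\hat u^2\sum_{t=1}^T\norm{h_t}_2^2$ together with the assumption $\sum_{t=1}^T\norm{h_t}_2^2\geq 1$ gives $|\hat u|\leq\norm{u_{1:T}}_2$, while Cauchy--Schwarz gives $\sum_{t=1}^T\hat g_t^2\leq G^2\sum_{t=1}^T\norm{h_t}_2^2$. Multiplying these two inequalities collapses $|\hat u|\sqrt{\sum_{t=1}^T\hat g_t^2}$ to $G\norm{u_{1:T}}_2$, exactly the leading factor in the target bound. For the polylog argument, $|\hat u|\leq\norm{u_{1:T}}_2\leq\sqrt{T}\max_t\norm{u_t}_2$ lets me replace $|\hat u|$ by polylogarithmic dependence on $\max_t\norm{u_t}_2$ and $T$, completing the translation.

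The only mild subtlety is matching the second-order factor $\sqrt{\sum_t\hat g_t^2}$ to $G\norm{u_{1:T}}_2$, so that the leading term depends on the sequence norm rather than the looser $|\hat u|\sqrt{T}$ one would get from a non-gradient-adaptive bound; the reduction itself and the handling of the $h_t=0$ rounds are routine, and all real technical effort is packaged inside \textsc{FreeGrad}'s static analysis, which is used as a black box.
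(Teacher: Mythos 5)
Your proposal is correct and follows essentially the same route as the paper's proof: substitute $u_t=\hat u h_t$ to turn the dynamic regret into the 1D static regret of \textsc{FreeGrad} against $\hat u$, verify $|\hat g_t|\le G$, and use the identity $|\hat u|\,\|h_{1:T}\|_2=\|u_{1:T}\|_2$ together with $\sum_t\hat g_t^2\le G^2\sum_t\|h_t\|_2^2$ so that the $\|h_{1:T}\|_2$ factors cancel and the leading term becomes $G\|u_{1:T}\|_2$. One phrasing nit: literally ``multiplying'' the inequality $|\hat u|\le\|u_{1:T}\|_2$ by the Cauchy--Schwarz bound leaves a stray $\|h_{1:T}\|_2\le\sqrt{T}$ factor; the cancellation you want comes from using the exact identity for $|\hat u|$ (which you do state), not the weakened inequality.
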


Note that the hyperparameter $\eps$ can be arbitrarily small. Further neglecting poly-logarithmic factors, the bound is essentially $\tilde O\rpar{G\norm{u_{1:T}}_2}$.

\paragraph{General dictionary} Given the above single feature learner, let us turn to the general setting with $N$ features. We run $N$ copies of Algorithm~\ref{alg:size1} in parallel, aggregate their predictions, and the regret bound sums Lemma~\ref{lemma:size1}, similar to \cite{cutkosky2019combining} in the static setting. The pseudocode is presented as Algorithm~\ref{alg:sizen}, and the regret bound is Theorem~\ref{thm:sizen}.

\begin{algorithm*}[ht]
\caption{Sparse coding with general dictionary.\label{alg:sizen}}
\begin{algorithmic}[1]
\REQUIRE A dictionary $\calH=[h_{t,n}]$, where $h_{t,n}\in\R^d$; and a hyperparameter $\eps>0$.
\STATE For all $n\in[1:N]$, initialize a copy of Algorithm~\ref{alg:size1} as $\A_n$. It runs the 1D version of Algorithm~\ref{alg:freegrad} as a subroutine, with hyperparameter $\eps/N$.
\FOR{$t=1,2,\ldots,$}
\STATE Receive $\calH_t=[h_{t,n}]_{1\leq n\leq N}$. For all $n$, send $h_{t,n}$ to $\A_n$, and query its prediction $w_{t,n}$.
\STATE Predict $x_t=\sum_{n=1}^Nw_{t,n}$.
\STATE Receive loss gradient $g_t$, and send it to $\A_1,\ldots,\A_N$ as loss gradients.
\ENDFOR
\end{algorithmic}
\end{algorithm*}

\begin{theorem}\label{thm:sizen}
Consider any collection of signals $z^{(n)}\in\spn(h_{1:T,n})$, $\forall n$. We define its reconstruction error (for the comparator $u_{1:T}$) as $z^{(0)}=u_{1:T}-\sum_{n=1}^Nz^{(n)}\in\R^{dT}$. Then, for all $T\in\N_+$ and $u_{1:T}\in\R^{dT}$, Algorithm~\ref{alg:sizen} guarantees
\begin{equation*}
\reg_T(u_{1:T})\leq \eps G+G\rpar{\sum_{n=1}^N\norm{z^{(n)}}_2}\cdot\polylog\rpar{\max_{t,n}\norm{z^{(n)}_t}_2, T, N, \eps^{-1}}+G\sum_{t=1}^T\norm{z^{(0)}_t}_2,
\end{equation*}
where $z^{(n)}_t\in\R^d$ is the $t$-th round component of the sequence $z^{(n)}\in\R^{dT}$.
\end{theorem}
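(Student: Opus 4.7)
\textbf{Proof plan for Theorem~\ref{thm:sizen}.} The plan is to reduce the $N$-feature regret to a sum of $N$ single-feature regrets via a telescoping identity, then apply Lemma~\ref{lemma:size1} to each copy $\A_n$ and handle the residual term via Cauchy--Schwarz.

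First I would linearize as usual: since each $l_t$ is $G$-Lipschitz and $g_t\in\partial l_t(x_t)$,
\begin{equation*}
\reg_T(u_{1:T})\leq \sum_{t=1}^T\inner{g_t}{x_t-u_t}.
\end{equation*}
Next, using the decomposition $u_{1:T}=\sum_{n=1}^N z^{(n)}+z^{(0)}$ componentwise ($u_t=\sum_{n=0}^N z^{(n)}_t$) together with the algorithmic identity $x_t=\sum_{n=1}^N w_{t,n}$ from Line~4 of Algorithm~\ref{alg:sizen}, I split the inner sum as
\begin{equation*}
\sum_{t=1}^T\inner{g_t}{x_t-u_t}=\sum_{n=1}^N\underbrace{\sum_{t=1}^T\inner{g_t}{w_{t,n}-z^{(n)}_t}}_{(\mathrm{I}_n)}+\underbrace{\sum_{t=1}^T\inner{g_t}{-z^{(0)}_t}}_{(\mathrm{II})}.
\end{equation*}

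The key observation is that $(\mathrm{I}_n)$ is precisely the dynamic regret of the $n$-th single-feature learner $\A_n$ against a comparator $z^{(n)}\in\spn(h_{1:T,n})$. So Lemma~\ref{lemma:size1}, invoked with hyperparameter $\eps/N$, yields
\begin{equation*}
(\mathrm{I}_n)\leq \frac{\eps G}{N}+\norm{z^{(n)}}_2\, G\cdot\polylog\rpar{\max_t\norm{z^{(n)}_t}_2,T,N\eps^{-1}}.
\end{equation*}
Summing over $n\in[1:N]$ collects the $\eps G$ constant and the middle term of the claimed bound, after absorbing the $\max_t$ into $\max_{t,n}$ and the $N$ into the poly-logarithmic factor. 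For $(\mathrm{II})$, Cauchy--Schwarz combined with $\norm{g_t}_2\leq G$ gives $(\mathrm{II})\leq G\sum_{t=1}^T\norm{z^{(0)}_t}_2$, which is exactly the third term.

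I don't anticipate a serious obstacle: the main care points are purely bookkeeping. I need to (a) verify that $\A_n$ is legitimately a $G$-Lipschitz OLO instance at the outer level (so that Lemma~\ref{lemma:size1} applies with the same $G$), which holds because $\A_n$'s internal surrogate gradient $\inner{g_t}{h_{t,n}}$ has magnitude at most $G\norm{h_{t,n}}_2\leq G$ thanks to the assumption $\norm{h_{t,n}}_2\leq 1$; (b) track the $\eps/N$ hyperparameter so the per-copy constants sum to $\eps G$; and (c) note that the residual $z^{(0)}$ is bounded ``pathwise'' in $\ell_1$ across time (the $\sum_t\norm{z^{(0)}_t}_2$ form) rather than in $\ell_2$, which is the natural scale since no algorithmic control is being used on that component.
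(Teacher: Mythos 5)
Your proposal is correct and follows essentially the same route as the paper's proof: linearize, split $x_t-u_t$ into the per-feature differences $w_{t,n}-z^{(n)}_t$ plus the residual $-z^{(0)}_t$, apply the single-feature lemma with hyperparameter $\eps/N$ to each copy, and bound the residual term by Cauchy--Schwarz. The bookkeeping points you flag (surrogate Lipschitz constant $|\inner{g_t}{h_{t,n}}|\leq G$ and the $\eps/N$ accounting) are exactly the ones the paper handles.
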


To interpret this very general result, let us consider a few concrete settings.
\begin{itemize}
\item \emph{Static regret.} If the size $N=d$ and the dictionary $\calH_t=I_d$, then for any static comparator ($u_{t}=u\in\R^d$), we can let $z^{(n)}$ be the projection of the sequence $u_{1:T}$ onto $\spn(h_{1:T,n})$. This leaves zero reconstruction error, i.e., $u_{1:T}=\sum_{n=1}^Nz^{(n)}$. Theorem~\ref{thm:sizen} reduces to
\begin{equation}\label{eq:simple_static}
\reg_T(u_{1:T})\leq \eps G+\norm{u}_1G\sqrt{T}\cdot\polylog\rpar{\norm{u}_\infty, T, d, \eps^{-1}},
\end{equation}
which recovers a standard $\tilde O(\norm{u}_1\sqrt{T})$ bound in coordinate-wise unconstrained static OLO \cite[Section~9.3]{orabona2019modern}. The gradient adaptive version yields a better $\tilde O(\norm{u}_2\sqrt{T})$ bound, c.f., Appendix~\ref{subsection:detail_main}.

\item \emph{Orthogonal dictionary.} Entering the dynamic realm, we now consider the situation where feature vectors are orthogonal (standard in signal processing), and the comparator $u_{1:T}\in\spn(\calH)$. Same as the static setting, we are free to define $z^{(n)}$ as the projection
\begin{equation*}
z^{(n)}=\inner{h_{1:T,n}}{u_{1:T}}\frac{h_{1:T,n}}{\norm{h_{1:T,n}}_2^2}.
\end{equation*}
Due to orthogonality, the projection preserves the energy of the time domain signal, i.e, $E=\norm{u_{1:T}}^2_2=\sum_{n=1}^N\norm{z^{(n)}}_2^2$. By further defining $\sparsity_\calH\defeq (\sum_{n=1}^N\norm{z^{(n)}}_2)^2/\sum_{n=1}^N\norm{z^{(n)}}_2^2$ (arbitrary when the denominator is zero), Theorem~\ref{thm:sizen} reduces to
\begin{equation}\label{eq:sparsity_adaptive}
\reg_T(u_{1:T})\leq \tilde O\rpar{\sqrt{E\cdot\sparsity_\calH}}.
\end{equation}
Note that as the squared $L_1$/$L_2$ ratio, $\sparsity_\calH$ is a classical sparsity measure \cite{hurley2009comparing} of the decomposed signals $\{z^{(n)}\}_{1\leq n\leq N}$: if there are only $N_0\leq N$ nonzero vectors within this collection, then $\sparsity_\calH\leq N_0$ due to the Cauchy-Schwarz inequality. Therefore, the generic sparsity adaptive bound Eq.(\ref{eq:sparsity_adaptive}) depends on ($i$) the energy of the comparator $u_{1:T}$; and ($ii$) the sparsity of its representation, without knowing either condition beforehand. The easier the comparator is (low energy, and sparse on $\calH$), the lower the bound becomes.

\item \emph{Overparameterization.} So far we have only considered $N\leq dT$, where feature vectors can be orthogonal. However, a key idea in signal processing is to use redundant features ($N\gg dT$) to obtain sparser representations. Theorem~\ref{thm:sizen} implies a \emph{feature selection} property in this context: since it applies to \emph{any} decomposition of $u_{1:T}$, as long as $u_{1:T}$ can be represented by a subset $\tilde\calH$ of orthogonal features within $\calH$, the regret bound adapts to $\sparsity_{\tilde \calH}$, the sparsity of $u_{1:T}$ measured on $\tilde\calH$. That is, we are theoretically justified to assemble smaller dictionaries into a larger one -- the regret bound adapts to the quality of the optimal (comparator-dependent) sub-dictionary $\tilde\calH$.
\end{itemize}

How to choose the dictionary $\calH$? In practice, we may use prior knowledge on the dynamics of the environment. For example, if the environment is periodic, such as the weather or the traffic, then a good choice could be the Fourier dictionary. Similarly, wavelet dictionaries are useful for piecewise regular environments. Another possibility is to learn the dictionary from offline datasets, which is also called \emph{representation learning}. Overall, such prior knowledge is not required to be \emph{correct} -- our algorithm can take any dictionary as input, and the regret bound naturally adapts to its quality. The established connection between adaptivity and signal structures is a key benefit of our framework.

\begin{figure}[ht]
\begin{minipage}[c]{0.5\textwidth}
\centering
    \includegraphics[width=\textwidth]{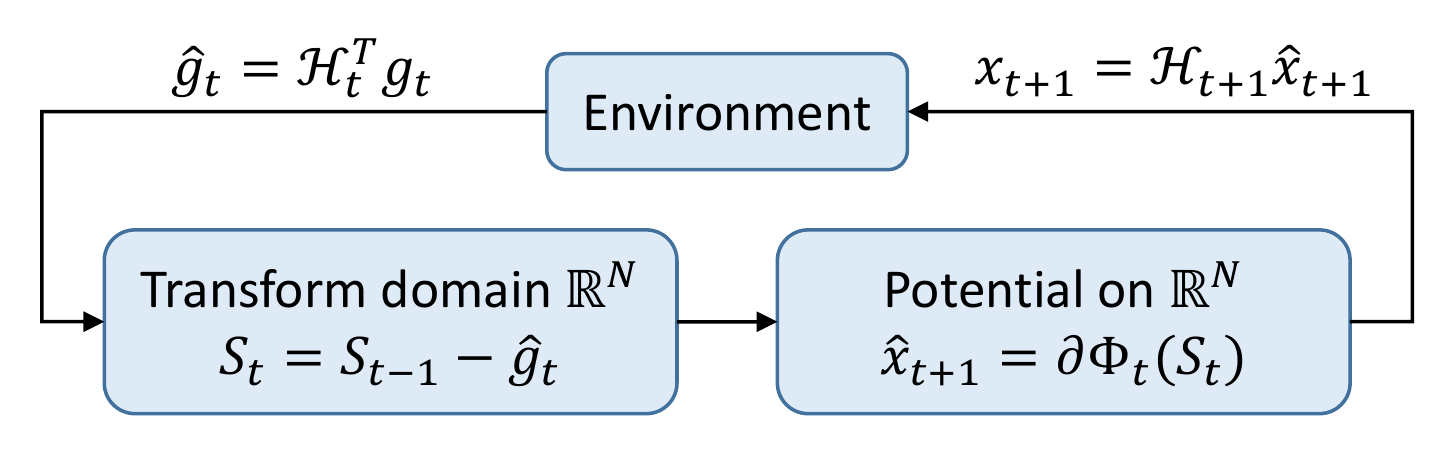}
    \caption{Update from the dual space.}
    \label{fig:update}
\end{minipage}\hfill
\begin{minipage}[c]{0.48\textwidth}
\centering
\includegraphics[width=0.75\textwidth]{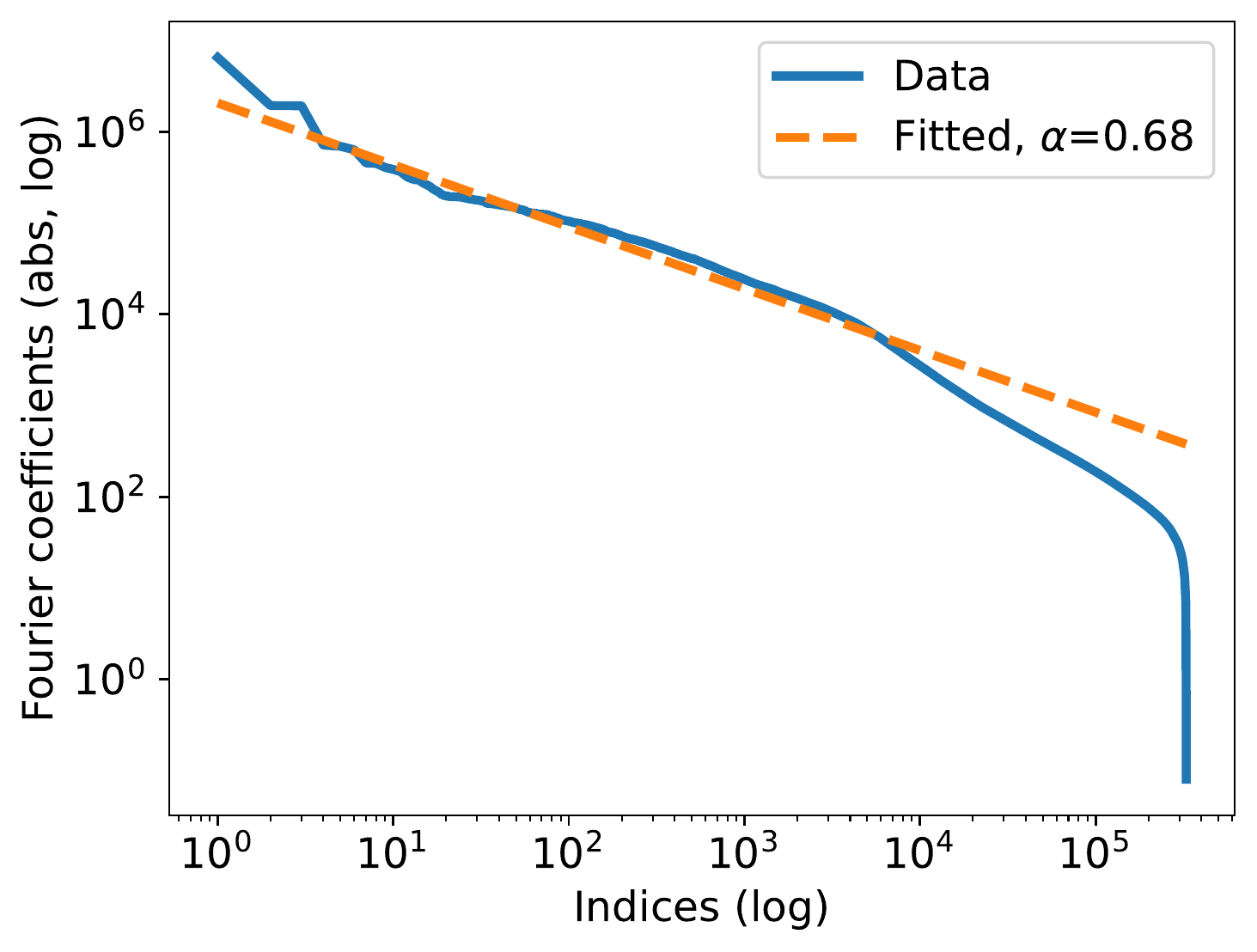}
    \caption{Verifying the power law.}
    \label{fig:PL_temperature}
\end{minipage}
\end{figure}

\paragraph{A view from the dual space} Besides the primal space analysis so far, our algorithm has an equivalent interpretation on the dual space, which leads to a possibly interesting intuition. Typically, in the $t$-th round, the dual space maintains a summary $S_{t-1}$ of the past observations $g_{1:t-1}$ (called a \emph{sufficient statistic}), and then passes it through a \emph{potential function} $\Phi_t$ to generate predictions \cite{cesa2006prediction,foster2018online,orabona2019modern}. While most existing algorithms store the sum of past gradients $\sum_{i=1}^{t-1}g_i$ to handle the static regret, our algorithm stores an $N$-dimensional transform of the entire sequence $g_{1:t-1}$, illustrated in Figure~\ref{fig:update}. In this way, the \emph{dynamics} of the environment are ``memorized''. 

\paragraph{Power law} For a more specific discussion, let us consider an empirically justified setup. In signal processing, the study of sparsity has been partially motivated by the \emph{power law} \cite{price_2021}: under the standard Fourier or wavelet transforms, the $n$-th largest transform domain coefficient of many real signals can have magnitude roughly proportional to $n^{-\alpha}$, where $\alpha\in(0.5,1)$. We also observe this phenomenon from a weather dataset, with details presented in Appendix~\ref{subsection:power_law}. Figure~\ref{fig:PL_temperature} plots the sorted Fourier coefficients of an actual temperature sequence, on a log-log scale. A fitted dashed line is shown in orange, with (negative) slope $\alpha=0.68$.

When the power law holds, our bound Eq.(\ref{eq:sparsity_adaptive}) has a more interpretable form. Assuming $d=1$ and $N=T$,
\begin{equation*}
\sparsity_\calH=\frac{(\sum_{n=1}^{T}n^{-\alpha})^2}{\sum_{n=1}^{T}n^{-2\alpha}}=O\rpar{T^{2-2\alpha}}.
\end{equation*}
In a typical setting of $E=\Theta(T)$, we obtain a sublinear $\tilde O(T^{1.5-\alpha})$ dynamic regret bound. 

\section{The Haar OLR algorithm}

This section presents the quantitative contributions of this paper: despite its generality, our sparse coding framework can improve existing unconstrained dynamic regret bounds \cite{jacobsen2022parameter}. Our key workhorse is the ability of wavelet bases to sparsely represent smooth signals. Section~\ref{subsection:haar_definition} introduces the necessary background, while concrete bounds and proof sketches are presented in Section~\ref{subsection:haar_main}.

\subsection{Haar wavelet}\label{subsection:haar_definition}

Wavelet is a fundamental topic in signal processing, with long lasting impact throughout modern data science. Roughly speaking, the motivation is that a signal can simultaneously exhibit nonstationarity at different time scales, such as slow drifts and fast jumps, therefore to faithfully represent it, we should apply feature vectors with different resolutions. We will only use the simplest Haar wavelets, which is already sufficient. Readers are referred to \cite{mallat2008wavelet,johnstone2019gaussian} for a thorough introduction to this topic. 

Specifically, we start from the 1D setting ($d=1$) with a dyadic horizon ($T=2^m$, for some $m\in\N_+$). The Haar wavelet dictionary consists of $T$ (unnormalized) orthogonal feature vectors, indexed by a \emph{scale} parameter $j\in[1:\log_2 T]$ and a \emph{location} parameter $l\in[1:2^{-j}T]$. Given a $(j,l)$ pair, define a feature $h^{(j,l)}=[h^{(j,l)}_1,\ldots,h^{(j,l)}_T]\in\R^T$ entry-wise as
\begin{equation*}
h^{(j,l)}_t=\begin{cases}
1,& t\in[2^j(l-1)+1: 2^j(l-1)+2^{j-1}];\\
-1,& t\in[2^j(l-1)+2^{j-1}+1: 2^jl];\\
0,&\textrm{else}.
\end{cases}
\end{equation*}
It means that $h^{(j,l)}$ is only nonzero on a length-$2^j$ interval, while changing its sign once in the middle of this interval. Collecting all the $(j,l)$ pairs yield $T-1$ features; then, we incorporate an extra all-one feature $h^*=[1,\ldots,1]$ to complete this size $T$ dictionary. 

The defined features can be assembled into the columns of a matrix $\haar_m$. To help with the intuition, $\haar_2$ with $T=4$ is presented in Eq.(\ref{eq:haar_example}). The columns from the left to the right are $h^*$, $h^{(2,1)}$, $h^{(1,1)}$ and $h^{(1,2)}$. Observe that they are orthogonal, and the norm assumption from Section~\ref{subsection:coding_setting} is satisfied. Therefore, our sparsity adaptive regret bound Eq.(\ref{eq:sparsity_adaptive}) is applicable. 
\begin{equation}\label{eq:haar_example}
\haar_2=\begin{bmatrix}
1 & 1 & 1 & 0 \\
1 & 1 & -1 & 0 \\
1 & -1 & 0 & 1 \\
1 & -1 & 0 & -1
\end{bmatrix}.
\end{equation}

Given this 1D Haar wavelet dictionary, we apply a minor variant of Algorithm~\ref{alg:sizen} to prevent the dimension $d$ from appearing in the regret bound. When $d=1$, the algorithm is exactly Algorithm~\ref{alg:sizen}, where intuitions are most clearly demonstrated. Then, the doubling trick \citep[Section~2.3.1]{shalev2011online} is adopted to relax the knowledge of $T$. The pseudocode is presented as Algorithm~\ref{alg:haar} in Appendix~\ref{section:detail_haar}. 

\paragraph{Computation} An appealing property is that most Haar wavelet features are supported on short local intervals. Despite $N=T$, there are only $\log_2 T$ active features in each round. Therefore, the runtime of our algorithm is $O(d\log T)$ per round, matching that of all the baselines we compare to. This local property holds for compactly supported wavelets, most notably the \emph{Daubechies family} \cite{daubechies1988orthonormal,cohen1993wavelets}. The latter can represent more general, piecewise polynomial signals. 

\subsection{Main result}\label{subsection:haar_main}

For almost static environments, our Haar OLR algorithm guarantees the following bounds, by relating comparator smoothness to the sparsity of its Haar wavelet representation. Different from \cite{jacobsen2022parameter} which only contains $P$-dependent bounds, we also provide a $K$-switching regret bound, in order to avoid using $P=O(KM)$.\footnote{Recall that one of our motivations is to remove $M$ from the existing bounds.} Interestingly, the proofs of the following two bounds are quite different: the first uses \emph{exact sparsity}, while the second uses \emph{approximate sparsity}. 

\begin{restatable}[Switching regret]{theorem}{switching}\label{thm:switching}
For all $T\in\N_+$ and $u_{1:T}\in\R^{dT}$, Algorithm~\ref{alg:haar} guarantees
\begin{equation}\label{eq:final_switching}
\reg_T(u_{1:T})\leq \tilde O\rpar{\norm{\bar u}_2\sqrt{T}+\sqrt{K\bar E}}.
\end{equation}
\end{restatable}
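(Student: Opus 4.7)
The plan is to apply Theorem~\ref{thm:sizen} with a decomposition of $u_{1:T}$ that separates its constant component (captured by the all-one feature $h^*$) from its zero-mean component (captured by the Haar wavelet features $\{h^{(j,l)}\}$), then bound the wavelet part using the sparsity of piecewise constant signals under the Haar basis. Throughout I assume the $1$D case for clarity; the vector-valued case follows by applying the construction coordinate-wise, and the unknown horizon is absorbed by the doubling trick in Algorithm~\ref{alg:haar}, each contributing only poly-logarithmic factors.

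First I would set $v_t = u_t - \bar u$ and decompose $u_{1:T} = z^{(*)} + \sum_{j,l} z^{(j,l)}$, where $z^{(*)}$ is the projection onto $\spn(h^*)$ and each $z^{(j,l)}$ is the projection of $v_{1:T}$ onto $\spn(h^{(j,l)})$. Since the Haar family is orthogonal and spans $\R^T$, the reconstruction error vanishes and the coefficient of $h^*$ equals $\bar u$, giving $\|z^{(*)}\|_2 = \sqrt{T}\|\bar u\|_2$. Moreover, by orthogonality the wavelet components satisfy $\sum_{j,l}\|z^{(j,l)}\|_2^2 = \|v_{1:T}\|_2^2 = \bar E$.

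The main combinatorial step, which I view as the key ingredient, is the following \emph{exact sparsity} claim: if $u_{1:T}$ has at most $K$ switches, then at most $K \log_2 T$ of the coefficients $\langle h^{(j,l)}, v_{1:T}\rangle$ are nonzero. The argument is that each Haar wavelet integrates to zero, so its inner product with a signal that is constant on its support vanishes; hence only wavelets whose support strictly straddles a switch point can be nonzero. At each scale $j$, the supports $\{[2^j(l-1)+1,\, 2^j l]\}_l$ partition $[1:T]$, so each switch point lies in exactly one support per scale, contributing at most $\log_2 T$ active wavelets per switch. Combined with Cauchy–Schwarz over the active coefficients,
\begin{equation*}
\sum_{j,l}\|z^{(j,l)}\|_2 \;\leq\; \sqrt{K\log_2 T}\cdot\sqrt{\sum_{j,l}\|z^{(j,l)}\|_2^2} \;=\; \sqrt{K \bar E \log_2 T}.
\end{equation*}

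Finally I would plug these estimates into Theorem~\ref{thm:sizen} (with $z^{(0)} = 0$): the $z^{(*)}$ term contributes $\tilde O(\|\bar u\|_2\sqrt{T})$, and the sum over wavelet components contributes $\tilde O(\sqrt{K\bar E\log T})$, yielding the stated bound $\tilde O(\|\bar u\|_2\sqrt{T} + \sqrt{K\bar E})$. The main obstacle is the sparsity claim, but it reduces to the elementary observation above once one keeps track of the nested dyadic supports; minor care is needed to ensure the doubling trick's restart does not inflate $K$ by more than a logarithmic factor, and to verify that the hyperparameter $\eps$ in Theorem~\ref{thm:sizen} can be chosen (e.g.\ polynomially small in $T$) so that the $\eps G$ term is dominated.
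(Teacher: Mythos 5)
Your fixed-horizon argument is exactly the paper's: the decomposition into the all-one component (giving $\norm{\hat u^*}_2=\norm{\bar u}_2\sqrt{T}$) plus the zero-mean wavelet components, the observation that a Haar feature whose support contains no switch point has zero coefficient so that at most $K\log_2 T$ coefficients survive, and Cauchy--Schwarz with energy preservation to obtain $\sqrt{K\bar E\log_2 T}$ --- this is precisely Lemma~\ref{lemma:fix_switching}. The one place where your proposal is too casual is the anytime conversion. The difficulty with the doubling trick here is not that $K$ might be inflated by the restarts (it cannot be: the blocks' switch counts sum to at most $K$); it is that each restarted instance of Algorithm~\ref{alg:haar_fix_T} competes against statistics centered at its \emph{own block average} $\bar u_m$ rather than the global $\bar u$, so one must control $\sum_m\norm{\bar u_m}_2\sqrt{2^m}$ and $\sum_m\sqrt{K_m\bar E_m}$ in terms of $\norm{\bar u}_2\sqrt{T}$ and $\sqrt{K\bar E}$. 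The paper does this by (i) bounding the offset $\sum_m\norm{\bar u_m-\bar u}_2\sqrt{2^m}\leq\tilde O(\sqrt{\bar E})$ via Cauchy--Schwarz and the fact that local averaging only decreases variability (Lemma~\ref{lemma:average}); (ii) using $\sum_m\bar E_m\leq\bar E$, which holds because $\bar u_m$ minimizes the block's sum of squared deviations; and (iii) absorbing the resulting $\sqrt{\bar E}$ term into $\sqrt{K\bar E}$, which is valid only because $\bar E=0$ whenever $K=0$; a padding argument then handles non-dyadic $T$. None of this is fatal to your plan, but ``only poly-logarithmic factors'' undersells it: without step (i) the per-block sum could exceed $\norm{\bar u}_2\sqrt{T}$ by an additive term of order $\sqrt{\bar E\log T}$ that must be explicitly accounted for. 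Your closing remark about choosing $\eps$ polynomially small is unnecessary --- the paper simply sets $\eps=1$, since $\eps G$ is a constant already dominated by the bound.
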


\begin{restatable}[Path length bound]{theorem}{path}\label{thm:path}
For all $T\in\N_+$ and $u_{1:T}\in\R^{dT}$, Algorithm~\ref{alg:haar} guarantees
\begin{equation}\label{eq:final_path}
\reg_T(u_{1:T})\leq \tilde O\rpar{\norm{\bar u}_2\sqrt{T}+\sqrt{P\bar S}}.
\end{equation}
\end{restatable}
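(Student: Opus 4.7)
The plan is to instantiate the generic sparse coding bound (Theorem~\ref{thm:sizen}) on the Haar wavelet dictionary using the full orthogonal projection of the comparator, and then convert the smoothness parameters $P$ and $\bar S$ into an $L^1$-type estimate on the wavelet coefficients via a per-scale analysis that is reminiscent of classical wavelet approximation theory.

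First, I would split the comparator as $u_{1:T}=\bar u\cdot\mathbf{1}_T+v_{1:T}$ with $v_t=u_t-\bar u$, where $\mathbf{1}_T$ is the time-domain all-ones vector. Because the all-ones feature $h^*$ is orthogonal to every Haar wavelet, the orthogonal projections give $z^{(h^*)}=\bar u\cdot\mathbf{1}_T$ and $z^{(j,l)}=\hat v^{(j,l)}h^{(j,l)}$ with $\hat v^{(j,l)}=\frac{1}{2^j}\langle h^{(j,l)},v_{1:T}\rangle\in\R^d$, and the reconstruction residual $z^{(0)}$ vanishes. Theorem~\ref{thm:sizen} then upper bounds the regret by $\tilde O\bigl(\|z^{(h^*)}\|_2+\sum_{j,l}\|z^{(j,l)}\|_2\bigr)$. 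The first term equals $\|\bar u\|_2\sqrt{T}$, matching the first summand of the claim.

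For the wavelet part, at each scale $j$ I would derive two bounds on $\sum_l\|z^{(j,l)}\|_2=2^{j/2}\sum_l\|\hat v^{(j,l)}\|_2$. A direct triangle inequality from $\hat v^{(j,l)}=\tfrac{1}{2^j}(\sum_{t\in L}v_t-\sum_{t\in R}v_t)$ gives $\|\hat v^{(j,l)}\|_2\le 2^{-j}\sum_{t\in I_{j,l}}\|v_t\|_2$; since the supports $I_{j,l}$ partition $[1:T]$, summing over $l$ yields $\sum_l\|z^{(j,l)}\|_2\le 2^{-j/2}\bar S$. For the second bound, I would pair each index $t_L+i-1$ in the left half of $I_{j,l}$ with its shift $t_L+2^{j-1}+i-1$ in the right half, rewrite each cross-interval difference as a telescoping sum over edges $v_{s+1}-v_s$ of the path, and observe that each such edge is reused at most $2^{j-1}$ times across $i$. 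This yields $\|\hat v^{(j,l)}\|_2\le\tfrac{1}{2}P_{j,l}$ where $P_{j,l}$ is the path length of $v$ inside $I_{j,l}$, and since the disjointness of the supports gives $\sum_l P_{j,l}\le P$, one gets $\sum_l\|z^{(j,l)}\|_2\le 2^{j/2-1}P$. Taking the minimum of the two estimates at each $j$ and summing the two geometric series (each dominated at the balancing scale $2^{j^*}\asymp\bar S/P$) gives $\sum_{j,l}\|z^{(j,l)}\|_2=O(\sqrt{P\bar S})$.

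Combining the two contributions and choosing $\eps$ negligibly small in Theorem~\ref{thm:sizen} yields the claim. The main technical obstacle I anticipate is the combinatorial accounting in the telescoping step: one must argue that each edge $\|v_{s+1}-v_s\|_2$ of the path is counted at most $2^{j-1}$ times under the left-to-right pairing, which produces the critical $2^{j/2}$ factor rather than a larger power of $2^j$; any looser counting would degrade $\sqrt{P\bar S}$ to something like $\sqrt{PT\bar S}$. Minor extras are to handle the vector case $d>1$ using the variant of Algorithm~\ref{alg:sizen} pointed out after Eq.~(\ref{eq:haar_example}) and to absorb the doubling trick over unknown $T$ into the logarithmic factors, neither of which should affect the final scaling.
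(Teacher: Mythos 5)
Your core argument is correct, and the key step is a genuinely different route from the paper's. The paper first proves the exact identity $\norm{\hat u^{(j,l)}}_2=2^{-1/2}\sqrt{P^{(j,l)}\bar S^{(j,l)}}$ for the \emph{reconstructed detail sequences} (Lemma~\ref{lemma:single_j_l}), applies Cauchy--Schwarz per scale, and then shows via a local-averaging lemma (Lemma~\ref{lemma:average}) that each detail's path length and variability are dominated by $P$ and $\bar S$ of the original comparator (Lemmas~\ref{lemma:p} and \ref{lemma:s}), giving $\sum_{j,l}\norm{\hat u^{(j,l)}}_2=O(\sqrt{P\bar S}\log_2 T)$. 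You instead bound each coefficient directly by $\min\bigl(2^{-j}\sum_{t\in I^{(j,l)}}\norm{v_t}_2,\ \tfrac12 P^{(j,l)}\bigr)$ --- the second via the left/right pairing and telescoping, whose $2^{j-1}$ edge-multiplicity count is correct --- and then balance the two geometric series in $j$ at the crossover scale $2^{j^*}\asymp\bar S/P$. This is closer to textbook wavelet approximation theory, and it even saves the $\log_2 T$ factor on the wavelet part (immaterial under $\tilde O$). Both routes yield the same fixed-dyadic-horizon bound.

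The one genuine gap is your last sentence: the extension to arbitrary unknown $T$ is \emph{not} just a matter of absorbing the doubling trick into logarithmic factors. Restarting on dyadic blocks gives $\sum_{m}\bigl(\norm{\bar u_m}_2\sqrt{2^m}+\sqrt{P_m\bar S_m}\bigr)$, where $\bar u_m$, $P_m$, $\bar S_m$ are \emph{block-local} statistics, with $\bar S_m$ centered at the block mean $\bar u_m$ rather than at $\bar u$. Relating these to the global quantities requires real work: $\sum_m\norm{\bar u_m}_2\sqrt{2^m}$ exceeds $\norm{\bar u}_2 O(\sqrt{T})$ by a term controlled by $\sum_m 2^m\norm{\bar u_m-\bar u}_2^2$, which the paper bounds by $\bar E$ using the local-averaging lemma and then absorbs into $\sqrt{P\bar S}$ only after proving $\norm{u_t-\bar u}_2\leq P$ for all $t$ (Lemma~\ref{lemma:P_upper_bound}), so that $\bar E\leq P\bar S$; similarly $\sum_m\bar S_m\leq \bar S+\sum_m 2^m\norm{\bar u_m-\bar u}_2\leq 2\bar S$ again needs that lemma. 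None of these steps is hard, but they are not logarithmic corrections, and without them your anytime claim is unproven. You would need to supply this block-to-global accounting (or an equivalent anytime argument) to complete the proof.
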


It can be verified (Appendix~\ref{section:list}) that for \emph{all} comparators $u_{1:T}$, our bounds are at least as good as prior works (Table~\ref{table:comparison}). The optimality is a more subtle issue, as one should compare \emph{upper bound functions} (of $u_{1:T}$) to \emph{lower bound functions} in a global manner, rather than comparing the exponents of $T$ in minimax online learning. 

Nonetheless, we present two examples of $u_{1:T}$, where the improvement can be clearly seen through better exponents of $T$. To give it a concrete background, suppose we want to sequentially predict a 1D time series $z_1,\ldots,z_T\in\R$. This could be formulated as a OCO problem where the decision $x_t$ there is our prediction of $z_t$, and the loss function is the absolute loss $l_t(x)=\abs{x-z_t}$. A natural choice of the comparator is the ground truth sequence $z_{1:T}$, and due to Eq.(\ref{eq:oracle}), any upper bound on $\reg_T(z_{1:T})$ also upper-bounds the total forecasting loss of our algorithm. Below we present specific 1D comparator sequences $u_{1:T}$ to demonstrate the strength of our results, which could be intuitively thought as the true time series $z_{1:T}$ in this more restricted discussion.

\begin{example}[Tracking outliers]\label{example:outlier}
Consider the situation where $u_{1:T}$ has a locally outlying scale: we set all the instantaneous comparators $u_{t}$ to $1$, except $k\leq \sqrt{T}$ consecutive members which are set to $\sqrt{T}$. Crucially, $\abs{\bar u}=O(1)$ and $\bar S=O(k\sqrt{T})$, while $M=\sqrt{T}$ and $S=\Theta(T)$. With details deferred to Appendix~\ref{subsection:example}, both our bounds, i.e., Eq.(\ref{eq:final_switching}) and (\ref{eq:final_path}), are $\tilde O(\sqrt{kT})$, while the fine baseline Eq.(\ref{eq:known_second}) is $\tilde O(T^{3/4})$, and the coarse baseline Eq.(\ref{eq:known_first}) is $\tilde O(T)$. The largest gain is observed when $k$ is a constant, i.e., the comparator is subject to a short but large perturbation. 
\end{example}

\begin{example}[Persistent oscillation]\label{example:oscillation}
Consider the situation where $\bar u=1$, and all the instantaneous comparators oscillate around $\bar u$: $u_{t}=\bar u+\alpha_t/\sqrt{T}$. $\alpha_t=1$ or $-1$, and it only switches sign for $k$ times. Notice that $\bar S=\sqrt{T}$, while $S=\Theta(T)$. All the baselines are $\tilde O\rpar{\sqrt{T}+k^{1/2}T^{1/4}}$, while both our bounds are $\tilde O(\sqrt{T})$. The largest gain is observed when $k=T-1$, i.e., the comparator switches in every round. 
\end{example}

In summary, we show that existing bounds are suboptimal, while the optimality of our results remains to be studied. It highlights the importance of \emph{comparator energy} and \emph{variability} in the pursuit of better algorithms, which have not received enough attention in the literature. Next, we briefly sketch the proofs of these bounds. 

\paragraph{Proof sketch} The switching regret bound mostly follows from a very simple observation: if a sequence is constant throughout the support of a Haar wavelet feature, then its transform domain coefficient for this feature is zero. As features on the same scale $j$ do not overlap, a $K$-switching comparator can only induce $K$ nonzero coefficients on the $j$-th scale. There are at most $K\log_2 T$ nonzero coefficients in total, therefore $\sparsity_\calH=\tilde O(K)$. The bound Eq.(\ref{eq:final_switching}) is obtained by applying this argument after taking out the average of $u_{1:T}$.

As for the path length bound, the idea is to consider the \emph{reconstructed} sequences, using transform domain coefficients on a single scale $j$. These are usually called \emph{detail sequences} in the wavelet literature \cite{mallat2008wavelet}. Each detail sequence has a relatively simple structure, whose path length and variability can be associated to the magnitude of its transform domain coefficients. Moreover, as these detail sequences are certain ``locally averaged'' and ``globally centered'' versions of the actual comparator $u_{1:T}$, their regularities are dominated by the regularity of $u_{1:T}$ itself. In combination, this yields a relation between $P\bar S$ and the coefficients' $L_1$ norm, i.e., $\sum_{n=1}^N\norm{z^{(n)}}_2$ in Theorem~\ref{thm:sizen}, from which the bound is established. 

Compared to the analysis of \cite{jacobsen2022parameter}, the key advantage of our analysis is the decoupling of function approximation from the generic sparsity-based regret bound. The former is algorithm-independent, while the latter can be conveniently combined with advances in static online learning. With the help of approximation theory (e.g., Fourier features, wavelets, and possibly deep learning further down the line), intuitions are arguably clearer in this way, and solutions could be more precise (compared to analyses that ``mix'' function approximation with regret minimization). 

\paragraph{MRA in online learning} On a broader scope, wavelets embody the idea of \emph{Multi-Resolution Analysis} (MRA), which is reminiscent of the classical \emph{geometric covering} (GC) construction in adaptive online learning \cite{daniely2015strongly}. Such a construction starts from a class of \emph{GC time intervals}, which are equivalent to the support of Haar wavelet features. On each GC interval, a static online learning algorithm is defined (corresponding to using an all-one feature, c.f., Section~\ref{subsection:main}); and then, the outputs of these ``local'' algorithms are aggregated by a \emph{sleeping expert} algorithm on top \cite{luo2015achieving,jun2017improved}. Algorithmically, our innovation is introducing sign changes in the features, accompanied by a different, additive way to aggregate base algorithms. For tackling nonstationarity, both approaches have their own strengths: the GC construction can produce \emph{strongly adaptive} guarantees on subintervals of the time horizon, while our algorithm does not need a bounded domain. Their possible connections are intriguing. 

\paragraph{Lipschitz vs strongly convex losses} Finally, we comment on the choice of loss functions in unconstrained dynamic OCO. Besides the Lipschitz assumption we impose, a fruitful line of works by Baby and Wang \cite{baby2019online,baby2020adaptive,baby2021optimal,baby2021optimalb,baby2022optimal} considered an alternative setting with strong convexity, motivated by the prevalence of the square loss in statistics. Their focus is primarily on bounded domains, as \cite{baby2019online} showed that evaluated under the square loss, a lower bound for the unconstrained dynamic regret is $\Omega(P^2)$. A sublinear regret bound here requires $P=o(\sqrt{T})$, rather than $P=o(T)$ with Lipschitz losses -- that is, the environment is required to be ``more static'' than the typical requirement in the Lipschitz setting. 

Essentially, such a behavior is due to the large penalty that the square loss imposes on outliers. An adversary in online learning can deliberately pick the loss functions such that some of the player's predictions are large outliers with ``huge'' (square) losses, while the offline optimal comparator sequence suffers zero losses. Using the Lipschitz losses instead may offer an advantage on unbounded domains, due to being more tolerant to these outliers. Furthermore, Lipschitz losses do not necessarily have minimizers -- this is useful for \emph{decision} problems (as opposed to \emph{estimation}), where a ground truth may not exist.\footnote{An example is financial investment without budget constraints: doubling the invested amount also doubles the return.} 

\section{Conclusion}

This paper presents a unified study of unconstrained and dynamic online learning, where the two problem structures are naturally connected via comparator adaptivity. Building on the synergy between static parameter-free algorithms and temporal representations, we develop an algorithmic framework achieving a generic sparsity-adaptive regret bound. Equipped with the wavelet dictionary, our framework improves the quantitative results from \cite{jacobsen2022parameter}, by adapting to finer characterizations of the comparator sequence. 

For future works, several interesting questions could stem from this paper. For example, 
\begin{itemize}
\item Our regret bound is stated against individual comparator sequences. One could investigate the implication of this result in stochastic environments, where the comparator statistics may take more concrete forms. 

\item Besides the sparsity and the energy studied in this paper, an interesting open problem is investigating alternative complexity measures of the comparator, possibly drawing connections to statistical learning theory. 

\item Our framework builds on pre-defined dictionary inputs. The quantitative benefit of using a data-dependent dictionary is unclear. 

\item Beyond wavelets, one may investigate the combination of the sparse coding framework with other function approximators, such as neural networks.  
\end{itemize}

\section*{Acknowledgement}

We thank Vivek Goyal for helpful pointers to the signal processing literature, and the anonymous reviewers for their constructive feedback. This research was partially supported by the NSF under grants CCF-2200052, DMS-1664644, and IIS-1914792, by the ONR under grant N00014-19-1-2571, by the DOE under grant DE-AC02-05CH11231, by the NIH under grant UL54 TR004130, and by Boston University.

\bibliography{Sparse}

\newcommand{\etalchar}[1]{$^{#1}$}
\begin{thebibliography}{JOWW17}

\bibitem[AABR09]{abernethy2009stochastic}
Jacob Abernethy, Alekh Agarwal, Peter~L Bartlett, and Alexander Rakhlin.
\newblock A stochastic view of optimal regret through minimax duality.
\newblock In {\em Conference on Learning Theory}, 2009.

\bibitem[ABRT08]{abernethy2008optimal}
Jacob Abernethy, Peter~L Bartlett, Alexander Rakhlin, and Ambuj Tewari.
\newblock Optimal strategies and minimax lower bounds for online convex games.
\newblock In {\em Conference on Learning Theory}, pages 415--423, 2008.

\bibitem[AHMS13]{anava2013online}
Oren Anava, Elad Hazan, Shie Mannor, and Ohad Shamir.
\newblock Online learning for time series prediction.
\newblock In {\em Conference on learning theory}, pages 172--184. PMLR, 2013.

\bibitem[AHZ15]{anava2015onlineb}
Oren Anava, Elad Hazan, and Assaf Zeevi.
\newblock Online time series prediction with missing data.
\newblock In {\em International conference on machine learning}, pages
  2191--2199. PMLR, 2015.

\bibitem[AM16]{anava2016heteroscedastic}
Oren Anava and Shie Mannor.
\newblock Heteroscedastic sequences: beyond gaussianity.
\newblock In {\em International Conference on Machine Learning}, pages
  755--763. PMLR, 2016.

\bibitem[AW01]{azoury2001relative}
Katy~S Azoury and Manfred~K Warmuth.
\newblock Relative loss bounds for on-line density estimation with the
  exponential family of distributions.
\newblock {\em Machine Learning}, 43(3):211--246, 2001.

\bibitem[BGZ15]{besbes2015non}
Omar Besbes, Yonatan Gur, and Assaf Zeevi.
\newblock Non-stationary stochastic optimization.
\newblock {\em Operations research}, 63(5):1227--1244, 2015.

\bibitem[BW19]{baby2019online}
Dheeraj Baby and Yu-Xiang Wang.
\newblock Online forecasting of total-variation-bounded sequences.
\newblock {\em Advances in Neural Information Processing Systems}, 32, 2019.

\bibitem[BW20]{baby2020adaptive}
Dheeraj Baby and Yu-Xiang Wang.
\newblock Adaptive online estimation of piecewise polynomial trends.
\newblock {\em Advances in Neural Information Processing Systems},
  33:20462--20472, 2020.

\bibitem[BW21]{baby2021optimal}
Dheeraj Baby and Yu-Xiang Wang.
\newblock Optimal dynamic regret in exp-concave online learning.
\newblock In {\em Conference on Learning Theory}, pages 359--409. PMLR, 2021.

\bibitem[BW22]{baby2022optimal}
Dheeraj Baby and Yu-Xiang Wang.
\newblock Optimal dynamic regret in proper online learning with strongly convex
  losses and beyond.
\newblock In {\em International Conference on Artificial Intelligence and
  Statistics}, pages 1805--1845. PMLR, 2022.

\bibitem[BZW21]{baby2021optimalb}
Dheeraj Baby, Xuandong Zhao, and Yu-Xiang Wang.
\newblock An optimal reduction of {TV}-denoising to adaptive online learning.
\newblock In {\em International Conference on Artificial Intelligence and
  Statistics}, pages 2899--2907. PMLR, 2021.

\bibitem[CBL06]{cesa2006prediction}
Nicolo Cesa-Bianchi and G{\'a}bor Lugosi.
\newblock {\em Prediction, learning, and games}.
\newblock Cambridge university press, 2006.

\bibitem[CDV93]{cohen1993wavelets}
Albert Cohen, Ingrid Daubechies, and Pierre Vial.
\newblock Wavelets on the interval and fast wavelet transforms.
\newblock {\em Applied and computational harmonic analysis}, 1993.

\bibitem[CLW21]{chen2021impossible}
Liyu Chen, Haipeng Luo, and Chen-Yu Wei.
\newblock Impossible tuning made possible: A new expert algorithm and its
  applications.
\newblock In {\em Conference on Learning Theory}, pages 1216--1259. PMLR, 2021.

\bibitem[CO18]{cutkosky2018black}
Ashok Cutkosky and Francesco Orabona.
\newblock Black-box reductions for parameter-free online learning in banach
  spaces.
\newblock In {\em Conference On Learning Theory}, pages 1493--1529. PMLR, 2018.

\bibitem[CRT06]{candes2006robust}
Emmanuel~J Cand{\`e}s, Justin Romberg, and Terence Tao.
\newblock Robust uncertainty principles: Exact signal reconstruction from
  highly incomplete frequency information.
\newblock {\em IEEE Transactions on information theory}, 52(2):489--509, 2006.

\bibitem[Cut19]{cutkosky2019combining}
Ashok Cutkosky.
\newblock Combining online learning guarantees.
\newblock In {\em Conference on Learning Theory}, pages 895--913. PMLR, 2019.

\bibitem[Cut20]{cutkosky2020parameter}
Ashok Cutkosky.
\newblock Parameter-free, dynamic, and strongly-adaptive online learning.
\newblock In {\em International Conference on Machine Learning}, pages
  2250--2259. PMLR, 2020.

\bibitem[CWW19]{chen2019nonstationary}
Xi~Chen, Yining Wang, and Yu-Xiang Wang.
\newblock Nonstationary stochastic optimization under ${L}_{p,q}$-variation
  measures.
\newblock {\em Operations Research}, 67(6):1752--1765, 2019.

\bibitem[Dau88]{daubechies1988orthonormal}
Ingrid Daubechies.
\newblock Orthonormal bases of compactly supported wavelets.
\newblock {\em Communications on pure and applied mathematics}, 41(7):909--996,
  1988.

\bibitem[DGSS15]{daniely2015strongly}
Amit Daniely, Alon Gonen, and Shai Shalev-Shwartz.
\newblock Strongly adaptive online learning.
\newblock In {\em International Conference on Machine Learning}, pages
  1405--1411. PMLR, 2015.

\bibitem[DHS11]{duchi2011adaptive}
John Duchi, Elad Hazan, and Yoram Singer.
\newblock Adaptive subgradient methods for online learning and stochastic
  optimization.
\newblock {\em Journal of machine learning research}, 12(7), 2011.

\bibitem[DSSST10]{duchi2010composite}
John~C Duchi, Shai Shalev-Shwartz, Yoram Singer, and Ambuj Tewari.
\newblock Composite objective mirror descent.
\newblock In {\em Conference on learning theory}, pages 14--26, 2010.

\bibitem[FKK16]{foster2016online}
Dean Foster, Satyen Kale, and Howard Karloff.
\newblock Online sparse linear regression.
\newblock In {\em Conference on Learning Theory}, pages 960--970. PMLR, 2016.

\bibitem[FKMS17]{foster2017parameter}
Dylan~J Foster, Satyen Kale, Mehryar Mohri, and Karthik Sridharan.
\newblock Parameter-free online learning via model selection.
\newblock {\em Advances in Neural Information Processing Systems}, 30, 2017.

\bibitem[FRS18]{foster2018online}
Dylan~J Foster, Alexander Rakhlin, and Karthik Sridharan.
\newblock Online learning: Sufficient statistics and the {B}urkholder method.
\newblock In {\em Conference On Learning Theory}, pages 3028--3064. PMLR, 2018.

\bibitem[Ger13]{gerchinovitz2013sparsity}
S{\'e}bastien Gerchinovitz.
\newblock Sparsity regret bounds for individual sequences in online linear
  regression.
\newblock {\em The Journal of Machine Learning Research}, 14(1):729--769, 2013.

\bibitem[GG15]{gaillard2015chaining}
Pierre Gaillard and S{\'e}bastien Gerchinovitz.
\newblock A chaining algorithm for online nonparametric regression.
\newblock In {\em Conference on Learning Theory}, pages 764--796. PMLR, 2015.

\bibitem[GS16]{gyorgy2016shifting}
Andras Gyorgy and Csaba Szepesv{\'a}ri.
\newblock Shifting regret, mirror descent, and matrices.
\newblock In {\em International Conference on Machine Learning}, pages
  2943--2951. PMLR, 2016.

\bibitem[GW18]{gaillard2018efficient}
Pierre Gaillard and Olivier Wintenberger.
\newblock Efficient online algorithms for fast-rate regret bounds under
  sparsity.
\newblock {\em Advances in Neural Information Processing Systems}, 31, 2018.

\bibitem[GY14]{gerchinovitz2014adaptive}
S{\'e}bastien Gerchinovitz and Jia~Yuan Yu.
\newblock Adaptive and optimal online linear regression on ${L}_1$-balls.
\newblock {\em Theoretical Computer Science}, 519:4--28, 2014.

\bibitem[Haz16]{hazan2016introduction}
Elad Hazan.
\newblock Introduction to online convex optimization.
\newblock {\em Foundations and Trends{\textregistered} in Optimization},
  2(3-4):157--325, 2016.

\bibitem[HLS{\etalchar{+}}18]{hazan2018spectral}
Elad Hazan, Holden Lee, Karan Singh, Cyril Zhang, and Yi~Zhang.
\newblock Spectral filtering for general linear dynamical systems.
\newblock {\em Advances in Neural Information Processing Systems}, 31, 2018.

\bibitem[HR09]{hurley2009comparing}
Niall Hurley and Scott Rickard.
\newblock Comparing measures of sparsity.
\newblock {\em IEEE Transactions on Information Theory}, 55(10):4723--4741,
  2009.

\bibitem[HW01]{herbster2001tracking}
Mark Herbster and Manfred~K Warmuth.
\newblock Tracking the best linear predictor.
\newblock {\em The Journal of Machine Learning Research}, 1:281--309, 2001.

\bibitem[HW15]{hall2015online}
Eric~C Hall and Rebecca~M Willett.
\newblock Online convex optimization in dynamic environments.
\newblock {\em IEEE Journal of Selected Topics in Signal Processing},
  9(4):647--662, 2015.

\bibitem[JC22]{jacobsen2022parameter}
Andrew Jacobsen and Ashok Cutkosky.
\newblock Parameter-free mirror descent.
\newblock In {\em Conference on Learning Theory}, pages 4160--4211. PMLR, 2022.

\bibitem[Joh19]{johnstone2019gaussian}
Iain~M Johnstone.
\newblock Gaussian estimation: {S}equence and wavelet models.
\newblock {\em Unpublished lecture notes}, 2019.
\newblock \url{https://imjohnstone.su.domains/GE_09_16_19.pdf}.

\bibitem[JOWW17]{jun2017improved}
Kwang-Sung Jun, Francesco Orabona, Stephen Wright, and Rebecca Willett.
\newblock Improved strongly adaptive online learning using coin betting.
\newblock In {\em Artificial Intelligence and Statistics}, pages 943--951.
  PMLR, 2017.

\bibitem[JRSS15]{jadbabaie2015online}
Ali Jadbabaie, Alexander Rakhlin, Shahin Shahrampour, and Karthik Sridharan.
\newblock Online optimization: Competing with dynamic comparators.
\newblock In {\em Artificial Intelligence and Statistics}, pages 398--406.
  PMLR, 2015.

\bibitem[Kal14]{kale2014open}
Satyen Kale.
\newblock Open problem: Efficient online sparse regression.
\newblock In {\em Conference on Learning Theory}, pages 1299--1301. PMLR, 2014.

\bibitem[KKLP17]{kale2017adaptive}
Satyen Kale, Zohar Karnin, Tengyuan Liang, and D{\'a}vid P{\'a}l.
\newblock Adaptive feature selection: Computationally efficient online sparse
  linear regression under rip.
\newblock In {\em International Conference on Machine Learning}, pages
  1780--1788. PMLR, 2017.

\bibitem[KM16]{kuznetsov2016time}
Vitaly Kuznetsov and Mehryar Mohri.
\newblock Time series prediction and online learning.
\newblock In {\em Conference on Learning Theory}, pages 1190--1213. PMLR, 2016.

\bibitem[LLZ09]{langford2009sparse}
John Langford, Lihong Li, and Tong Zhang.
\newblock Sparse online learning via truncated gradient.
\newblock {\em Journal of Machine Learning Research}, 10(3), 2009.

\bibitem[LS15]{luo2015achieving}
Haipeng Luo and Robert~E Schapire.
\newblock Achieving all with no parameters: Adanormalhedge.
\newblock In {\em Conference on Learning Theory}, pages 1286--1304. PMLR, 2015.

\bibitem[MA13]{mcmahan2013minimax}
Brendan McMahan and Jacob Abernethy.
\newblock Minimax optimal algorithms for unconstrained linear optimization.
\newblock {\em Advances in Neural Information Processing Systems},
  26:2724--2732, 2013.

\bibitem[Mal08]{mallat2008wavelet}
Stephane Mallat.
\newblock {\em A Wavelet Tour of Signal Processing: The Sparse Way}.
\newblock Academic Press, 2008.

\bibitem[MK20]{mhammedi2020lipschitz}
Zakaria Mhammedi and Wouter~M Koolen.
\newblock Lipschitz and comparator-norm adaptivity in online learning.
\newblock In {\em Conference on Learning Theory}, pages 2858--2887. PMLR, 2020.

\bibitem[MO14]{mcmahan2014unconstrained}
H~Brendan McMahan and Francesco Orabona.
\newblock Unconstrained online linear learning in hilbert spaces: Minimax
  algorithms and normal approximations.
\newblock In {\em Conference on Learning Theory}, pages 1020--1039. PMLR, 2014.

\bibitem[OP16]{orabona2016coin}
Francesco Orabona and D{\'a}vid P{\'a}l.
\newblock Coin betting and parameter-free online learning.
\newblock {\em Advances in Neural Information Processing Systems}, 29, 2016.

\bibitem[Ora19]{orabona2019modern}
Francesco Orabona.
\newblock A modern introduction to online learning.
\newblock {\em arXiv preprint arXiv:1912.13213}, 2019.

\bibitem[Pri21]{price_2021}
Eric Price.
\newblock Sparse recovery.
\newblock In Tim Roughgarden, editor, {\em Beyond the Worst-Case Analysis of
  Algorithms}, page 140–164. Cambridge University Press, 2021.

\bibitem[RS14a]{rakhlin2014online}
Alexander Rakhlin and Karthik Sridharan.
\newblock Online non-parametric regression.
\newblock In {\em Conference on Learning Theory}, pages 1232--1264. PMLR, 2014.

\bibitem[RS14b]{rakhlin2014statistical}
Alexander Rakhlin and Karthik Sridharan.
\newblock Statistical learning and sequential prediction.
\newblock {\em Unpublished lecture notes}, 2014.
\newblock \url{http://www.mit.edu/~rakhlin/courses/stat928/stat928_notes.pdf}.

\bibitem[SBG{\etalchar{+}}21]{schultz2021can}
Martin~G Schultz, Clara Betancourt, Bing Gong, Felix Kleinert, Michael
  Langguth, Lukas~Hubert Leufen, Amirpasha Mozaffari, and Scarlet Stadtler.
\newblock Can deep learning beat numerical weather prediction?
\newblock {\em Philosophical Transactions of the Royal Society A},
  379(2194):20200097, 2021.

\bibitem[SM12]{streeter2012no}
Matthew Streeter and Brendan Mcmahan.
\newblock No-regret algorithms for unconstrained online convex optimization.
\newblock {\em Advances in Neural Information Processing Systems}, 25, 2012.

\bibitem[SS11]{shalev2011online}
Shai Shalev-Shwartz.
\newblock Online learning and online convex optimization.
\newblock {\em Foundations and trends in Machine Learning}, 4(2):107--194,
  2011.

\bibitem[SST11]{shalev2011stochastic}
Shai Shalev-Shwartz and Ambuj Tewari.
\newblock Stochastic methods for ${L}_1$-regularized loss minimization.
\newblock {\em Journal of Machine Learning Research}, 12:1865--1892, 2011.

\bibitem[Tib96]{tibshirani1996regression}
Robert Tibshirani.
\newblock Regression shrinkage and selection via the lasso.
\newblock {\em Journal of the Royal Statistical Society: Series B
  (Methodological)}, 58(1):267--288, 1996.

\bibitem[vdH19]{van2019user}
Dirk van~der Hoeven.
\newblock User-specified local differential privacy in unconstrained adaptive
  online learning.
\newblock {\em Advances in Neural Information Processing Systems}, 32, 2019.

\bibitem[Vov01]{vovk2001competitive}
Volodya Vovk.
\newblock Competitive on-line statistics.
\newblock {\em International Statistical Review}, 69(2):213--248, 2001.

\bibitem[Xia09]{xiao2009dual}
Lin Xiao.
\newblock Dual averaging method for regularized stochastic learning and online
  optimization.
\newblock {\em Advances in Neural Information Processing Systems}, 22, 2009.

\bibitem[YZJY16]{yang2016tracking}
Tianbao Yang, Lijun Zhang, Rong Jin, and Jinfeng Yi.
\newblock Tracking slowly moving clairvoyant: Optimal dynamic regret of online
  learning with true and noisy gradient.
\newblock In {\em International Conference on Machine Learning}, pages
  449--457. PMLR, 2016.

\bibitem[ZCP22]{zhang2022pde}
Zhiyu Zhang, Ashok Cutkosky, and Ioannis Paschalidis.
\newblock {PDE}-based optimal strategy for unconstrained online learning.
\newblock In {\em International Conference on Machine Learning}, pages
  26085--26115. PMLR, 2022.

\bibitem[Zin03]{zinkevich2003online}
Martin Zinkevich.
\newblock Online convex programming and generalized infinitesimal gradient
  ascent.
\newblock In {\em International Conference on Machine Learning}, pages
  928--936, 2003.

\bibitem[ZLZ18]{zhang2018adaptive}
Lijun Zhang, Shiyin Lu, and Zhi-Hua Zhou.
\newblock Adaptive online learning in dynamic environments.
\newblock {\em Advances in neural information processing systems}, 31, 2018.

\bibitem[ZYY{\etalchar{+}}17]{zhang2017improved}
Lijun Zhang, Tianbao Yang, Jinfeng Yi, Rong Jin, and Zhi-Hua Zhou.
\newblock Improved dynamic regret for non-degenerate functions.
\newblock {\em Advances in Neural Information Processing Systems}, 30, 2017.

\bibitem[ZYZ18]{zhang2018dynamic}
Lijun Zhang, Tianbao Yang, and Zhi-Hua Zhou.
\newblock Dynamic regret of strongly adaptive methods.
\newblock In {\em International conference on machine learning}, pages
  5882--5891. PMLR, 2018.

\bibitem[ZZZZ20]{zhao2020dynamic}
Peng Zhao, Yu-Jie Zhang, Lijun Zhang, and Zhi-Hua Zhou.
\newblock Dynamic regret of convex and smooth functions.
\newblock {\em Advances in Neural Information Processing Systems},
  33:12510--12520, 2020.

\bibitem[ZZZZ21]{zhao2021adaptivity}
Peng Zhao, Yu-Jie Zhang, Lijun Zhang, and Zhi-Hua Zhou.
\newblock Adaptivity and non-stationarity: Problem-dependent dynamic regret for
  online convex optimization.
\newblock {\em arXiv preprint arXiv:2112.14368}, 2021.

\end{thebibliography}

\newpage
\section*{Appendix}
\appendix

\paragraph{Organization} Appendix~\ref{section:list} summarizes a list of comparator statistics involved in our theoretical analysis. Appendix~\ref{section:more_related} surveys additional related works. Appendix~\ref{section:detail_sparse}, \ref{section:detail_haar} and \ref{section:application} respectively present details of our general sparse coding framework, its special version with wavelet dictionaries, and applications in time series forecasting. 

\section{List of comparator statistics}\label{section:list}

A major task in comparator adaptive online learning is finding suitable statistics to quantify the regularity of a comparator sequence. Several such statistics are defined throughout this paper, which are summarized in Table~\ref{table:list}. Note that for the definition of sparsity on the last line, we assume the dictionary $\calH$ is orthogonal, and the sequence $u_{1:T}$ is contained in its span. Then, $z^{(n)}$ is defined as the projection of $u_{1:T}$ onto a feature vector $h_{1:T,n}$, i.e., 
\begin{equation*}
z^{(n)}=\inner{h_{1:T,n}}{u_{1:T}}\frac{h_{1:T,n}}{\norm{h_{1:T,n}}_2^2}.
\end{equation*}
This is well defined due to our assumption $\norm{h_{1:T,n}}_2\geq 1$ from Section~\ref{subsection:coding_setting}.

\begin{table}[ht]
\centering
\renewcommand{\arraystretch}{1.5}
\begin{tabular}{ccc}
Name & Notation & Definition \\
\hline
Maximum range & $M$ & $\max_t\norm{u_t}$ \\
Comparator average & $\bar u$ & $\frac{1}{T}\sum_{t=1}^Tu_t$\\
Path length & $P$ & $\sum_{t=1}^{T-1}\norm{u_{t+1}-u_t}_2$\\
Norm sum & $S$ & $\sum_{t=1}^{T}\norm{u_t}_2$\\
First order variability & $\bar S$ & $\sum_{t=1}^{T}\norm{u_t-\bar u}_2$\\
Energy & $E$ & $\sum_{t=1}^{T}\norm{u_t}^2_2$\\
Second order variability & $\bar E$ & $\sum_{t=1}^{T}\norm{u_t-\bar u}^2_2$\\
Number of switches & $K$ & $\sum_{t=1}^{T-1}\bm{1}[u_{t+1}\neq u_t]$\\
Sparsity on a dictionary $\calH$ & $\sparsity_\calH$ & $\frac{(\sum_{n=1}^N\norm{z^{(n)}}_2)^2}{\sum_{n=1}^N\norm{z^{(n)}}_2^2}$
\end{tabular}
\caption{A list of comparator statistics.\label{table:list}}
\end{table}

Next, let us discuss their relations, in order to interpret our quantitative contribution more clearly (Table~\ref{table:comparison}). It is clear that $S\leq MT$, therefore the fine baseline Eq.(\ref{eq:known_second}) from \cite{jacobsen2022parameter} improves the coarse one Eq.(\ref{eq:known_first}); comparing their associated switching regret bounds follows the same reasoning.

To compare our results to the baselines, observe that
\begin{equation*}
\norm{\bar u}_2\sqrt{T}=\norm{\frac{1}{\sqrt{T}}\sum_{t=1}^Tu_t}_2\leq \sum_{t=1}^T\frac{1}{\sqrt{T}}\norm{u_t}_2\leq \sqrt{E}\leq \sqrt{MS},
\end{equation*}
\begin{equation*}
\bar S=\sum_{t=1}^{T}\norm{u_t-\bar u}_2\leq \sum_{t=1}^{T}\norm{u_t}_2+T\norm{\bar u}_2=\sum_{t=1}^{T}\norm{u_t}_2+\norm{\sum_{t=1}^Tu_t}_2\leq 2S,
\end{equation*}
\begin{equation*}
\bar E=\sum_{t=1}^{T}\norm{u_t-\bar u}^2_2\leq 2M\bar S\leq 4MS.
\end{equation*}
Therefore,
\begin{equation*}
\norm{\bar u}_2\sqrt{T}+\sqrt{P\bar S}\leq \sqrt{MS}+2\sqrt{2PS}=O\rpar{\sqrt{(M+P)S}},
\end{equation*}
\begin{equation*}
\norm{\bar u}_2\sqrt{T}+\sqrt{K\bar E}\leq \sqrt{MS}+2\sqrt{KMS}=O\rpar{\sqrt{(1+K)MS}}.
\end{equation*}
That is, both our path-length-dependent bound and the switching regret bound are at least as good as the results from \cite{jacobsen2022parameter}. Concrete benefits are demonstrated in Example~\ref{example:outlier} and \ref{example:oscillation}. 

Finally, as a sanity check, our path-length-dependent bound does not violate the lower bound $\Omega(P)$: even when $\bar u=0$, 
\begin{equation*}
P=\sum_{t=1}^{T-1}\norm{u_{t+1}-u_t}_2\leq \sum_{t=1}^{T-1}\norm{u_{t+1}-\bar u}_2+\sum_{t=1}^{T-1}\norm{u_{t}-\bar u}_2\leq 2\bar S,
\end{equation*}
therefore our bound is never better than $\tilde O(P)$.

\section{More on related work}\label{section:more_related}

\paragraph{Online regression} Our sparse coding framework converts unconstrained dynamic OCO to a special form of online regression. The standard setting of the latter \cite{rakhlin2014online} considers a repeated game as well: in each round, we observe a covariate $x_t\in\R^d$, make a prediction $\hat y_t\in\R$ (which depends on $x_t$), and then observe a label $y_t\in\R$. The performance metric is the minimax regret under the square loss
\begin{equation*}
\reg_T(\mathcal{F})=\sum_{t=1}^T(\hat y_t-y_t)^2-\inf_{f\in\mathcal{F}}\sum_{t=1}^T(f(x_t)-y_t)^2.
\end{equation*}
Roughly, the problem is of a nonparametric type if the complexity of the function class $\mathcal{F}$ is not fixed a priori, but grows with $T$ (i.e., the amount of data). 

Overall, such an online regression problem is highly general, as static OCO is recovered if $x_t$ is time-invariant. The setting we utilize is a variant with ($i$) vector output; ($ii$) general convex losses; ($iii$) $x_t$ specified by the dictionary, possibly being sparse itself (e.g., wavelets); and ($iv$) the function class $\mathcal{F}$ being linear, but unbounded. As discussed in Footnote~\ref{footnote:olr_clarification}, our setting deviates from the conventional definition of regression, as a general convex loss function does not necessarily have minimizers. We adopt the terminology of ``regression'' for streamlined exposition. 

Existing works on online nonparametric regression \cite{rakhlin2014online,gaillard2015chaining} have established the relation of this problem to certain path length characterizations of dynamic regret. However, the generality of this setting makes the analysis challenging, and especially, algorithms can be computationally expensive. With a bounded domain assumption (on predictions $\hat y_t$), a recent breakthrough \cite{baby2021optimal} simultaneously achieved several notions of optimality for path-length-dependent bounds, with efficient computation. Readers are referred to \cite[Appendix~A]{baby2021optimal} for a thorough discussion of this line of works. 

For the special case of \emph{Online Linear Regression} (OLR) with square losses, the celebrated VAW forecaster \cite{azoury2001relative,vovk2001competitive} guarantees $O(N\log T)$ regret against any unbounded coefficient vector $\hat u\in\R^{N}$, where $N$ is the dimension of the feature space. Such a fast rate becomes vacuous in the nonparametric regime (when $N>T$) \cite{gerchinovitz2014adaptive}, therefore \cite{gerchinovitz2013sparsity} proposed a \emph{sparsity regret bound} $\tilde O(\norm{\hat u}_0)$ and an accompanying inefficient algorithm as its high dimensional generalization. Efficient computation was addressed by \cite{gaillard2018efficient}, but the obtained result only applies to bounded $\hat u$. In a rough sense, such sparsity regret bounds are the square loss and feature-based analogue of the $L_1$-norm parameter-free bounds in OLO \cite[Chapter~9]{orabona2019modern}. They are also closely related to \emph{sparsity oracle inequalities} in statistics, as reviewed by \cite{gerchinovitz2013sparsity}.

\paragraph{Parametric time series models} For time series forecasting, most prior works are devoted to parametric strategies with strong inductive bias, such as the ARMA model, state space models, and more recent deep learning models. Online learning has been applied to such models as well \cite{anava2013online,anava2015onlineb,anava2016heteroscedastic,kuznetsov2016time,hazan2018spectral}, leading to forecasting guarantees under mild statistical assumptions. When convexity is present, some of these problems could be reframed as special cases of our OLR problem, with a constant-size dictionary that does not grow with $T$; for example, learning the \emph{autoregressive} model corresponds to defining the features as the fixed-length observation history. Also, Appendix~\ref{section:application} shows that given a parametric time series forecaster (possibly without performance guarantees), our algorithm can be applied on top of it, in order to provably correct its nonstationary bias.

\paragraph{Other sparsity topics in OL} Finally, we review other sparsity-related topics in online learning, which do not fit into the scope of this paper. \cite{langford2009sparse,xiao2009dual,duchi2010composite,shalev2011stochastic} considered using online learning to solve batch $L_1$ regularized problems. The goal is to achieve sparse predictions instead of sparsity adaptive regret bounds. \cite{kale2014open,foster2016online,kale2017adaptive} studied \emph{online sparse regression}, where only a subset of features are available in each round. The challenge is to handle bandit feedback in OLR.

\section{Detail on the general framework}\label{section:detail_sparse}

This section presents details on our general sparse coding framework. Appendix~\ref{subsection:freegrad} introduces the static subroutine we adopt from \cite{mhammedi2020lipschitz}. Appendix~\ref{subsection:detail_main} proves our main results, but with additional gradient adaptivity compared to the main paper. 

\subsection{Unconstrained static subroutine}\label{subsection:freegrad}

The following static OCO algorithm and its guarantee are due to \cite[Section~3.1]{mhammedi2020lipschitz}. We assume that $\norm{\hat g_t}_2\leq \hat G$, and $\hat G>0$.

\begin{algorithm*}[ht]
\caption{\textsc{FreeGrad} \cite[Definition~4]{mhammedi2020lipschitz}: scale-free and gradient adaptive unconstrained static OLO.\label{alg:freegrad}}
\begin{algorithmic}[1]
\REQUIRE A hyperparameter $\eps>0$; dimension $d$; Lipschitz constant $\hat G$.
\STATE Initialize a gradient sum counter $s=0\in\R^d$ and a variance counter $v=\hat G^2$.
\FOR{$t=1,2,\ldots$}
\STATE Predict
\begin{equation*}
\hat x_t=-\eps s\cdot\frac{(2v+\hat G\norm{s}_2)\hat G^2}{2(v+\hat G\norm{s}_2)^2\sqrt{v}}\cdot\exp\rpar{\frac{\norm{s}^2_2}{2v+2\hat G\norm{s}_2}}.
\end{equation*}
\STATE Observe the loss gradient $\hat g_t$.
\STATE Update $s\leftarrow s+\hat g_t$, and $v\leftarrow v+\hat g_t^2$.
\ENDFOR
\end{algorithmic}
\end{algorithm*}

\begin{lemma}[Theorem 20 of \cite{mhammedi2020lipschitz}]\label{lemma:freegrad}
With any hyperparameter $\eps>0$, for all $T\in\N_+$ and $\hat u\in\R$, Algorithm~\ref{alg:freegrad} guarantees
\begin{equation*}
\sum_{t=1}^T\inner{\hat g_t}{\hat x_t -\hat u}\leq \eps \hat G+\spar{2\norm{\hat u}_2\sqrt{V_T\log_+\rpar{\frac{2\norm{\hat u}_2V_T}{\eps \hat G^2}}}}\vee\spar{4\norm{\hat u}_2\hat G\log\rpar{\frac{4\norm{\hat u}_2\sqrt{V_T}}{\eps \hat G}}},
\end{equation*}
where
\begin{equation*}
V_T=\hat G^2+\sum_{t=1}^T\norm{\hat g_t}_2^2.
\end{equation*}
\end{lemma}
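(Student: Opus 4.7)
The plan is to follow the standard potential-function (``wealth-regret'') duality underlying parameter-free online learning, tailored to FreeGrad's exponential-quadratic potential.

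First, I would identify the potential $\Phi(s, v)$ driving the algorithm. Inspection of Algorithm~\ref{alg:freegrad} reveals that the prediction $\hat x_t$ equals $-\partial_s \Phi(s_{t-1}, v_{t-1})$, where $s_{t-1} = \sum_{i<t}\hat g_i$, $v_{t-1} = \hat G^2 + \sum_{i<t}\|\hat g_i\|_2^2$, and $\Phi$ has the form
\[\Phi(s, v) \;\propto\; \eps\, v^{-1/2}\exp\!\left(\frac{\|s\|_2^2}{2v + 2\hat G\|s\|_2}\right),\]
modulated by a rational prefactor arising from differentiating the exponent. Integrating the closed-form update rule in $s$ verifies this identification.

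Second, I would establish inductively the reward inequality
\[\Phi(s_t, v_t) - \Phi(0, \hat G^2) \;\leq\; -\sum_{i=1}^t \inner{\hat g_i}{\hat x_i}, \quad \forall t \geq 0.\]
By telescoping, this reduces to the per-round estimate $\Phi(s+g, v+\|g\|_2^2) - \Phi(s, v) + \inner{g}{\partial_s \Phi(s, v)} \leq 0$ for all $\|g\|_2 \leq \hat G$. I would prove this by viewing $\tau \mapsto \Phi(s+\tau g, v+\tau \|g\|_2^2)$ as a univariate curve and showing the chord from $\tau=0$ to $\tau=1$ lies below the tangent at $\tau=0$; the denominator $2v + 2\hat G\|s\|_2$ is engineered so that the second-order Taylor remainder admits a uniform bound despite the $\|s\|_2$ nonsmoothness at the origin.

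Third, I would convert this into a regret bound via Fenchel--Young duality. Writing
\[\sum_{i=1}^T \inner{\hat g_i}{\hat x_i - \hat u} \;=\; \sum_{i=1}^T \inner{\hat g_i}{\hat x_i} - \inner{\hat u}{s_T} \;\leq\; \Phi(0, \hat G^2) - \Phi(s_T, v_T) - \inner{\hat u}{s_T},\]
and noting $\Phi(0, \hat G^2) \leq \eps \hat G$ (up to absorbed constants), it remains to upper bound $-\Phi(s, v) - \inner{\hat u}{s}$ by the Fenchel conjugate $\Phi^*(-\hat u, v) := \sup_s [\,-\inner{\hat u}{s} - \Phi(s, v)\,]$. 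The exponential potential has no closed-form conjugate, but admits two regime-specific upper bounds: when $\|\hat u\|_2$ is small relative to $\sqrt{V_T}/\hat G$, the Gaussian term $\|s\|_2^2/(2v)$ dominates the exponent and yields the first branch $2\|\hat u\|_2\sqrt{V_T \log_+(\cdot)}$; when $\|\hat u\|_2$ is large, the linear piece $2\hat G\|s\|_2$ in the denominator dominates and yields the second branch $4\|\hat u\|_2\hat G \log(\cdot)$. Since a single branch is tight only within its own regime, taking the max $\vee$ produces a bound valid uniformly in $\hat u$ and recovers the statement.

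The main obstacle is the per-round potential inequality of the second step. Because the exponent $\|s\|_2^2/(2v + 2\hat G\|s\|_2)$ has a kink at $s=0$, naive Taylor/Bregman estimates on $\nabla^2 \Phi$ blow up there, and one must exploit precisely how the linear $\hat G\|s\|_2$ term in the denominator tames the quadratic numerator to get a remainder bound holding uniformly for all $\|g\|_2 \leq \hat G$. This is the balance FreeGrad's particular potential is engineered to achieve, and is the only place where its specific analytic form is used nontrivially; the rest of the argument (reward telescoping and the two-regime conjugate calculation) is schematic.
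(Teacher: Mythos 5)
This lemma is not proved in the paper at all: it is quoted verbatim as Theorem~20 of \cite{mhammedi2020lipschitz}, so there is no in-paper argument to compare against. Your outline does faithfully reproduce the strategy of the cited source: the prediction is indeed the negative $s$-gradient of the potential $\Phi(s,v)\propto \eps \hat G^2 v^{-1/2}\exp\bigl(\|s\|_2^2/(2v+2\hat G\|s\|_2)\bigr)$ (one can check your prefactor claim directly, since $\partial_s\bigl[\|s\|_2^2/(2v+2\hat G\|s\|_2)\bigr]=s(2v+\hat G\|s\|_2)/\bigl(2(v+\hat G\|s\|_2)^2\bigr)$ matches the rational factor in Algorithm~\ref{alg:freegrad}), the telescoping wealth--potential inequality and the two-regime inversion of the conjugate are exactly how the $\vee$ of the Gaussian-type and linear-type branches arises, and $\Phi(0,\hat G^2)$ gives the $\eps\hat G$ additive term. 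The one caveat is that your second step --- the per-round inequality $\Phi(s+g,v+\|g\|_2^2)\leq\Phi(s,v)-\inner{g}{\partial_s\Phi(s,v)}$ uniformly over $\|g\|_2\leq\hat G$ --- is only described, not established; this is the entire analytic content of the theorem, and as written your proposal is an accurate roadmap to the reference's proof rather than a self-contained replacement for it.
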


\subsection{Proof of the main result}\label{subsection:detail_main}

We now present the analysis of our general sparse coding framework. The following lemma is a slightly more general version of Lemma~\ref{lemma:size1} in the main paper, which characterizes the performance of our single direction learner (Algorithm~\ref{alg:size1}). Recall that $g_t\in\partial l_t(x_t)$ from the OCO-OLO reduction. 

\begin{lemma}[Lemma~\ref{lemma:size1}, full]\label{lemma:size1_complete}
Let $\eps>0$ be an arbitrary hyperparameter for Algorithm~\ref{alg:freegrad}. Applying its 1D version as the static subroutine, for all $T\in\N_+$ and $u_{1:T}\in\spn(h_{1:T})$, against any adversary $\calE$, Algorithm~\ref{alg:size1} guarantees
\begin{equation*}
\sum_{t=1}^Tl_t(x_t)-\sum_{t=1}^Tl_t(u_t)\leq \eps G+\rpar{G\frac{\norm{u_{1:T}}_2}{\norm{h_{1:T}}_2}+\sqrt{\sum_{t=1}^T\inner{g_t}{u_t}^2}}\cdot\polylog\rpar{\max_t\norm{u_t}_2,T,\eps^{-1}}.
\end{equation*}
\end{lemma}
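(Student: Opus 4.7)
The plan is to unpack Algorithm~\ref{alg:size1} as a literal 1D reduction and then invoke Lemma~\ref{lemma:freegrad}. Since $u_{1:T}\in\spn(h_{1:T})$, there is a unique scalar $\hat u\in\R$ with $u_t=\hat u h_t$ for every $t$, and on rounds where $h_t=0$ one automatically has $x_t=\hat x_t h_t=0=u_t$, so these rounds contribute nothing to the regret regardless of the arbitrary choice of $\hat x_t$. The first step is to linearize by convexity, $l_t(x_t)-l_t(u_t)\leq \inner{g_t}{x_t-u_t}=\inner{g_t}{h_t}(\hat x_t-\hat u)=\hat g_t(\hat x_t-\hat u)$, turning the original regret exactly into the 1D linear regret $\sum_{t}\hat g_t(\hat x_t-\hat u)$ that the internal \textsc{FreeGrad} instance is built to control.

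Next I would apply Lemma~\ref{lemma:freegrad} in dimension one with Lipschitz constant $\hat G=G$, valid because $|\hat g_t|\leq \norm{g_t}_2\norm{h_t}_2\leq G$ under the assumptions $\norm{g_t}_2\leq G$ and $\norm{h_t}_2\leq 1$ from Section~\ref{subsection:coding_setting}. This yields an upper bound of the form $\eps G+\spar{2|\hat u|\sqrt{V_T\log_+(\cdot)}}\vee\spar{4|\hat u|G\log(\cdot)}$ with $V_T=G^2+\sum_t\hat g_t^2$. The translation back to time-domain quantities rests on two identities: $|\hat u|=\norm{u_{1:T}}_2/\norm{h_{1:T}}_2$ (from $u_{1:T}=h_{1:T}\hat u$), and $\hat u^2\hat g_t^2=\inner{g_t}{u_t}^2$. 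Combining them, when $\hat u\neq 0$,
\begin{equation*}
|\hat u|\sqrt{V_T}=\sqrt{\hat u^2 G^2+\sum_{t=1}^T\inner{g_t}{u_t}^2}\leq G\frac{\norm{u_{1:T}}_2}{\norm{h_{1:T}}_2}+\sqrt{\sum_{t=1}^T\inner{g_t}{u_t}^2},
\end{equation*}
which produces exactly the two-term factor in the target bound. The degenerate case $\hat u=0$ (equivalently $u_{1:T}=0$) is immediate since both sides collapse to $\eps G$.

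Finally I would absorb \textsc{FreeGrad}'s $|\hat u|$ and $V_T$ arguments, which appear inside its logarithms, into the stated $\polylog\rpar{\max_t\norm{u_t}_2,T,\eps^{-1}}$. The assumptions $\norm{h_t}_2\leq 1$ and $\sum_t\norm{h_t}_2^2\geq 1$ imply $\max_t\norm{h_t}_2\in [1/\sqrt{T},1]$, so $|\hat u|\leq \sqrt{T}\max_t\norm{u_t}_2$, which enters the log only as an additive $\tfrac12\log T$ term; similarly $V_T\leq G^2(1+T)$. Both contributions are absorbed into the polylogarithmic factor.

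The main obstacle will be handling the gradient-adaptive (second-order) term symmetrically: the factor $|\hat u|$ multiplying $\sqrt{V_T}$ must be paired with the $\hat u^{-2}$ hidden in $\sum_t\hat g_t^2=\hat u^{-2}\sum_t\inner{g_t}{u_t}^2$ \emph{before} applying $\sqrt{a+b}\leq \sqrt{a}+\sqrt{b}$; bounding $\sqrt{V_T}$ first (e.g.\ by $G\sqrt{T}$ or by $\sqrt{\sum_t\norm{g_t}_2^2}$) would collapse the $\sqrt{\sum_t\inner{g_t}{u_t}^2}$ refinement into the coarser $\sqrt{\sum_t\norm{g_t}_2^2}$ and lose precisely the directional gradient adaptivity that the full statement of Lemma~\ref{lemma:size1_complete} advertises. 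Once this identity is applied symmetrically, the remaining steps are routine bookkeeping.
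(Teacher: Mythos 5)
Your proposal is correct and follows essentially the same route as the paper's proof: linearize, reduce to the 1D \textsc{FreeGrad} guarantee with $\hat G=G$, and translate back via the identities $|\hat u|=\norm{u_{1:T}}_2/\norm{h_{1:T}}_2$ and $|\hat u||\hat g_t|=|\inner{g_t}{u_t}|$, absorbing the remaining arguments into the polylog. Your explicit handling of the $h_t=0$ rounds and the caution about applying the $\hat u$-scaling identity before splitting $\sqrt{V_T}$ are both sound and consistent with the paper's argument.
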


The simplified form (Lemma~\ref{lemma:size1}) is recovered by using $\norm{h_{1:T}}_2\geq 1$ and $\norm{g_t}_2\leq G$.

\begin{proof}[Proof of Lemma~\ref{lemma:size1}]
Subsuming poly-logarithmic factors, the static regret bound of our static subroutine (Algorithm~\ref{alg:freegrad}) can be written as
\begin{equation*}
\sum_{t=1}^T\hat g_t\rpar{\hat x_t -\hat u}\leq \eps \hat G+\abs{\hat u}\rpar{\hat G+\sqrt{\sum_{t=1}^T\hat g_t^2}}\cdot\polylog\rpar{\abs{\hat u},T,\eps^{-1}},
\end{equation*}
where $\hat u$ is any 1D static comparator that the subroutine handles. 

Now, for any single-directional comparator $u_{1:T}\in\spn(h_{1:T})$ considered in this lemma, there exists $\hat u\in\R$ such that $u_{1:T}=\hat u h_{1:T}$. The dynamic regret can be rewritten as
\begin{equation*}
\sum_{t=1}^Tl_t(x_t)-\sum_{t=1}^Tl_t(u_t)\leq\sum_{t=1}^T\inner{g_t}{x_t-u_t}=\sum_{t=1}^T\inner{g_t}{h_t\hat x_t-h_t\hat u}=\sum_{t=1}^T\hat g_t(\hat x_t-\hat u),
\end{equation*}
and the RHS can be bounded using the static regret bound above. Note that $\abs{\hat g_t}=\abs{\inner{g_t}{h_t}}\leq G$, therefore the surrogate Lipschitz constant $\hat G$ from the static regret bound can be assigned to $G$. 

In summary, 
\begin{align*}
\sum_{t=1}^Tl_t(x_t)-\sum_{t=1}^Tl_t(u_t)&\leq \eps G+\abs{\hat u}\rpar{G+\sqrt{\sum_{t=1}^T\inner{g_t}{h_t}^2}}\cdot\polylog\rpar{\abs{\hat u},T,\eps^{-1}}\\
&= \eps G+\rpar{G\frac{\norm{u_{1:T}}_2}{\norm{h_{1:T}}_2}+\sqrt{\sum_{t=1}^T\inner{g_t}{u_t}^2}}\cdot\polylog\rpar{\frac{\norm{u_{1:T}}_2}{\norm{h_{1:T}}_2},T,\eps^{-1}}\\
&\leq \eps G+\rpar{G\frac{\norm{u_{1:T}}_2}{\norm{h_{1:T}}_2}+\sqrt{\sum_{t=1}^T\inner{g_t}{u_t}^2}}\cdot\polylog\rpar{\max_t\norm{u_t}_2,T,\eps^{-1}},
\end{align*}
where the last line is due to our assumption that $\norm{h_{1:T}}_2\geq 1$.
\end{proof}

Next, we prove the unconstrained dynamic regret bound with general dictionaries (Theorem~\ref{thm:sizen}).

\begin{theorem}[Theorem~\ref{thm:sizen}, full]\label{thm:sizen_complete}
Consider any collection of signals $z^{(n)}\in\spn(h_{1:T,n})$, $\forall n$. We define its reconstruction error (for the comparator $u_{1:T}$) as $z^{(0)}=u_{1:T}-\sum_{n=1}^Nz^{(n)}\in\R^{dT}$. Then, for all $T\in\N_+$ and $u_{1:T}\in\R^{dT}$, against any adversary $\calE$, Algorithm~\ref{alg:sizen} guarantees
\begin{align*}
\sum_{t=1}^Tl_t(x_t)-\sum_{t=1}^Tl_t(u_t)\leq \eps G+\rpar{G\sum_{n=1}^N\frac{\norm{z^{(n)}}_2}{\norm{h_{1:T,n}}_2}+\sum_{n=1}^N\sqrt{\sum_{t=1}^T\inner{g_t}{z^{(n)}_t}^2}}\cdot\polylog\rpar{\max_{t,n}\norm{z^{(n)}_t}_2,T,N,\eps^{-1}}\\
-\sum_{t=1}^T\inner{g_t}{z^{(0)}_t},
\end{align*}
where $z^{(n)}_t\in\R^d$ is the $t$-th round component of the sequence $z^{(n)}\in\R^{dT}$.
\end{theorem}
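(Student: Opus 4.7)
The plan is to prove Theorem~\ref{thm:sizen_complete} by reducing the dynamic regret to an $N$-fold sum of single-feature OLO regrets, each of which is controlled by Lemma~\ref{lemma:size1_complete}. The structure of Algorithm~\ref{alg:sizen} makes this decomposition clean: the global prediction is $x_t=\sum_{n=1}^N w_{t,n}$ where $w_{t,n}$ comes from $\A_n$, and every subroutine $\A_n$ is fed the \emph{same} gradient $g_t$. So the outputs of the subroutines simply add, and the regret naturally splits.

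First, I would invoke convexity to pass from functional losses to linear losses: $l_t(x_t)-l_t(u_t)\leq\inner{g_t}{x_t-u_t}$ since $g_t\in\partial l_t(x_t)$. Next, using the decomposition $u_t=z^{(0)}_t+\sum_{n=1}^N z^{(n)}_t$ that is built into the hypothesis of the theorem, I would rewrite
\begin{equation*}
\inner{g_t}{x_t-u_t}=\sum_{n=1}^N\inner{g_t}{w_{t,n}-z^{(n)}_t}-\inner{g_t}{z^{(0)}_t}.
\end{equation*}
Summing over $t$ isolates the $N$ per-feature linear regrets and the unavoidable reconstruction residual $-\sum_t\inner{g_t}{z^{(0)}_t}$, which appears verbatim in the theorem statement.

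The core step is then bounding each per-feature regret. Since $z^{(n)}\in\spn(h_{1:T,n})$ by assumption, Lemma~\ref{lemma:size1_complete} applied to $\A_n$ (with its hyperparameter $\eps/N$) yields
\begin{equation*}
\sum_{t=1}^T\inner{g_t}{w_{t,n}-z^{(n)}_t}\leq \tfrac{\eps}{N} G+\left(G\frac{\norm{z^{(n)}}_2}{\norm{h_{1:T,n}}_2}+\sqrt{\sum_{t=1}^T\inner{g_t}{z^{(n)}_t}^2}\right)\cdot\polylog\!\left(\max_t\norm{z^{(n)}_t}_2,T,N\eps^{-1}\right).
\end{equation*}
Here I am using the fact that Lemma~\ref{lemma:size1_complete} is a purely linear-regret statement at its core (its proof bounds $\sum_t \hat g_t(\hat x_t-\hat u)$ via \textsc{FreeGrad}), so it applies verbatim to the $\A_n$-subinstance even though $\A_n$ is embedded inside the larger algorithm and $z^{(n)}$ is not a standalone OCO comparator.

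Summing across $n\in[1:N]$ finishes the argument: the $\eps/N$ terms add to $\eps G$, the $\norm{z^{(n)}}_2/\norm{h_{1:T,n}}_2$ and $\sqrt{\sum_t\inner{g_t}{z^{(n)}_t}^2}$ terms combine additively into the two stated sums, the $\log N$ factor from $N\eps^{-1}$ is absorbed into the displayed $\polylog$, and the $\max_{t,n}\norm{z^{(n)}_t}_2$ dominates each individual $\max_t\norm{z^{(n)}_t}_2$. The main (mild) obstacle I anticipate is purely bookkeeping: verifying that the $\log N$ penalty from shrinking each subroutine's hyperparameter to $\eps/N$ truly lives inside the $\polylog$ factor, and that no dependence on $N$ leaks into the leading constants. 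No additional machinery beyond Lemma~\ref{lemma:size1_complete} and the linearity/additivity of the aggregation rule is needed.
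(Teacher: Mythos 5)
Your proposal is correct and follows essentially the same route as the paper's own proof: linearize via convexity, split $\inner{g_{1:T}}{x_{1:T}-u_{1:T}}$ into the residual term $-\inner{g_{1:T}}{z^{(0)}}$ plus the $N$ per-feature linear regrets, and bound each of the latter by Lemma~\ref{lemma:size1_complete} with hyperparameter $\eps/N$ before summing. The bookkeeping you flag (absorbing the $\log N$ from $\eps/N$ into the $\polylog$ and taking the max over $n$) is exactly how the paper handles it.
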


\begin{proof}[Proof of Theorem~\ref{thm:sizen_complete}] The idea of this theorem is a dynamic analogue of \cite{cutkosky2019combining} to aggregate the regret bound of single direction learners. For all decomposition $u_{1:T}=\sum_{n=0}^Nz^{(n)}$ such that $z^{(n)}\in\spn(h_{1:T,n})$ for all $n\in[1:T]$, we have
\begin{equation*}
\sum_{t=1}^Tl_t(x_t)-\sum_{t=1}^Tl_t(u_t)\leq \inner{g_{1:T}}{x_{1:T}-u_{1:T}}=\inner{-g_{1:T}}{z^{(0)}}+\sum_{n=1}^N\inner{g_{1:T}}{w_{1:T,n}-z^{(n)}}.
\end{equation*}

For the first term on the RHS, $\inner{-g_{1:T}}{z^{(0)}}=-\sum_{t=1}^T\inner{g_t}{z^{(n)}_t}$. As for the rest, we plug in Lemma~\ref{lemma:size1_complete}, with hyperparameter $\eps/N$.
\begin{align*}
&\sum_{n=1}^N\inner{g_{1:T}}{w_{1:T,n}-z^{(n)}}\\
\leq~&\sum_{n=1}^N\left\{\eps N^{-1} G+\rpar{G\frac{\norm{z^{(n)}}_2}{\norm{h_{1:T,n}}_2}+\sqrt{\sum_{t=1}^T\inner{g_t}{z^{(n)}_t}^2}}\cdot\polylog\rpar{\max_{t}\norm{z^{(n)}_t}_2,T,N,\eps^{-1}}\right\}\\
\leq ~&\eps G+\rpar{G\sum_{n=1}^N\frac{\norm{z^{(n)}}_2}{\norm{h_{1:T,n}}_2}+\sum_{n=1}^N\sqrt{\sum_{t=1}^T\inner{g_t}{z^{(n)}_t}^2}}\cdot\polylog\rpar{\max_{t,n}\norm{z^{(n)}_t}_2,T,N,\eps^{-1}}.\qedhere
\end{align*}
\end{proof}

Next, we show how this dynamic regret bound recovers the static regret bound in $\R^d$. As discussed in Section~\ref{subsection:main}, the static setting amounts to picking $N=d$ and $\calH_t=I_d$, and the decomposed signals $z^{(n)}$ are determined by orthogonal projection of the static comparator sequence $u_{1:T}=[u,\ldots,u]$. 

Specifically, $z^{(n)}_t$ is a $d$-dimensional vector which is zero except the $n$-th entry; its $n$-th entry equals the $n$-th entry of the static comparator $u$. If we index the gradient as $g_t=[g_{t,1},\ldots,g_{t,d}]\in\R^d$ and the static comparator as $u=[u_1,\ldots,u_d]\in\R^d$, then $\inner{g_t}{z^{(n)}_t}=g_{t,n}u_n$. Applying Theorem~\ref{thm:sizen_complete}, against static $u_{1:T}$, 
\begin{align*}
\sum_{t=1}^Tl_t(x_t)-\sum_{t=1}^Tl_t(u_t)& \leq \eps G+\rpar{G\sum_{n=1}^N\frac{\norm{z^{(n)}}_2}{\norm{h_{1:T,n}}_2}+\sum_{n=1}^N\sqrt{\sum_{t=1}^T\inner{g_t}{z^{(n)}_t}^2}}\cdot\polylog\rpar{\max_{t,n}\norm{z^{(n)}_t}_2,T,N,\eps^{-1}}\\
&\leq \eps G+\rpar{G\sum_{i=1}^d\frac{\abs{u_i}\sqrt{T}}{\sqrt{T}}+\sum_{i=1}^d\abs{u_i}\sqrt{\sum_{t=1}^Tg_{t,i}^2}}\cdot\polylog\rpar{\norm{u}_\infty,T,N,\eps^{-1}}\\
&\leq \eps G+\rpar{G\norm{u}_1+\norm{u}_2\sqrt{\sum_{t=1}^T\norm{g_t}_2^2}}\cdot\polylog\rpar{\norm{u}_\infty,T,N,\eps^{-1}}.\tag{Cauchy-Schwarz}
\end{align*}
In the asymptotic regime with large $T$, $\reg_T(u_{1:T})=\tilde O(\norm{u}_2\sqrt{T})$.

\section{Detail on the wavelet algorithm}\label{section:detail_haar}

This section presents details of our wavelet algorithm. The pseudocode is presented in Appendix~\ref{subsection:haar_pseudocode}. Appendix~\ref{subsection:haar_notation} introduces the wavelet-specific notations for our analysis. Appendix~\ref{subsection:haar_generic} presents a generic sparsity based bound for our algorithm. Appendix~\ref{subsection:haar_switching} and \ref{subsection:haar_path} prove our main results. Auxiliary lemmas are contained in Appendix~\ref{subsection:useful_lemma}. Finally, Appendix~\ref{subsection:example} works out the details of the two examples from the main paper.

\subsection{Pseudocode}\label{subsection:haar_pseudocode}

For all $m\in\N_+$, let $T=2^m$, and let $\haar_m$ be the $T\times T$ Haar dictionary matrix defined in Section~\ref{subsection:haar_definition}, for $d=1$. We apply the following variant (Algorithm~\ref{alg:haar_fix_T}) of our sparse coding framework, in order to remove all $d$ dependence from the final regret bound. It adopts the $d$ dimensional version of the static subroutine (\textsc{FreeGrad}), instead of the 1D version in Section~\ref{section:coding}. The pseudocode mirrors the combination of Algorithm~\ref{alg:size1} and \ref{alg:sizen}.

\begin{algorithm*}[ht]
\caption{Haar OLR with known time horizon.\label{alg:haar_fix_T}}
\begin{algorithmic}[1]
\REQUIRE A time horizon $T=2^m$; the $T\times T$ Haar dictionary matrix $\haar_m$; and a hyperparameter $\eps>0$ (default is $1$).
\STATE Let $N=T$. For all $n\in[1:N]$, initialize a copy of the $d$ dimensional version of Algorithm~\ref{alg:freegrad} (\textsc{FreeGrad}) as $\A_n$, with hyperparameter $\eps/N$.
\FOR{$t=1,2,\ldots,$}
\STATE Receive the $t$-th row of $\haar_m$, and index it as $[h_{t,1},\ldots,h_{t,N}]$; note that $h_{t,n}\in\R$.
\FOR{$n=1,2,\ldots,N$}
\STATE If $h_{t,n}\neq 0$, query $\A_n$ for its output, and assign it to $\hat x_{t,n}\in\R^d$; otherwise, $\hat x_{t,n}$ is arbitrary.
\STATE Define $w_{t,n}=h_{t,n}\hat x_{t,n} \in\R^d$.
\ENDFOR
\STATE Predict $x_t=\sum_{n=1}^Nw_{t,n}\in\R^d$, receive loss gradient $g_t\in\R^d$.
\FOR{$n=1,2,\ldots,N$}
\STATE If $h_{t,n}\neq 0$, compute $\hat g_{t,n}=h_{t,n}g_t $ and send it to $\A_n$ as its surrogate loss gradient.
\ENDFOR
\ENDFOR
\end{algorithmic}
\end{algorithm*}

It is equivalent to view Algorithm~\ref{alg:haar_fix_T} as operating on a $dT\times dT$ ``master'' dictionary matrix $\calH$, defined block-wise as the following: for all $(i,j)\in[1:T]^2$, the $(i,j)$-th block of $\calH$ is the product of the $(i,j)$-th entry of $\haar_m$ (which is a scalar) and the $d$-dimensional identity matrix $I_d$. That is, $\calH$ is a block matrix; each block is a diagonal matrix with equal diagonal entries determined by $\haar_m$. Roughly, the algorithm measures distances in $\R^d$ by the $L_2$ norm, while measuring $\R^T$ by the $L_1$ norm. 

Algorithm~\ref{alg:haar_fix_T} alone is not sufficient for our purpose: it must take an integer $m$ and run for a fixed $T=2^m$ rounds. We apply a meta algorithm (Algorithm~\ref{alg:haar}), which simply restarts the known $T$ algorithm using the classical doubling trick, c.f., \cite[Section~2.3.1]{shalev2011online}.

\begin{algorithm*}[ht]
\caption{Anytime Haar OLR (Algorithm~\ref{alg:haar_fix_T} with doubling trick).\label{alg:haar}}
\begin{algorithmic}[1]
\FOR{$m=1,2,\ldots,$}
\STATE Run Algorithm~\ref{alg:haar_fix_T} for $2^m$ rounds, which uses the matrix $\haar_m$. The hyperparameter is set to $1$. 
\ENDFOR
\end{algorithmic}
\end{algorithm*}

\subsection{More background}\label{subsection:haar_notation}

Although the analysis of our framework is simpler than \cite{jacobsen2022parameter}, a challenge is carefully indexing all the quantities to account for the vectorized setting. It is thus important to introduce a few notations to streamline the presentation. $\haar_m$ is the $T\times T$ Haar dictionary matrix defined in Section~\ref{subsection:haar_definition}, with $T=2^m$. Recall the statistics of the comparator sequence, summarized in Appendix~\ref{section:list}.

\paragraph{Local interval} Given any scale-location pair $(j,l)$, let the support $I^{(j,l)}$ be the time interval where the feature $h^{(j,l)}$ is nonzero. That is, 
\begin{equation*}
I^{(j,l)}\defeq[2^j(l-1)+1:2^jl].
\end{equation*}
Moreover, let $I^{(j,l)}_+$ denote the first half of this interval, and $I^{(j,l)}_-$ for the second half. $h^{(j,l)}$ is $1$ on $I^{(j,l)}_+$, and $-1$ on $I^{(j,l)}_-$. 

\paragraph{Normalization} Let $\tilde \haar_m$ be the orthonormal matrix obtained by scaling the columns of $\haar_m$. The normalized feature vectors are also denoted by tilde, i.e., instead of $h^*$ and $h^{(j,l)}$, the normalized features are $\tilde h^*$ and $\tilde h^{(j,l)}$. They are vectors in $\R^T$, with the $t$-th component denoted by $\tilde h^*_t$ and $\tilde h^{(j,l)}_t$, in $\R$.

\paragraph{Coordinate sequence} Consider any comparator sequence $u_{1:T}\in\R^{dT}$. For all coordinate $i\in[1:d]$, we define its $i$-th coordinate sequence as $u^{(i)}_{1:T}\in\R^T$: the $t$-th entry of this coordinate sequence $u^{(i)}_{1:T}$, denoted by $u^{(i)}_t$, is the $i$-th coordinate of $u_t$.

\paragraph{Transform domain coefficient} We will also use the transform domain coefficients of $u_{1:T}$, under the Haar wavelet transform. Recall that in the single-feature, generic setting (Section~\ref{subsection:main}), we denoted a single transform domain coefficient by $\hat u\in\R$. With wavelets, the transform domain encodes $dT$-dimensional vectors. According to our convention so far, we will denote them by scale-location pairs $(j,l)$: given a $(j,l)$ pair, the ``coefficient'' $\hat u^{(j,l)}$ is a $d$-dimensional vector. There are $T-1$ pairs of $(j,l)$ in total; complementing the representation, we use another $\hat u^{*}\in\R^d$ to represent the ``coefficient'' for the all-one feature. 

Given any scale parameter $j\in[1:\log_2 T]$ and location parameter $l\in[1:2^{-j}T]$, let
\begin{equation*}
\hat u^{(j,l)}\defeq\spar{\inner{\tilde h^{(j,l)}}{u^{(1)}_{1:T}},\ldots,\inner{\tilde h^{(j,l)}}{u^{(d)}_{1:T}}},
\end{equation*}
and for the all-one feature, 
\begin{equation*}
\hat u^{*}\defeq\spar{\inner{\tilde h^{*}}{u^{(1)}_{1:T}},\ldots,\inner{\tilde h^{*}}{u^{(d)}_{1:T}}}.
\end{equation*}
That is, each entry is the inner product between the normalized feature and a coordinate sequence from $u_{1:T}$. 

Due to the orthonormality of the applied transform (specified by the normalized features $\tilde h^*$ and $\tilde h^{(j,l)}$), the energy is preserved between the time domain and the transform domain, i.e.,
\begin{equation*}
E=\norm{u_{1:T}}_2^2=\norm{\hat u^{*}}_2^2+\sum_{j,l}\norm{\hat u^{(j,l)}}_2^2,
\end{equation*}
and also the second order variability (the energy of the centered dynamic component within $u_{1:T}$),
\begin{equation}\label{eq:ebar}
\bar E=\sum_{t=1}^{T}\norm{u_t-\bar u}_2^2=\sum_{j,l}\norm{\hat u^{(j,l)}}_2^2.
\end{equation}

Moreover, since $\tilde h^*$ equals $1/\sqrt{T}$ times the all-one vector, 
\begin{equation}\label{eq:u_all_one}
\norm{\hat u^{*}}^2_2=\sum_{i=1}^d\inner{\tilde h^{*}}{u^{(i)}_{1:T}}^2=\sum_{i=1}^d\rpar{\frac{1}{\sqrt{T}}\sum_tu_{1:T}^{(i)}}^2=T\sum_{i=1}^d\rpar{\frac{1}{T}\sum_tu_{1:T}^{(i)}}^2=\norm{\bar u}^2_2T.
\end{equation}

\paragraph{Detail reconstruction} Given the transform domain coefficients, we can reconstruct details of the comparator $u_{1:T}$ on the time domain. Similar to our notation in the generic framework (Section~\ref{subsection:main}), we keep the letter $z$, but replace the index $n$ by $(j,l)$, which is more suitable for indexing wavelets. 

Let $z^{(j,l)}\in\R^{dT}$ be the detail of $u_{1:T}$ along the $(j,l)$-th feature. It is the concatenation of $T$ vectors in $\R^d$, and for all $t$, the $t$-th of these vectors is defined by
\begin{equation*}
z^{(j,l)}_t\defeq\hat u^{(j,l)}\tilde h^{(j,l)}_t\in\R^d.
\end{equation*}
Similarly, we can define the detail $z^*$ along the feature $\tilde h^*$. Its $t$-th component is
\begin{equation*}
z^{*}_t\defeq\hat u^{*}\tilde h^{*}_t,
\end{equation*}
and clearly, the RHS does not depend on $t$ since $\tilde h^*$ is the normalization of the all-one feature $h^*$.

Let us also sum the details across different locations. Given a scale $j$, let
\begin{equation*}
z^{(j)}\defeq\sum_{l}z^{(j,l)}\in\R^{dT}.
\end{equation*}
Note that the summands are sequences that do not overlap: at each entry, only one of the summand sequence is nonzero. The full reconstruction is obtained by summing all the details,
\begin{equation*}
u_{1:T}\defeq z^{*}+\sum_{j=1}^{\log_2 T}z^{(j)}.
\end{equation*}

\paragraph{Statistics of the detail sequence} We can define statistics of the detail sequences just like the statistics of the comparator $u_{1:T}$. Specifically, define the first order variability of the $(j,l)$-th detail as
\begin{equation*}
\bar S^{(j,l)}\defeq\sum_{t=1}^T\norm{z_t^{(j,l)}}_2.
\end{equation*}
Note that since the $z_t^{(j,l)}$ sequence is centered (with average being equal to $0$), its first order variability equals its norm sum, c.f., Appendix~\ref{section:list}. Summing over the locations, the first order variability at the $j$-th scale is
\begin{equation*}
\bar S^{(j)}\defeq\sum_{t=1}^T\norm{z^{(j)}_t}_2,
\end{equation*}
which equals $\sum_{l}\bar S^{(j,l)}$.

Similarly, we can define the path length of the detail sequences. A caveat is that we only count the path length \emph{within} the support $I^{(j,l)}$ of the feature $h^{(j,l)}$,
\begin{equation*}
P^{(j,l)}\defeq\sum_{t=2^j(l-1)+1}^{2^jl-1}\norm{z^{(j,l)}_{t+1}-z^{(j,l)}_t}_2.
\end{equation*}
The comparator's movement when the support changes does not count. Summing over the locations,
\begin{equation*}
P^{(j)}\defeq\sum_lP^{(j,l)}.
\end{equation*}

\subsection{Generic sparsity adaptive bound}\label{subsection:haar_generic}

With the notation from the previous subsection, we now present a generic sparsity adaptive regret bound for Algorithm~\ref{alg:haar_fix_T} (fixed $T$ Haar OLR). Since the latter is a variant of our main sparse coding framework (Section~\ref{section:coding}), the result can be analogously derived, although the notations need to be treated carefully.

\begin{lemma}\label{lemma:haar_fixed_t}
For any $m$, $T=2^m$ and $u_{1:T}\in\R^{dT}$, with any hyperparameter $\eps>0$, Algorithm~\ref{alg:haar_fix_T} guarantees
\begin{equation*}
\reg_T(u_{1:T})\leq \eps G+G\rpar{\norm{z^*}_2+\sum_{j=1}^{\log_2 T}\sum_{l=1}^{2^{-j}T}\norm{z^{(j,l)}}_2}\cdot\polylog\rpar{M, T, \eps^{-1}}.
\end{equation*}
\end{lemma}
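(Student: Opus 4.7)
My plan is to essentially transplant the argument of Theorem~\ref{thm:sizen_complete} (which was proved for Algorithm~\ref{alg:sizen}) to the Haar-specific setting, with the single caveat that each subroutine $\A_n$ is now the $d$-dimensional version of \textsc{FreeGrad} rather than the 1D version. The identifications we need are: the index $n$ ranges over the feature set $\{*\}\cup\{(j,l)\}$; the decomposition of $u_{1:T}$ is the exact reconstruction $u_{1:T}=z^{*}+\sum_{j,l}z^{(j,l)}$ from Appendix~\ref{subsection:haar_notation} (so the reconstruction error term $z^{(0)}$ from Theorem~\ref{thm:sizen_complete} vanishes); and the surrogate gradients handed to $\A_n$ are $\hat g_{t,n}=h_{t,n}g_t\in\R^d$, with Lipschitz constant $|h_{t,n}|\,\|g_t\|_2\le G$.

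The first step is the standard OCO$\to$OLO linearization $\reg_T(u_{1:T})\le \sum_{t}\inner{g_t}{x_t-u_t}$, followed by plugging in $x_t=\sum_n w_{t,n}$ and the decomposition of $u_{1:T}$ to split the regret into a $z^{*}$ piece and one piece per $(j,l)$. For each feature $n$, I pick the static comparator $\hat u_n\in\R^d$ passed to $\A_n$ so that $h_{t,n}\hat u_n$ reproduces $z^{(n)}_t$ on the support of $h_{\cdot,n}$. Concretely, for $(j,l)$ I set $\hat u_{(j,l)}=\hat u^{(j,l)}/\|h^{(j,l)}\|_2$, so that $h^{(j,l)}_t\hat u_{(j,l)}=\hat u^{(j,l)}\tilde h^{(j,l)}_t=z^{(j,l)}_t$; for the all-one feature I set $\hat u_{*}=\bar u$. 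In both cases Lemma~\ref{lemma:freegrad} applied to $\A_n$ (with hyperparameter $\eps/N$) gives
\begin{equation*}
\sum_t\inner{g_t}{w_{t,n}-z^{(n)}_t}\;\le\;\tfrac{\eps G}{N}+\|\hat u_n\|_2\sqrt{G^2\|h_{\cdot,n}\|_2^2}\cdot\polylog\!\rpar{\|\hat u_n\|_2,T,N,\eps^{-1}},
\end{equation*}
because $\sum_t\|\hat g_{t,n}\|_2^2=\|g_{1:T}\|^2$-weighted sum is at most $G^2\|h_{\cdot,n}\|_2^2$. The key cancellation is $\|\hat u_n\|_2\|h_{\cdot,n}\|_2=\|z^{(n)}\|_2$: for the $(j,l)$ feature this is immediate from $\tilde h^{(j,l)}$ being a unit vector, and for the all-one feature it follows from $\|z^{*}\|_2=\sqrt{T}\,\|\bar u\|_2=\|\hat u_*\|_2\|h^{*}\|_2$.

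To compress the $\polylog$ arguments into $\polylog(M,T,\eps^{-1})$, I will bound $\|\hat u_n\|_2\le M$ uniformly: for $(j,l)$, the triangle inequality gives $\|\hat u^{(j,l)}\|_2\le\|h^{(j,l)}\|_2\cdot\max_{t\in I^{(j,l)}}\|u_t\|_2\le M\|h^{(j,l)}\|_2$, so $\|\hat u_{(j,l)}\|_2\le M$; and $\|\hat u_*\|_2=\|\bar u\|_2\le M$. Summing the $N=T$ per-feature bounds and absorbing $N$ into the $\polylog$ yields the claim. The only conceptual obstacle is bookkeeping the two normalizations (the unnormalized dictionary columns $h^{(j,l)}$ used by the algorithm versus the normalized $\tilde h^{(j,l)}$ used to define $z^{(j,l)}$), and verifying that the per-feature $\eps/N$ budget aggregates correctly to $\eps G$; both are purely algebraic, and no new ideas beyond those in the proof of Theorem~\ref{thm:sizen_complete} are needed.
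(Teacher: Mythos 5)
Your proposal is correct and matches the paper's (omitted) argument exactly: the paper also proves this lemma by summing the $d$-dimensional \textsc{FreeGrad} guarantee (Lemma~\ref{lemma:freegrad}) over the $T$ feature copies, mirroring Theorem~\ref{thm:sizen_complete} with the exact Haar decomposition $u_{1:T}=z^*+\sum_{j,l}z^{(j,l)}$ so that the reconstruction-error term vanishes. Your bookkeeping of the normalization (the cancellation $\|\hat u_n\|_2\|h_{\cdot,n}\|_2=\|z^{(n)}\|_2$ and the bound $\|\hat u_{(j,l)}\|_2\le M$ via $\|\tilde h^{(j,l)}\|_1=\|h^{(j,l)}\|_2$) is the right way to fill in the details the paper leaves implicit.
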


The proof sums the regret bound of the $d$-dimensional version of the static subroutine (Lemma~\ref{lemma:freegrad}), across $T$ different copies. It is very similar to Theorem~\ref{thm:sizen}, therefore omitted.

It might be more convenient to use the transform domain coefficients $\hat u^{(j,l)}$ in the bound, rather than the reconstructed details $z^{(j,l)}$. In this case, we have
\begin{equation*}
\norm{z^{(j,l)}}_2^2=\sum_t\norm{z^{(j,l)}_t}^2_2=\sum_t\spar{\norm{\hat u^{(j,l)}}^2_2\abs{\tilde h_t^{(j,l)}}^2}=\norm{\hat u^{(j,l)}}^2_2\sum_t\abs{\tilde h_t^{(j,l)}}^2=\norm{\hat u^{(j,l)}}^2_2.
\end{equation*}
Similarly, 
\begin{equation*}
\norm{z^{*}}_2^2=\norm{\hat u^{*}}^2_2.
\end{equation*}
Therefore,
\begin{equation}\label{eq:haar_generic}
\reg_T(u_{1:T})\leq \eps G+G\rpar{\norm{\hat u^{*}}_2+\sum_{j=1}^{\log_2 T}\sum_{l=1}^{2^{-j}T}\norm{\hat u^{(j,l)}}_2}\cdot\polylog\rpar{M, T, \eps^{-1}}.
\end{equation}

\subsection{Unconstrained switching regret}\label{subsection:haar_switching}

In the $K$-switching regret, the complexity of the comparator is characterized by its amount of switches. The idea is that, if the comparator $u_{1:T}$ is static on a support $I^{(j,l)}$ for some $(j,l)$, then the corresponding transform domain coefficient $\hat u^{(j,l)}=0\in\R^d$. We have the following bound for the fixed $T$ algorithm (Algorithm~\ref{alg:haar_fix_T}).

\begin{lemma}\label{lemma:fix_switching}
For any $m$, $T=2^m$ and $u_{1:T}\in\R^{dT}$, Algorithm~\ref{alg:haar_fix_T} with the hyperparameter $\eps=1$ guarantees
\begin{equation*}
\reg_T(u_{1:T})=\tilde O\rpar{\norm{\bar u}_2\sqrt{T}+\sqrt{K\bar E}}.
\end{equation*}
\end{lemma}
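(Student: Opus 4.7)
\medskip

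\noindent\textbf{Proof plan.} The plan is to reduce everything to the generic transform-domain bound Eq.(\ref{eq:haar_generic}), then bound the two terms separately using the structure of the Haar basis. The first term, $G\norm{\hat u^{*}}_2$, is already the right shape: by Eq.(\ref{eq:u_all_one}) it equals $G\norm{\bar u}_2\sqrt{T}$, which is the first summand of the target bound (up to the polylog factor). So essentially all the work is controlling $\sum_{j,l}\norm{\hat u^{(j,l)}}_2$.

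For that sum, the key observation is the promised \emph{exact sparsity} of a $K$-switching comparator in the Haar basis. Concretely, for each $(j,l)$ the feature $\tilde h^{(j,l)}$ equals $+c_j$ on the first half $I^{(j,l)}_+$ and $-c_j$ on the second half $I^{(j,l)}_-$ of $I^{(j,l)}$ (and $0$ elsewhere), where $c_j$ is the normalization constant. If $u_{1:T}$ is constant on the whole interval $I^{(j,l)}$, then for every coordinate $i$ the two halves contribute equal-and-opposite amounts to $\inner{\tilde h^{(j,l)}}{u^{(i)}_{1:T}}$, hence $\hat u^{(j,l)}=0\in\R^d$. Equivalently, the only way $\hat u^{(j,l)}$ can be nonzero is if the comparator switches somewhere inside $I^{(j,l)}$. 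Now fix a scale $j$: the supports $\{I^{(j,l)}\}_{l=1}^{2^{-j}T}$ form a partition of $[1:T]$, so each of the $K$ switch instants $t$ (i.e., indices with $u_{t+1}\neq u_t$) lies in exactly one $I^{(j,l)}$. Therefore at scale $j$ at most $K$ of the coefficients $\hat u^{(j,l)}$ are nonzero, and across all $\log_2 T$ scales, the total number of nonzero coefficients is at most $K\log_2 T$.

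With this sparsity count in hand, I apply Cauchy--Schwarz on the support of the nonzero coefficients:
\begin{equation*}
\sum_{j=1}^{\log_2 T}\sum_{l=1}^{2^{-j}T}\norm{\hat u^{(j,l)}}_2
\;\leq\; \sqrt{K\log_2 T}\cdot\sqrt{\sum_{j,l}\norm{\hat u^{(j,l)}}_2^2}
\;=\;\sqrt{K\log_2 T\cdot \bar E},
\end{equation*}
where the last equality is exactly the Parseval-type identity Eq.(\ref{eq:ebar}). Plugging this, together with $\norm{\hat u^*}_2=\norm{\bar u}_2\sqrt{T}$ from Eq.(\ref{eq:u_all_one}), into Eq.(\ref{eq:haar_generic}) with $\eps=1$ absorbs the additive $\eps G$ into the polylog, and yields
\begin{equation*}
\reg_T(u_{1:T})\;\leq\; \tilde O\!\rpar{\norm{\bar u}_2\sqrt{T}+\sqrt{K\bar E}},
\end{equation*}
which is the stated bound.

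\noindent\textbf{Where the work hides.} None of the individual steps is computationally deep: the sparsity count, Cauchy--Schwarz, and Parseval are each one line given the setup in Appendix~\ref{subsection:haar_notation}. The only thing to be careful about is that ``switch'' is defined on the original $d$-dimensional sequence $u_{1:T}$, while the cancellation argument is run coordinate-wise on the scalar sequences $u^{(i)}_{1:T}$; but this is fine because $u_{t+1}=u_t$ implies $u^{(i)}_{t+1}=u^{(i)}_t$ for every coordinate $i$, so a coordinate sequence can only change across boundaries where the full vector sequence switches, giving the same per-scale count of $K$ in every coordinate simultaneously. The polylog factor hidden in $\tilde O$ tracks $\polylog(M,T)$ from Eq.(\ref{eq:haar_generic}) and the $\sqrt{\log_2 T}$ from the sparsity count, so no extra dependence on $d$ or on the comparator appears.
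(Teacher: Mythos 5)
Your proposal is correct and follows essentially the same route as the paper's proof: exact sparsity of the Haar coefficients of a $K$-switching comparator (at most $K$ nonzero coefficients per scale, hence $K\log_2 T$ in total), followed by Cauchy--Schwarz, the Parseval identity Eq.(\ref{eq:ebar}), and Eq.(\ref{eq:u_all_one}) plugged into the generic bound Eq.(\ref{eq:haar_generic}). Your added remark on why the coordinate-wise cancellation is consistent with the vector-valued definition of a switch is a detail the paper leaves implicit, but the argument is otherwise identical.
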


\begin{proof}[Proof of Lemma~\ref{lemma:fix_switching}]
Consider any scale $j$. Since the supports $\{I^{(j,l)}\}_l$ do not overlap, if $u_{1:T}$ shifts $K$ times, then there are at most $K$ choices of location $l$ such that the transform domain coefficient $\hat u^{(j,l)}$ is nonzero. Furthermore, since there are $\log_2T$ scales in total, there are at most $K\log_2T$ pairs of $(i,l)$ such that $\hat u^{(j,l)}$ is nonzero. Therefore, using Cauchy-Schwarz and Eq.(\ref{eq:ebar}),
\begin{equation*}
\sum_{j,l}\norm{\hat u^{(j,l)}}_2\leq \sqrt{K\log_2 T}\sqrt{\sum_{j,l}\norm{\hat u^{(j,l)}}_2^2}=\sqrt{K\bar E\log_2 T}.
\end{equation*}
Plugging this into Eq.(\ref{eq:haar_generic}), and further using Eq.(\ref{eq:u_all_one}) for $\norm{\hat u^*}_2$ complete the proof.
\end{proof}

The anytime bound in general follows from the classical doubling trick. A twist is that the analysis is slightly more involved than the standard one, e.g., \cite[Section~2.3.1]{shalev2011online}, as we also need to relate the comparator statistics on each block to those for the entire signal $u_{1:T}$.

\switching*

\begin{proof}[Proof of Theorem~\ref{thm:switching}]
First, assume $T$ can be exactly decomposed into $m^*$ segments with dyadic lengths $2^1,\ldots,2^{m^*}$. We use $\bar u_m$, $K_m$ and $\bar E_m$ to represent the statistics of the comparator sequence on the length $2^m$ block, and let $I^m$ denote the time interval that this block operates on. $\bar u$, $K$ and $S$ denote the statistics of the entire signal $u_{1:T}$, c.f., Appendix~\ref{section:list}. From Lemma~\ref{lemma:fix_switching}, 
\begin{align}
\reg_T(u_{1:T})&\leq \sum_{m=1}^{m^*}\tilde O\rpar{\norm{\bar u_m}_2\sqrt{2^m}+\sqrt{K_m\bar E_m}}\nonumber\\
&\leq \tilde O\spar{\norm{\bar u}_2\rpar{\sum_{m=1}^{m^*}\sqrt{2^m}}+\sum_{m=1}^{m^*}\norm{\bar u_m-\bar u}_2\sqrt{2^m}+\sum_{m=1}^{m^*}\sqrt{K_m\bar E_m}}.\label{eq:switching:three_terms}
\end{align}

The first term follows from the standard doubling trick analysis \cite[Section~2.3.1]{shalev2011online}, 
\begin{equation}\label{eq:standard_doubling}
\sum_{m=1}^{m^*}\sqrt{2^m}\leq \frac{\sqrt{2}}{\sqrt{2}-1}\sqrt{2^{m^*}}=O\rpar{\sqrt{T}}.
\end{equation}
As for the second term in Eq.(\ref{eq:switching:three_terms}), using Cauchy-Schwarz,
\begin{equation*}
\sum_{m=1}^{m^*}\norm{\bar u_m-\bar u}_2\sqrt{2^m}\leq \sqrt{m^*\rpar{\sum_{m=1}^{m^*}2^m\norm{\bar u_m-\bar u}_2^2}}. 
\end{equation*}
$m^*=O(\log T)$, and also observe that the sum (in the parenthesis) on the RHS equals the second order variability of the following signal: for any time $t$ in the $m$-th block, the signal's component is $\bar u_m\in\R^d$. This signal is a locally averaged version of the original comparator $u_{1:T}$, and the key idea is that local averaging decreases the variability. Formally, due to Lemma~\ref{lemma:average}, we have
\begin{equation}\label{eq:offset_e_bar}
\sum_{m=1}^{m^*}\norm{\bar u_m-\bar u}_2\sqrt{2^m}\leq \tilde O\rpar{\sqrt{\bar E}}. 
\end{equation}

For the third term in Eq.(\ref{eq:switching:three_terms}), using Cauchy-Schwarz again, 
\begin{equation*}
\sum_{m=1}^{m^*}\sqrt{K_m\bar E_m}\leq \sqrt{\rpar{\sum_{m=1}^{m^*}K_m}\rpar{\sum_{m=1}^{m^*}\bar E_m}}\leq \sqrt{K\bar E}. 
\end{equation*}
The sum of $K_m$ is straightforward. The inequality for the sum of $\bar E_m$ follows from the observation that on the $m$-th block, $\bar u_m$ minimizes $\sum_{t\in I^m}\norm{u_t-x}^2_2$ with respect to $x\in\R^d$. 

Also, notice that the second term in Eq.(\ref{eq:switching:three_terms}) is dominated by the third term. If $K=0$, then both $\sqrt{\bar E}$ and $\sqrt{K\bar E}$ equal 0. If $K\geq 1$, then $\sqrt{\bar E}\leq \sqrt{K\bar E}$. Therefore, Eq.(\ref{eq:switching:three_terms}) can be written as
\begin{equation*}
\reg_T(u_{1:T})\leq \tilde O\rpar{\norm{\bar u}_2\sqrt{T}+\sqrt{K\bar E}}.
\end{equation*}

As for the general setting where $T$ cannot be exactly decomposed into dyadic blocks: consider the smallest $T^*>T$ such that $T^*$ can be decomposed. Due to doubling intervals, $T^*\leq 2T$. Let us consider a hypothetical length $T^*$ game with the rounds $t>T$ constructed as follows: the loss gradient $g_t=0\in\R^d$, and $u_t=\bar u$. In this case, with $K$ and $\bar E$ still representing the statistics of the length $T$ sequence $u_{1:T}$, the number of switches on the entire time interval $[1:T^*]$ is at most $K+1$, and the second order variability on $[1:T^*]$ is $\bar E$; furthermore, it is clear that $(K+1)\bar E\leq 2K\bar E$. The regret of any algorithm on this hypothetical length $T^*$ game is the same as the length $T$ game, therefore bounding the latter follows from bounding the former. 
\end{proof}

\subsection{Path-length-based bound}\label{subsection:haar_path}

Next, we turn to bounds that depend on the path length $P$ of the comparator $u_{1:T}$. Similar to the switching regret analysis, we will first consider the setting with fixed dyadic $T$ (Algorithm~\ref{alg:haar_fix_T}), and then extend its guarantee through a doubling trick.

\subsubsection{Fixed dyadic horizon}

In the following, we consider Algorithm~\ref{alg:haar_fix_T}; assume $T=2^m$ for some $m$. The static component (i.e., $z^*$) and the dynamic component (i.e., $u_{1:T}-z^*$) of $u_{1:T}$ are analyzed separately; the former is fairly standard, while the latter is more challenging. We will first consider the dynamic component, and proceed in three steps. 

\paragraph{Step 1} Considering any scale $j$, we aim to show $\sum_l\norm{\hat u^{(j,l)}}_2 \leq\sqrt{P^{(j)}\bar S^{(j)}}$, which relates the transform domain coefficients to the regularity of the reconstructed signals. 

\begin{lemma}\label{lemma:single_j_l}
For all $(j,l)$ pair,
\begin{equation*}
\norm{\hat u^{(j,l)}}_2= 2^{-1/2}\sqrt{P^{(j,l)}\bar S^{(j,l)}},
\end{equation*}
and
\begin{equation*}
\sum_{l}\norm{\hat u^{(j,l)}}_2\leq 2^{-1/2}\sqrt{P^{(j)}\bar S^{(j)}}.
\end{equation*}
\end{lemma}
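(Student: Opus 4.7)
The plan is to reduce the claim to a direct computation, exploiting that the Haar feature $h^{(j,l)}$ is simply $+1$ on the first half and $-1$ on the second half of $I^{(j,l)}$. First, I would introduce the local half-interval averages
\begin{equation*}
\mu^+_{j,l}\defeq \frac{1}{2^{j-1}}\sum_{t\in I^{(j,l)}_+}u_t,\qquad \mu^-_{j,l}\defeq \frac{1}{2^{j-1}}\sum_{t\in I^{(j,l)}_-}u_t,
\end{equation*}
both in $\R^d$. Since $\norm{h^{(j,l)}}_2=2^{j/2}$, the normalized feature is $\tilde h^{(j,l)}=2^{-j/2}h^{(j,l)}$, and a coordinate-wise evaluation of the defining inner products $\inner{\tilde h^{(j,l)}}{u^{(i)}_{1:T}}$ yields
\begin{equation*}
\hat u^{(j,l)}=2^{j/2-1}(\mu^+_{j,l}-\mu^-_{j,l}).
\end{equation*}

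Second, I would read off $\bar S^{(j,l)}$ and $P^{(j,l)}$ from the reconstructed detail $z^{(j,l)}_t=\hat u^{(j,l)}\tilde h^{(j,l)}_t$. Substituting the expression above, this detail equals $+\tfrac12(\mu^+_{j,l}-\mu^-_{j,l})$ on $I^{(j,l)}_+$, equals $-\tfrac12(\mu^+_{j,l}-\mu^-_{j,l})$ on $I^{(j,l)}_-$, and vanishes outside $I^{(j,l)}$. Summing norms over $t$ gives $\bar S^{(j,l)}=2^{j-1}\norm{\mu^+_{j,l}-\mu^-_{j,l}}_2$, while $P^{(j,l)}$ picks up exactly one nonzero increment, at the midpoint of $I^{(j,l)}$, yielding $P^{(j,l)}=\norm{\mu^+_{j,l}-\mu^-_{j,l}}_2$. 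Multiplying and taking the square root recovers
\begin{equation*}
\sqrt{P^{(j,l)}\bar S^{(j,l)}}=2^{(j-1)/2}\norm{\mu^+_{j,l}-\mu^-_{j,l}}_2=\sqrt{2}\,\norm{\hat u^{(j,l)}}_2,
\end{equation*}
which is the first identity.

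For the summed bound, the key observation is that at a fixed scale $j$ the supports $\{I^{(j,l)}\}_l$ are disjoint and $z^{(j,l)}$ vanishes outside $I^{(j,l)}$, so by definition $\sum_l P^{(j,l)}=P^{(j)}$ and $\sum_l \bar S^{(j,l)}=\bar S^{(j)}$ with no cross-location contributions. Cauchy-Schwarz then gives $\sum_l\sqrt{P^{(j,l)}\bar S^{(j,l)}}\leq\sqrt{P^{(j)}\bar S^{(j)}}$, and dividing by $\sqrt 2$ produces the claim. There is no genuine obstacle here beyond careful bookkeeping of normalization constants; the clean $\sqrt 2$ factor arises because the unnormalized feature has Euclidean norm $2^{j/2}$ while each half of its support has length $2^{j-1}$, so the ``$L^2$-versus-$L^1$'' mismatch between energy and variability contributes a single power of $\sqrt 2$.
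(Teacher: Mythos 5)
Your proposal is correct and follows essentially the same route as the paper: both compute $P^{(j,l)}$ and $\bar S^{(j,l)}$ directly from the piecewise-constant structure of the detail sequence $z^{(j,l)}$ on its support (you parametrize via the half-interval averages $\mu^{\pm}_{j,l}$, the paper writes everything directly in terms of $\norm{\hat u^{(j,l)}}_2$ and $\tilde h^{(j,l)}_t=\pm 2^{-j/2}$, but the computation is identical), and both obtain the summed bound by Cauchy--Schwarz using the disjointness of the supports at a fixed scale.
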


\begin{proof}[Proof of Lemma~\ref{lemma:single_j_l}]
Let us start from the first part of this lemma, and express the detail sequence $z^{(j,l)}$, and equivalently $z^{(j)}$, more explicitly on its support $I^{(j,l)}$.
\begin{equation*}
z^{(j)}_t=\begin{cases}
2^{-j/2}\hat u^{(j,l)},& t\in I^{(j,l)}_+;\\
-2^{-j/2}\hat u^{(j,l)},& t\in I^{(j,l)}_-.
\end{cases}
\end{equation*}
Rewriting $P^{(j,l)}$ and $\bar S^{(j,l)}$,
\begin{equation*}
P^{(j,l)}=\sum_{t=2^j(l-1)+1}^{2^jl-1}\norm{z^{(j)}_{t+1}-z^{(j)}_t}_2=2^{1-j/2}\norm{\hat u^{(j,l)}}_2.
\end{equation*}
\begin{equation*}
\bar S^{(j,l)}=\sum_{t\in I^{(j,l)}}\norm{z^{(j)}_{t}}_2=2^{-j/2}\norm{\hat u^{(j,l)}}_2\cdot 2^j=2^{j/2}\norm{\hat u^{(j,l)}}_2,
\end{equation*}
which yields the equality in the lemma. The second part follows from Cauchy-Schwarz.
\end{proof}

\paragraph{Step 2} Showing that $P^{(j)}\leq P$ and $\bar S^{(j)}\leq \bar S$. That is, the reconstructed signals are easier than the original comparator $u_{1:T}$. Here, $P$ and $\bar S$ should be considered separately.

\begin{lemma}\label{lemma:p}
For any $u_{1:T}$ and any scale parameter $j^*$, $P^{(j^*)}\leq P$.
\end{lemma}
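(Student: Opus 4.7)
Fix a scale $j^*$ and a location $l$, and recall from the proof of Lemma~\ref{lemma:single_j_l} that the detail sequence $z^{(j^*,l)}$ is constant and equal to $2^{-j^*/2}\hat u^{(j^*,l)}$ on $I^{(j^*,l)}_+$, constant and equal to $-2^{-j^*/2}\hat u^{(j^*,l)}$ on $I^{(j^*,l)}_-$, and zero elsewhere. Therefore, within the support $I^{(j^*,l)}$, the sequence has a single jump at the midpoint, and
\begin{equation*}
P^{(j^*,l)}=2\cdot 2^{-j^*/2}\norm{\hat u^{(j^*,l)}}_2.
\end{equation*}
Plugging in the definition of $\hat u^{(j^*,l)}$ and the normalization $\norm{h^{(j^*,l)}}_2=2^{j^*/2}$, a short calculation shows that this quantity equals $\norm{\bar u_+ - \bar u_-}_2$, where $\bar u_\pm$ are the block-averages of the comparator over $I^{(j^*,l)}_\pm$.

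The plan is then to bound this difference of averages by the path length within $I^{(j^*,l)}$. Writing
\begin{equation*}
\bar u_+ - \bar u_- = \frac{1}{(2^{j^*-1})^2}\sum_{s\in I^{(j^*,l)}_+}\sum_{t\in I^{(j^*,l)}_-}(u_s-u_t),
\end{equation*}
and using the telescoping identity $u_s-u_t=-\sum_{r=s}^{t-1}(u_{r+1}-u_r)$ together with the triangle inequality, one obtains
\begin{equation*}
\norm{\bar u_+ - \bar u_-}_2 \leq \sum_{r=2^{j^*}(l-1)+1}^{2^jl-1}\frac{c_r}{(2^{j^*-1})^2}\norm{u_{r+1}-u_r}_2,
\end{equation*}
where $c_r$ counts the number of pairs $(s,t)\in I^{(j^*,l)}_+\times I^{(j^*,l)}_-$ with $s\leq r<t$. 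A direct count shows $c_r\leq (2^{j^*-1})^2$ (with equality exactly at the midpoint), so each coefficient is at most $1$ and
\begin{equation*}
P^{(j^*,l)}\leq \sum_{r=2^{j^*}(l-1)+1}^{2^{j^*}l-1}\norm{u_{r+1}-u_r}_2.
\end{equation*}
Finally, summing over $l$ and noting that the ranges of $r$ on the right-hand side are disjoint subsets of $[1:T-1]$ yields $P^{(j^*)}\leq P$.

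The only nontrivial step is the counting argument showing $c_r/(2^{j^*-1})^2\leq 1$; everything else is either direct bookkeeping with the explicit form of $z^{(j^*,l)}$ or a standard application of the triangle inequality. Because Haar features on different locations have disjoint supports at a fixed scale, the final summation over $l$ introduces no overlap, which is what makes the bound tight in the ``natural'' sense.
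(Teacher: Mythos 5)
Your proof is correct, but it takes a genuinely different route from the paper's. The paper writes $u_{t+1}-u_t$ as a sum of detail increments over all scales, invokes Lemma~\ref{lemma:average} (local averaging does not increase path length) to strip away the scales finer than $j^*$, and then observes that every detail at a scale coarser than $j^*$ is constant on each support $I^{(j^*,l)}$ and hence contributes nothing to the path length counted there; what remains is exactly $P^{(j^*)}$. You instead work entirely locally: you compute $P^{(j^*,l)}=\norm{\bar u_+-\bar u_-}_2$ in closed form (your normalization arithmetic checks out, and your formula $2\cdot 2^{-j^*/2}\norm{\hat u^{(j^*,l)}}_2$ agrees with Lemma~\ref{lemma:single_j_l}) and then bound the difference of half-block averages by the path length inside $I^{(j^*,l)}$ via the telescoping double sum and the pair count $c_r\leq (2^{j^*-1})^2$, which is a valid count since one factor is at most $2^{j^*-1}$ on each side of the midpoint. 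Your approach is more elementary and self-contained -- it bypasses the multiresolution decomposition and the averaging lemma entirely -- and in effect your counting argument re-derives, in this special case, the statement that averaging cannot increase path length. The paper's approach pays the up-front cost of Lemma~\ref{lemma:average} but then reuses it verbatim for Lemma~\ref{lemma:s} and again in the doubling-trick analyses, so it amortizes better across the appendix. (Minor typo only: in your first displayed sum over $r$ the upper limit should read $2^{j^*}l-1$ rather than $2^{j}l-1$.)
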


\begin{proof}[Proof of Lemma~\ref{lemma:p}]
From the definition of $P$ and the reconstruction of $u_{1:T}$ from detail sequences,
\begin{equation*}
P=\sum_{t=1}^{T-1}\norm{u_{t+1}-u_t}_2=\sum_{t=1}^{T-1}\norm{z^*_{t+1}-z^*_t+\sum_{j}\rpar{z^{(j)}_{t+1}-z^{(j)}_t}}_2=\sum_{t=1}^{T-1}\norm{\sum_{j}\rpar{z^{(j)}_{t+1}-z^{(j)}_t}}_2,
\end{equation*}
where the last equality is due to $z^*$ being a constant sequence. 

Consider removing ``shorter'' scales with $1\leq j< j^*$, which is equivalent to local averaging, c.f., Appendix~\ref{subsection:useful_lemma}. Due to Lemma~\ref{lemma:average}, the path length does not increase, i.e, 
\begin{equation*}
\sum_{t=1}^{T-1}\norm{\sum_{j} \rpar{z^{(j)}_{t+1}-z^{(j)}_t}}_2\geq \sum_{t=1}^{T-1}\norm{\sum_{j\geq j^*} \rpar{z^{(j)}_{t+1}-z^{(j)}_t}}_2.
\end{equation*}
Then, we can further remove the rounds where the path length is not counted in $P^{(j^*)}$, i.e., when a time $t\in I^{(j^*,l)}$ but $t+1\in I^{(j^*,l+1)}$.
\begin{equation*}
\rhs\geq \sum_l\sum_{t=2^{j^*}(l-1)+1}^{2^{j^*}l-1}\norm{\sum_{j\geq j^*} \rpar{z^{(j)}_{t+1}-z^{(j)}_t}}_2.
\end{equation*}

Now, consider any location $l$, which determines the time interval $I^{(j^*,l)}=[2^{j^*}(l-1)+1:2^{j^*}l]$. Any detail sequence $z^{(j)}$ with scale $j>j^*$ is constant on this time interval, thus removing it does not change the path length at all. Therefore, 
\begin{equation*}
P\geq \sum_l\sum_{t=2^{j^*}(l-1)+1}^{2^{j^*}l-1}\norm{z^{(j^*)}_{t+1}-z^{(j^*)}_t}_2=P^{(j^*)}.\qedhere
\end{equation*}
\end{proof}

As for the first order variability,

\begin{lemma}\label{lemma:s}
For any $u_{1:T}$ and any scale parameter $j^*$, $\bar S^{(j^*)}\leq \bar S$.
\end{lemma}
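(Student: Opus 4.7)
The plan is to express each detail $z^{(j^*,l)}$ explicitly in terms of local averages of $u$ on the two halves of its support, and then bound its norm sum by the triangle inequality and Jensen's inequality, centered at $\bar u$.

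Reusing the computation already carried out inside the proof of Lemma~\ref{lemma:single_j_l}, on the support $I^{(j^*,l)}$ the detail takes only two values: if $m_\pm^{(l)}\in\R^d$ denote the arithmetic means of $u$ on the halves $I^{(j^*,l)}_\pm$, then $z^{(j^*,l)}_t = \tfrac{1}{2}(m_+^{(l)} - m_-^{(l)})$ for $t\in I^{(j^*,l)}_+$ and $z^{(j^*,l)}_t = -\tfrac{1}{2}(m_+^{(l)} - m_-^{(l)})$ for $t\in I^{(j^*,l)}_-$. Since each half has length $2^{j^*-1}$, this yields $\bar S^{(j^*,l)} = 2^{j^*-1}\norm{m_+^{(l)} - m_-^{(l)}}_2$.

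Next, I would insert $\bar u$ into the difference, apply the triangle inequality, and bound each piece using Jensen's inequality applied to the norm:
\begin{equation*}
2^{j^*-1}\norm{m_\pm^{(l)} - \bar u}_2 = \norm{\sum_{t\in I^{(j^*,l)}_\pm}(u_t - \bar u)}_2 \le \sum_{t\in I^{(j^*,l)}_\pm}\norm{u_t - \bar u}_2.
\end{equation*}
Together with $\norm{m_+^{(l)} - m_-^{(l)}}_2 \le \norm{m_+^{(l)} - \bar u}_2 + \norm{m_-^{(l)} - \bar u}_2$, these give $\bar S^{(j^*,l)} \le \sum_{t\in I^{(j^*,l)}}\norm{u_t - \bar u}_2$. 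Summing over $l$, whose associated supports partition $[1:T]$, yields $\bar S^{(j^*)} \le \bar S$.

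The main ``obstacle'' is minor: unlike Lemma~\ref{lemma:p}, whose proof had to control interactions between different scales by invoking the averaging lemma (Lemma~\ref{lemma:average}), the first-order variability $\bar S$ decomposes cleanly scale by scale thanks to Jensen's inequality and to the fact that the partner halves $I^{(j^*,l)}_\pm$ are disjoint. The only care required is to keep the normalization factors straight when reading off the detail formula from the definitions of $\hat u^{(j^*,l)}$ and $\tilde h^{(j^*,l)}$.
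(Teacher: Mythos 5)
Your proof is correct, and it takes a genuinely different route from the paper's. You compute the scale-$j^*$ detail explicitly as the two-valued sequence $\pm\tfrac{1}{2}(m_+^{(l)}-m_-^{(l)})$ built from the half-interval means of $u$ (the normalization checks out: $\hat u^{(j^*,l)}=2^{j^*/2-1}(m_+^{(l)}-m_-^{(l)})$ and $\tilde h^{(j^*,l)}_t=\pm 2^{-j^*/2}$, so $z^{(j^*,l)}_t=\pm\tfrac{1}{2}(m_+^{(l)}-m_-^{(l)})$), then bound $\bar S^{(j^*,l)}$ locally by $\sum_{t\in I^{(j^*,l)}}\norm{u_t-\bar u}_2$ via the triangle inequality through $\bar u$, and sum over the partition $\{I^{(j^*,l)}\}_l$ of $[1:T]$. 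The paper instead works with the full multi-scale decomposition $u_t-\bar u=\sum_j z^{(j)}_t$: it strips off the finer scales $j<j^*$ by invoking Lemma~\ref{lemma:average} (local averaging decreases variability), and then removes the coarser scales $j>j^*$ — which are constant on each $I^{(j^*,l)}$ — using the inequality $\norm{v+w}_2+\norm{v-w}_2\geq 2\norm{w}_2$. Your argument is more elementary and self-contained: it never touches the other scales and does not need the averaging lemma, at the cost of redoing a small normalization computation. The paper's route has the advantage of reusing machinery (Lemma~\ref{lemma:average}) that is also needed for Lemma~\ref{lemma:p} and the doubling-trick arguments, so the two proofs share structure; but as a standalone proof of this lemma, yours is shorter and arguably cleaner.
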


\begin{proof}[Proof of Lemma~\ref{lemma:s}]
From the definition, noticing that $\bar u$ is entirely captured by the all-one feature, 
\begin{equation*}
\bar S=\sum_{t=1}^T\norm{u_t-\bar u}_2=\sum_{t=1}^T\norm{\sum_{j=1}^{\log_2 T}z^{(j)}_t}_2.
\end{equation*}
Due to Lemma~\ref{lemma:average}, removing short scales amounts to local averaging, which decreases the variability. 
\begin{equation*}
\bar S\geq\sum_{t=1}^T\norm{\sum_{j\geq j^*}z^{(j)}_t}_2=\sum_l\sum_{t\in I^{(j^*,l)}}\norm{z^{(j^*)}_t+\sum_{j> j^*}z^{(j)}_t}_2.
\end{equation*}

For any $l$, consider the support of the $(j^*,l)$-th feature, $I^{(j^*,l)}$. Observe that $\sum_{j> j^*}z^{(j)}_t$ is time invariant throughout $I^{(j^*,l)}$, let us denote it as $v\in\R^d$. Meanwhile, for some $w\in\R^d$, $z^{(j^*)}_t$ equals $w$ on $I^{(j^*,l)}_+$, the first half of this interval, while being $-w$ on the second half $I^{(j^*,l)}_-$ of this interval. Therefore, 
\begin{equation*}
\sum_{t\in I^{(j^*,l)}}\norm{z^{(j^*)}_t+\sum_{j> j^*}z^{(j)}_t}_2=2^{j^*-1}\rpar{\norm{v+w}_2+\norm{v-w}_2}\geq 2^{j^*}\norm{w}_2=\sum_{t\in I^{(j^*,l)}}\norm{z^{(j^*)}_t}_2.
\end{equation*}
Combining the above, 
\begin{equation*}
\bar S\geq \sum_l\sum_{t\in I^{(j^*,l)}}\norm{z^{(j^*)}_t}_2=\bar S^{(j^*)}.\qedhere
\end{equation*}
\end{proof}

\paragraph{Step 3} Summarizing the above relations, and using the property that there are only $\log_2 T$ scales. 

\begin{lemma}\label{lemma:path_fixed_t}
For any $m$, $T=2^m$ and $u_{1:T}\in\R^{dT}$, Algorithm~\ref{alg:haar_fix_T} with the hyperparameter $\eps=1$ guarantees
\begin{equation*}
\reg_T(u_{1:T})=\tilde O\rpar{\norm{\bar u}_2\sqrt{T}+\sqrt{P\bar S}}.
\end{equation*}
\end{lemma}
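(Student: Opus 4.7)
The plan is to combine the generic sparsity adaptive bound (Lemma~\ref{lemma:haar_fixed_t}, equivalently the transform-domain form in Eq.(\ref{eq:haar_generic})) with the three structural results that have just been established, namely Lemma~\ref{lemma:single_j_l}, Lemma~\ref{lemma:p}, and Lemma~\ref{lemma:s}. The static part of the comparator and the dynamic part are handled by essentially disjoint pieces of this toolkit, so I would treat them separately.

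First I would note that Eq.(\ref{eq:haar_generic}) with $\eps=1$ already gives
\begin{equation*}
\reg_T(u_{1:T})\leq \tilde O\!\left(\|\hat u^{*}\|_2 + \sum_{j=1}^{\log_2 T}\sum_{l=1}^{2^{-j}T}\|\hat u^{(j,l)}\|_2\right).
\end{equation*}
The first term is exactly $\|\bar u\|_2\sqrt{T}$ by Eq.(\ref{eq:u_all_one}), which is precisely the static component of the claimed bound. So the entire task reduces to showing that the double sum over scales and locations is $\tilde O(\sqrt{P\bar S})$.

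For the dynamic term, I would first apply Lemma~\ref{lemma:single_j_l} location-wise within each scale, which contracts the inner sum to a single scale quantity:
\begin{equation*}
\sum_{l=1}^{2^{-j}T}\|\hat u^{(j,l)}\|_2\leq 2^{-1/2}\sqrt{P^{(j)}\bar S^{(j)}}.
\end{equation*}
Then I would invoke Lemma~\ref{lemma:p} and Lemma~\ref{lemma:s} to dominate each per-scale quantity by the global regularity: $P^{(j)}\leq P$ and $\bar S^{(j)}\leq \bar S$. Summing over the $\log_2 T$ scales yields
\begin{equation*}
\sum_{j,l}\|\hat u^{(j,l)}\|_2\leq \frac{\log_2 T}{\sqrt{2}}\sqrt{P\bar S}=\tilde O(\sqrt{P\bar S}),
\end{equation*}
which, combined with the static piece, delivers the lemma.

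There is no serious obstacle at this stage, since the genuinely non-trivial work has already been discharged by Lemmas~\ref{lemma:single_j_l}, \ref{lemma:p}, and \ref{lemma:s}. The only design choice worth flagging is that the Cauchy--Schwarz in Lemma~\ref{lemma:single_j_l} is applied \emph{within} a scale before summing over scales; trying it the other way (first sum over scales, then bound by a global regularity) would cost an extra $\sqrt{\log T}$ factor compared to this order, but since the bound is $\tilde O(\cdot)$ anyway, either route is acceptable. The analogous extension to arbitrary (non-dyadic) $T$ will be handled by the doubling-trick argument used in Theorem~\ref{thm:switching}, where local-average regularity inequalities of the type in Eq.(\ref{eq:offset_e_bar}) control the price of splitting $u_{1:T}$ across dyadic blocks.
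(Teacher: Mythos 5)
Your proposal is correct and follows essentially the same route as the paper's proof: start from Eq.(\ref{eq:haar_generic}), identify $\norm{\hat u^*}_2=\norm{\bar u}_2\sqrt{T}$ via Eq.(\ref{eq:u_all_one}), bound each scale's contribution by $\sqrt{P^{(j)}\bar S^{(j)}}\leq\sqrt{P\bar S}$ using Lemmas~\ref{lemma:single_j_l}, \ref{lemma:p} and \ref{lemma:s}, and sum over the $\log_2 T$ scales. The remark about deferring non-dyadic $T$ to the doubling-trick argument is also consistent with how the paper structures Theorem~\ref{thm:path}.
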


\begin{proof}[Proof of Lemma~\ref{lemma:path_fixed_t}]
Starting from the generic regret bound, Eq.(\ref{eq:haar_generic}) for Algorithm~\ref{alg:haar_fix_T}.
\begin{equation*}
\reg_T(u_{1:T})\leq \eps G+G\rpar{\norm{\hat u^{*}}_2+\sum_{j,l}\norm{\hat u^{(j,l)}}_2}\cdot\polylog\rpar{M, T, \eps^{-1}}.
\end{equation*}
Due to Eq.(\ref{eq:u_all_one}), $\norm{\hat u^*}_2=\norm{\bar u}_2\sqrt{T}$. Then, combining Lemma~\ref{lemma:single_j_l}, \ref{lemma:p} and \ref{lemma:s},
\begin{equation*}
\sum_{j,l}\norm{\hat u^{(j,l)}}_2\leq O\rpar{\sqrt{P\bar S}\log_2 T}.
\end{equation*}
Plugging it into the generic bound completes the proof.
\end{proof}

\subsubsection{Anytime bound}

Now we are ready to prove an anytime unconstrained dynamic regret bound that depends on the path length. 

\path*

\begin{proof}[Proof of Theorem~\ref{thm:path}]
Similar to the analysis of the switching regret (Theorem~\ref{thm:switching}), we first consider the situation where the time horizon $T$ can be exactly decomposed into $m^*$ segments with dyadic lengths $2^1,\ldots,2^{m^*}$. In this situation, we have
\begin{align*}
\reg_T(u_{1:T})&\leq \tilde O\spar{\sum_{m=1}^{m^*}\norm{\bar u_m}_2\sqrt{2^m}+\sum_{m=1}^{m^*}\sqrt{P_m\bar S_m}}\\
&\leq \tilde O\spar{\norm{\bar u}_2\sqrt{T}+\sqrt{\bar E}+\sum_{m=1}^{m^*}\sqrt{P_m\bar S_m}},
\end{align*}
where the second line follows from the proof of Theorem~\ref{thm:switching}, specifically Eq.(\ref{eq:standard_doubling}) and Eq.(\ref{eq:offset_e_bar}).

Now let us consider the remaining sum on the RHS. Using Cauchy-Schwarz, 
\begin{equation*}
\sum_{m=1}^{m^*}\sqrt{P_m\bar S_m}\leq \sqrt{\rpar{\sum_{m=1}^{m^*}P_m}\rpar{\sum_{m=1}^{m^*}\bar S_m}}\leq \sqrt{P\rpar{\sum_{m=1}^{m^*}\sum_{t=2^{m}-1}^{2^{m+1}-2}\norm{u_t-\bar u_m}_2}},
\end{equation*}
where
\begin{equation*}
\sum_{m=1}^{m^*}\sum_{t=2^{m}-1}^{2^{m+1}-2}\norm{u_t-\bar u_m}_2\leq \sum_{m=1}^{m^*}\sum_{t=2^{m}-1}^{2^{m+1}-2}\rpar{\norm{u_t-\bar u}_2+\norm{\bar u_m-\bar u}_2}=\bar S+\sum_{m=1}^{m^*}2^m\norm{\bar u_m-\bar u}_2.
\end{equation*}
The last sum on the RHS is the first order variability of a locally averaged version of $u_{1:T}$. Due to Lemma~\ref{lemma:average},
\begin{equation*}
\sum_{m=1}^{m^*}2^m\norm{\bar u_m-\bar u}_2\leq \bar S.
\end{equation*}
Combining everything above, 
\begin{equation*}
\reg_T(u_{1:T})\leq \tilde O\rpar{\norm{\bar u}_2\sqrt{T}+\sqrt{\bar E}+\sqrt{P\bar S}}.
\end{equation*}

It remains to show that $\sqrt{\bar E}\leq \sqrt{P\bar S}$, thus the former can be absorbed into the latter. Plugging in the definitions, this is equivalent to showing
\begin{equation*}
\sum_{t=1}^T\norm{u_t-\bar u}_2^2\leq \sum_{t=1}^TP\norm{u_t-\bar u}_2,
\end{equation*}
and it suffices to prove $\norm{u_t-\bar u}\leq P$ for all $t\in[1:T]$. This is completed in Lemma~\ref{lemma:P_upper_bound}. Till this point, we have shown the desirable result in the situation of ``exact dyadic partitioning''. 

To complete the proof, we turn to the general situation where $T$ cannot be partitioned into dyadic blocks. This follows from a similar ``padding'' construction from the proof of Theorem~\ref{thm:switching}. Let $T^*=2^{\ceil{\log_2 T}}$, and by definition, $T^*\leq 2T$. Let us consider a hypothetical length $T^*$ game with the rounds $t>T$ constructed as follows: the loss gradient $g_t=0\in\R^d$, and $u_t=\bar u$. Then, the regret of any algorithm on the length $T^*$ hypothetical game equals its regret on the actual length $T$ game, and the regret bound for the former applies to the latter as well: if we write $P^*$ and $\bar S^*$ as the statistics of the extended length $T^*$ comparator, then
\begin{equation*}
\reg_T(u_{1:T})\leq \tilde O\rpar{\norm{\bar u}_2\sqrt{T^*}+\sqrt{P^*\bar S^*}}.
\end{equation*}
Clearly, $\bar S^*=\bar S$ and $T^*\leq 2T$. As for the path length, $P^*=P+\norm{u_T-\bar u}_2$, and due to Lemma~\ref{lemma:P_upper_bound}, $\norm{u_T-\bar u}_2\leq P$. Plugging it back completes the proof. 
\end{proof}

\subsection{Useful lemma}\label{subsection:useful_lemma}

Our analysis uses two auxiliary lemmas. First, we show that local averaging makes a signal ``more regular''. Consider any signal $u_{1:T}\in\R^{dT}$, with the $t$-th round component $u_t\in\R^d$. Local averaging refers to replacing any $k$ consecutive components of $u_{1:T}$ by their average, i.e., setting 
\begin{equation*}
u_{\tau+1},\ldots,u_{\tau+k}=k^{-1}\sum_{i=1}^{k}u_{\tau+i},
\end{equation*}
for some $\tau\in[0:T-k]$.

\begin{lemma}\label{lemma:average}
Let a signal $w_{1:T}\in\R^{dT}$ be the result of $u_{1:T}$ after local averaging, and $\bar w=T^{-1}\sum_{t=1}^Tw_t\in\R^d$. Then, the path length, the norm sum and the energy of $w_{1:T}$, including their centered versions, are all dominated by those of $u_{1:T}$. That is,
\begin{enumerate}
\item $\sum_{t=1}^{T-1}\norm{w_{t+1}-w_t}_2\leq \sum_{t=1}^{T-1}\norm{u_{t+1}-u_t}_2$;
\item $\sum_{t=1}^T\norm{w_t-\bar w}_2\leq \sum_{t=1}^T\norm{u_t-\bar u}_2$;
\item $\sum_{t=1}^T\norm{w_t-\bar w}^2_2\leq \sum_{t=1}^T\norm{u_t-\bar u}^2_2$.
\item $\sum_{t=1}^T\norm{w_t}_2\leq \sum_{t=1}^T\norm{u_t}_2$, and $\sum_{t=1}^T\norm{w_t}^2_2\leq \sum_{t=1}^T\norm{u_t}^2_2$.
\end{enumerate}
\end{lemma}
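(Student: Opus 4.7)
}

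The plan is to handle the four items through two distinct strategies: items 2--4 reduce to pointwise convexity estimates, while item 1 (path length) requires a more careful telescoping argument at the boundary of the averaged block. Throughout, I will write $\bar a = k^{-1}\sum_{i=1}^k u_{\tau+i}$ for the local average, so that $w_t = u_t$ for $t \notin [\tau+1:\tau+k]$ and $w_t = \bar a$ for $t \in [\tau+1:\tau+k]$.

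The first observation I would record is that $\bar w = \bar u$, because the sum $\sum_{t=1}^T w_t$ reassembles to $\sum_{t=1}^T u_t$. This is crucial: it means the centered quantities in items 2 and 3 are computed with respect to the \emph{same} centering vector $\bar u$ on both sides. Granting this, items 2--4 all follow from the same template. For item 2, the two sides differ only on the block $[\tau+1:\tau+k]$, where the $w$-side contributes $k\norm{\bar a - \bar u}_2$; convexity of $\norm{\cdot}_2$ combined with the definition of $\bar a$ gives $\norm{\bar a - \bar u}_2 \leq k^{-1}\sum_{i=1}^k \norm{u_{\tau+i}-\bar u}_2$, which is exactly what is needed. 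Item 3 is identical but applied to the convex function $\norm{\cdot}_2^2$ (Jensen's inequality). Item 4 follows by the same argument with $\bar u$ replaced by $0$.

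The main obstacle, as expected, is item 1 (the path length bound). Here the averaged block contributes zero to the $w$-path, but the two \emph{boundary} increments $\norm{\bar a - u_\tau}_2$ and $\norm{u_{\tau+k+1} - \bar a}_2$ need to be controlled by the full $u$-path across indices $\tau,\tau+1,\ldots,\tau+k+1$. My plan is to expand $\bar a - u_\tau = k^{-1}\sum_{i=1}^k (u_{\tau+i}-u_\tau)$, apply the triangle inequality to get $\norm{\bar a - u_\tau}_2 \leq k^{-1}\sum_{i=1}^k \sum_{j=1}^{i}\norm{u_{\tau+j}-u_{\tau+j-1}}_2$, and then swap the order of summation to obtain $\sum_{j=1}^k \tfrac{k-j+1}{k}\norm{u_{\tau+j}-u_{\tau+j-1}}_2$. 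A symmetric decomposition of $u_{\tau+k+1}-\bar a$ yields $\sum_{j=2}^{k+1}\tfrac{j-1}{k}\norm{u_{\tau+j}-u_{\tau+j-1}}_2$. Adding the two: every coefficient evaluates to exactly $1$ (including the endpoints $j=1$ and $j=k+1$), so the sum of boundary terms in the $w$-path is dominated by $\sum_{j=1}^{k+1}\norm{u_{\tau+j}-u_{\tau+j-1}}_2$, which is the part of the $u$-path we are comparing against.

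The remaining bookkeeping is routine: I would handle the edge cases $\tau=0$ and $\tau+k=T$ (only one boundary exists) by running the same telescoping estimate on a single side, and then observe that iterating this single-block claim yields the lemma for any finite sequence of local averagings, should that be needed later. The harder conceptual point is really the coefficient matching in the telescoping step for path length; everything else is a direct application of Jensen's inequality combined with the mean-preserving property $\bar w = \bar u$.
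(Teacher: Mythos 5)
Your proposal is correct and takes essentially the same approach as the paper: both rest on the mean-preserving property $\bar w=\bar u$, handle items 2--4 by Jensen/convexity applied to the averaged block, and handle item 1 by expanding each boundary increment via the triangle inequality and telescoping along the $u$-path. The only cosmetic difference is in item 1, where the paper pairs the two complementary telescoping segments $\norm{u_\tau-u_{\tau+i}}_2+\norm{u_{\tau+i}-u_{\tau+k+1}}_2$ for each $i$ and averages, whereas you track the coefficients $\tfrac{k-j+1}{k}$ and $\tfrac{j-1}{k}$ explicitly and observe they sum to one; both yield the identical bound.
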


\begin{proof}[Proof of Lemma~\ref{lemma:average}]
Starting from the first part of the lemma, we prove for the general case of $0<\tau<T-k$. The boundary cases ($\tau=0$ and $\tau=T-k$) are analogous. 

Local averaging only affects the path length caused by the averaged entries $u_{\tau+1},\ldots,u_{\tau+k}$, and the two entries $u_{\tau}$ and $u_{\tau+k+1}$ right besides averaging boundary; this original path length quantity in $u_{1:T}$ is $\sum_{i=0}^{k}\norm{u_{\tau+i+1}-u_{\tau+i}}_2$. After averaging, the path length among these entries becomes
\begin{align*}
&\norm{u_{\tau}-k^{-1}\sum_{i=1}^{k}u_{\tau+i}}_2+\norm{k^{-1}\sum_{i=1}^{k}u_{\tau+i}-u_{\tau+k+1}}_2\\
=~&k^{-1}\norm{\sum_{i=1}^{k}\rpar{u_{\tau}-u_{\tau+i}}}_2+k^{-1}\norm{\sum_{i=1}^{k}\rpar{u_{\tau+i}-u_{\tau+k+1}}}_2\\
\leq~&k^{-1}\sum_{i=1}^{k}\rpar{\norm{u_{\tau}-u_{\tau+i}}_2+\norm{u_{\tau+i}-u_{\tau+k+1}}_2}\\
\leq~&k^{-1}\sum_{i=1}^{k}\rpar{\sum_{j=0}^{k}\norm{u_{\tau+j+1}-u_{\tau+j}}_2}\\
=~& \sum_{i=0}^{k}\norm{u_{\tau+i+1}-u_{\tau+i}}_2.
\end{align*}

Now consider the second part of the lemma. After local averaging, $\bar w=\bar u$. The affected part of the signal contributes to the following first order variability
\begin{equation*}
\sum_{t=1}^k\norm{w_{\tau+i}-\bar w}_2=k\norm{k^{-1}\sum_{i=1}^{k}u_{\tau+i}-\bar u}_2= \norm{\sum_{i=1}^{k}u_{\tau+i}-k\bar u}_2\leq \sum_{t=1}^k\norm{u_{\tau+i}-\bar u}_2.
\end{equation*}

As for the third part of the lemma, 
\begin{equation*}
\sum_{t=1}^k\norm{w_{\tau+i}-\bar w}_2^2=k\norm{k^{-1}\sum_{i=1}^{k}u_{\tau+i}-\bar u}^2_2\leq k^{-1}\rpar{\sum_{i=1}^{k}\norm{u_{\tau+i}-\bar u}_2}^2\leq \sum_{t=1}^k\norm{u_{\tau+i}-\bar u}^2_2,
\end{equation*}
where the last inequality is due to AM-QM inequality.

The final part of the proof is the uncentered version of Part 2 and 3, which follows the same steps. In fact, any fixed reference point (for the variability) works, i.e., for all $v\in\R^d$, 
\begin{equation*}
\sum_{t=1}^T\norm{w_t-v}_2\leq \sum_{t=1}^T\norm{u_t-v}_2,
\end{equation*}
\begin{equation*}
\sum_{t=1}^T\norm{w_t-v}^2_2\leq \sum_{t=1}^T\norm{u_t-v}^2_2.\qedhere
\end{equation*}
\end{proof}

We also use another simple lemma.

\begin{lemma}\label{lemma:P_upper_bound}
Consider any comparator sequence $u_{1:T}$. For all $t$, we have $\norm{u_t-\bar u}_2\leq P$. 
\end{lemma}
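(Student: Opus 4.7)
The plan is to exploit the averaging structure of $\bar u$ together with the triangle inequality to reduce the claim to the trivial fact that the distance between any two points along the trajectory is at most the total path length.

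First I would write $u_t - \bar u$ as an average of pairwise differences. Since $\bar u = T^{-1}\sum_{s=1}^T u_s$, we have
\begin{equation*}
u_t - \bar u = \frac{1}{T}\sum_{s=1}^T (u_t - u_s),
\end{equation*}
and the triangle inequality immediately gives
\begin{equation*}
\norm{u_t - \bar u}_2 \leq \frac{1}{T}\sum_{s=1}^T \norm{u_t - u_s}_2.
\end{equation*}

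Next I would bound each pairwise distance $\norm{u_t - u_s}_2$ by a telescoping argument along the trajectory. For any $s,t$, the triangle inequality along the indices between $s$ and $t$ gives
\begin{equation*}
\norm{u_t - u_s}_2 \leq \sum_{r=\min(s,t)}^{\max(s,t)-1}\norm{u_{r+1} - u_r}_2 \leq \sum_{r=1}^{T-1}\norm{u_{r+1}-u_r}_2 = P,
\end{equation*}
with the convention that the empty sum (when $s=t$) is zero, which is also $\leq P$.

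Finally, plugging this uniform bound back into the averaged inequality yields $\norm{u_t - \bar u}_2 \leq T^{-1}\sum_{s=1}^T P = P$, completing the proof. There is no real obstacle here; the only thing to watch is handling the boundary case $s=t$ cleanly, but that drops out immediately since the corresponding contribution is zero.
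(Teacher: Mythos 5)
Your proof is correct and follows essentially the same route as the paper's: bound $\norm{u_t-\bar u}_2$ by the average of the pairwise distances $\norm{u_t-u_s}_2$ via the triangle inequality, then bound each pairwise distance by $P$ through telescoping along the trajectory. You merely spell out the telescoping step (and the $s=t$ edge case) more explicitly than the paper does.
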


\begin{proof}[Proof of Lemma~\ref{lemma:P_upper_bound}]
Starting from the definition,
\begin{equation*}
\norm{u_t-\bar u}_2=\norm{u_t-\sum_{i=1}^TT^{-1}u_i}_2\leq T^{-1}\sum_{i=1}^T\norm{u_t-u_i}_2,
\end{equation*}
and for all $i,t\in[1:T]$, $\norm{u_t-u_i}_2\leq P$ due to triangle inequality. 
\end{proof}

\subsection{Quantitative example}\label{subsection:example}

This subsection presents details of our two quantitative examples, Example~\ref{example:outlier} and \ref{example:oscillation}.

\paragraph{Tracking outliers} We first calculate the relevant statistics of the comparator $u_{1:T}$. Note that we assume $k\leq \sqrt{T}$, and in this way, there is only a small amount of $u_t$ with large magnitude, which can then be called outliers. 
\begin{equation*}
\bar u=\frac{1}{T}\rpar{k\sqrt{T}+T-k},
\end{equation*}
$\abs{\bar u}=\Theta(1)$, $M=\sqrt{T}$, $K=1$ or $2$, $P=\Theta (\sqrt{T})$, $\bar E\leq E=kT+T-k=\Theta(kT)$. As for $S$ and $\bar S$, 
\begin{equation*}
S=k\sqrt{T}+T-k=\Theta(T),
\end{equation*}
\begin{align*}
\bar S&=k(\sqrt{T}-\bar u)+(T-k)(\bar u-1)\\
&=2kT^{-1}(T-k)(\sqrt{T}-1)\\
&\leq O(k\sqrt{T}). 
\end{align*}

Intuitively, we have $\abs{\bar u}=\Theta(1)$ while $M=\sqrt{T}$; $\bar S=O(k\sqrt{T})$ while $S=\Theta(T)$. This explains the improvements detailed next. For each algorithm considered in Table~\ref{table:comparison}, we evaluate both its switching regret bound and its path-length-dependent bound.
\begin{itemize}
\item The minimax algorithm \textsc{Ader} \cite{zhang2018adaptive} is not applicable, as $M$ grows with $T$ and can be larger than any fixed diameter $D$.
\item The $P$-dependent bound of the coarse baseline \cite[Algorithm~6]{jacobsen2022parameter}, c.f., Eq.(\ref{eq:known_first}), is
\begin{equation*}
\tilde O\rpar{\sqrt{(M+P)MT}}=\tilde O(T). 
\end{equation*}
With $P=O(KM)$, the resulting $K$-dependent bound is
\begin{equation*}
\tilde O\rpar{M\sqrt{(1+K)T}}=\tilde O(T).
\end{equation*}
\item The $P$-dependent bound of the fine baseline \cite[Algorithm~2]{jacobsen2022parameter}, c.f., Eq.(\ref{eq:known_second}), is
\begin{equation*}
\tilde O\rpar{\sqrt{(M+P)S}}=\tilde O\rpar{T^{3/4}}. 
\end{equation*}
With $P=O(KM)$, the resulting $K$-dependent bound is
\begin{equation*}
\tilde O\rpar{\sqrt{(1+K)MS}}=\tilde O(T^{3/4}).
\end{equation*}
\item Our path length bound is
\begin{equation*}
\tilde O\rpar{\abs{\bar u}\sqrt{T}+\sqrt{P\bar S}}=\tilde O\rpar{\sqrt{kT}}.
\end{equation*}
Same for our switching regret bound, 
\begin{equation*}
\tilde O\rpar{\abs{\bar u}\sqrt{T}+\sqrt{K\bar E}}=\tilde O\rpar{\sqrt{kT}}.
\end{equation*}
\end{itemize}

\paragraph{Persistent oscillation} Again, we calculate the statistics of the comparator $u_{1:T}$. $\bar u=1$, $M\leq 2$, $K=k$, $P=\Theta(k/\sqrt{T})$. Crucially, $\bar S=\sqrt{T}$ and $\bar E=\Theta(1)$, while $S=\Theta(T)$ and $E=\Theta(T)$. Here the $K$-dependent bounds of the baselines are loose compared to their corresponding $P$-dependent bounds, due to using the relation $P=O(KM)$. Therefore we will only evaluate their $P$-dependent bounds.
\begin{itemize}
\item Suppose one knows that $M\leq 2$ beforehand, then \textsc{Ader} can be applied with $D=2$. The regret bound is
\begin{equation*}
\tilde O\rpar{\sqrt{(D+P)DT}}=\tilde O\rpar{\sqrt{T}+k^{1/2}T^{1/4}}. 
\end{equation*}
\item One could check that the $P$-dependent bounds of the coarse and the fine baselines are also
\begin{equation*}
\tilde O\rpar{\sqrt{(M+P)MT}}=\tilde O\rpar{\sqrt{T}+k^{1/2}T^{1/4}},
\end{equation*}
\begin{equation*}
\tilde O\rpar{\sqrt{(M+P)S}}=\tilde O\rpar{\sqrt{T}+k^{1/2}T^{1/4}}.
\end{equation*}
\item For our algorithm, the $P$-dependent bound is
\begin{equation*}
\tilde O\rpar{\abs{\bar u}\sqrt{T}+\sqrt{P\bar S}}=\tilde O\rpar{\sqrt{T}}.
\end{equation*}
The $K$-dependent bound is
\begin{equation*}
\tilde O\rpar{\abs{\bar u}\sqrt{T}+\sqrt{K\bar E}}=\tilde O\rpar{\sqrt{T}}.
\end{equation*}
\end{itemize}

\section{Application: Fine-tuning time series forecaster}\label{section:application}

This section presents an application of our framework in time series forecasting.\footnote{Code is available at \url{https://github.com/zhiyuzz/NeurIPS2023-Sparse-Coding}.} Roughly speaking, we aim to address the following question:
\begin{center}
Given a \emph{black box} forecaster, can we make it provably robust against (structured) nonstationarity? 
\end{center}

Along the way, our objective is to show that
\begin{itemize}
\item Simultaneously handling unconstrained domains and dynamic comparators in online learning brings downstream benefits in time series forecasting. 
\item Our sparse coding framework can enhance empirically developed forecasting strategies. 
\end{itemize}

\paragraph{Setting} Let us consider the following forecasting problem, which resembles the online learning game introduced at the beginning of this paper. The difference is that, here, we further assume access to a black box forecaster $\A$. In each (the $t$-th) round,
\begin{enumerate}
\item The black box forecaster $\A$ produces a prediction $a_t\in\R^d$ based on the observed history ($z_{1:{t-1}}$ and $l_{1:t-1}$).
\item After observing $a_t$, we make a prediction $x_t\in\R^d$.
\item The environment reveals a true value $z_t\in\R^d$ and a convex loss function $l_t:\R^d\rightarrow\R$. $l_t$ is $G$-Lipschitz with respect to $\norm{\cdot}_2$, and $z_t$ is one of its minimizers satisfying $l_t(z_t)=0$.
\end{enumerate}
Our goal is to achieve low total loss $\sum_{t=1}^Tl_t(x_t)$. Since trivially picking $x_t=a_t$ already achieves a total loss of $\sum_{t=1}^Tl_t(a_t)$, our goal is to improve it in certain situations, by designing a more sophisticated prediction rule based on $a_t$. 

\paragraph{Intuition} In the above setting, $\A$ can be \emph{any} algorithm that predicts $z_{1:T}$ in a reasonable, but non-robust manner. Taking the weather forecasting for example, there are a few notable cases. 
\begin{itemize}
\item $\A$ is a simulator of the governing meteorological equations, which uses the online observations $z_{1:t-1}$ as boundary conditions.
\item $\A$ is an autoregressive model, which predicts a linear combination of the past observations. The coefficients are determined by statistical modeling. 
\item $\A$ is a large deep learning model trained on offline datasets (e.g., the weather history at geographically similar locations). 
\end{itemize}
Even though such forecasters typically lack performance guarantees, their predictions can be used to construct time-varying Bayesian priors (see our discussion in the Introduction): given $a_t$, we will apply a \emph{fine-tuning} adjustment $\delta_t$ to predict $x_t=a_t+\delta_t$. Intuitively, the total loss is low if $a_t$ is close to the true value $z_t$, i.e., when the prior is good. 

\paragraph{Reduction to unconstrained dynamic regret} Concretely, if $x_t=a_t+\delta_t$, then due to convexity, for all subgradients $g_t\in\partial l_t(x_t)$ we have $l_t(x_t)-l_t(z_t)\leq \inner{g_t}{\delta_t}-\inner{g_t}{z_t-a_t}$. The RHS is the instantaneous regret of $\delta_t$ in an OLO problem with loss gradient $g_t$ and comparator $z_t-a_t$. Applying our unconstrained dynamic OLO algorithm, the total loss in forecasting can be bounded as
\begin{equation*}
\sum_{t=1}^Tl_t(x_t)\leq \reg_T(z_{1:T}-a_{1:T}).
\end{equation*}
That is, the total loss bound adapts to the complexity of the \emph{error sequence} $z_{1:T}-a_{1:T}$ (of the given black box forecaster). This contains $a_{1:T}=0$ as a special case, where no side information is assumed. 

Let us compare this bound to the baseline $\sum_{t=1}^Tl_t(a_t)$, which corresponds to trivially picking $x_t=a_t$. 
\begin{itemize}
\item If $z_{1:T}=a_{1:T}$, i.e., the black box $\A$ is perfect, then the baseline loss is $\sum_{t=1}^Tl_t(a_t)=0$. In this case, due to Theorem~\ref{thm:sizen_complete}, our general sparse coding framework guarantees $\sum_{t=1}^Tl_t(x_t)\leq \eps G$, where $\eps>0$ is an arbitrary hyperparameter. That is, our algorithm is worse than the baseline by at most a constant. 
\item If $z_{1:T}\neq a_{1:T}$, then in general, the baseline loss $\sum_{t=1}^Tl_t(a_t)$ is linear in $T$. In contrast, our algorithm could guarantee a sublinear $\reg_T(z_{1:T}-a_{1:T})$, thus also a sublinear total loss, when the error sequence $z_{1:T}-a_{1:T}$ is structurally simple (e.g., sparse under a transform, or low path length) with respect to our prior knowledge. 
\end{itemize}

In summary, the idea is that by sacrificing at most a constant loss when $\A$ is perfect ($z_{1:T}=a_{1:T}$), we could robustify $\A$ against certain structured unseen environments, improving the linear total loss to a sublinear rate. 

\paragraph{Importance of unconstrained domain} The above application critically relies on the ability of our algorithm to handle unconstrained domains. To demonstrate this, suppose we instead use the bounded domain algorithm from \cite{zhang2018adaptive} to pick the fine-tuning adjustment $\delta_t$. Then, the above analysis only holds if an upper bound $D$ of the maximum error $\max_t\norm{z_t-a_t}_2$ is known a priori -- this is a stringent requirement in practice. Furthermore, when $z_{1:T}=a_{1:T}$, such an alternative approach only guarantees $\sum_{t=1}^Tl_t(x_t)\leq\tilde O(D\sqrt{T})$, which is considerably worse than the baseline $0$. In other words, the alternative fine-tuning strategy could ruin the black box forecaster $\A$, when the latter performs well.

In the rest of this section, we present experiments for this time series application. Appendix~\ref{subsection:power_law} demonstrates the power law phenomenon, which shows that both the time series $z_{1:T}$ and the error sequence $z_{1:T}-a_{1:T}$ could exhibit exploitable structures. This implies good performance guarantees using our theoretical framework. Appendix~\ref{subsection:forecasting} goes one step further by actually testing the fine-tuning performance of our algorithm. 

\subsection{Power law phenomenon}\label{subsection:power_law}

This subsection further verifies the power law phenomenon discussed in Section~\ref{subsection:main}, with both wavelet and Fourier dictionaries. The goal is to present concrete examples where signal structures can be exploited by our framework, generating more interpretable, sublinear regret bounds. 

\paragraph{Wavelet dictionary} We first verify the power law on the Haar wavelet dictionary. Intuitively it is suitable when the dynamics of the environment exhibits switching behavior. To this end, consider the following stochastic time series model
\begin{equation}\label{eq:wavelet_time_series}
z_t=z_{t-1}\beta_t+\zeta_t,
\end{equation}
where $\{\beta_t\}$ and $\{\zeta_t\}$ are iid random variables satisfying $\zeta_t\sim\mathrm{Uniform}(-q,q)$ and
\begin{equation*}
\beta_t=\begin{cases}
-1, & \textrm{w.p.}\quad p,\\
1, & \textrm{w.p.}\quad 1-p.
\end{cases}
\end{equation*}

Picking $T=2^{15}=32768$, $p=0.0005$ and $q=0.005$, we generate four sample paths of $z_{1:T}$ using four \emph{arbitrary} random seeds (2020, 2021, 2022 and 2023), and the obtained time domain signals are plotted in the first row of Figure~\ref{fig:PL_wavelet}. As the switching probability $p$ is chosen to be low enough, all the sample paths exhibit a small amount of sharp switches, corrupted by the noise term $\zeta_t$. According to our intuition from signal processing, the Haar wavelet transform of these signals is sparse. 

Now let us verify this intuition. We take the Haar wavelet transform of these signals, sort the transform domain coefficients and plot the results on log-log scales -- these are shown as the solid blue lines in the second row of Figure~\ref{fig:PL_wavelet}. Using the largest 100 transform domain coefficients on each plot, we fit a liner model using least square, which is shown as the dashed orange line. The slope of each line is $-\alpha$, where $\alpha$ is displayed in the legend. It can be seen that for all four sample paths, the fitted $\alpha$ is within $(0.5,1)$, thus justifying the power law phenomenon \cite{price_2021}. Given $\alpha$, the regret of our Haar wavelet algorithm is $\tilde O(T^{1.5-\alpha})$, as shown in Section~\ref{subsection:main}. 

As for the implication in time series forecasting, let us consider forecasting $z_{1:T}$ with $a_{1:T}=0$, i.e., without the external forecaster $\A$. Given the power law, the total forecasting loss of our fine-tuning approach is $\sum_{t=1}^Tl_t(x_t)\leq\tilde O(T^{1.5-\alpha})$.

We also remark that although only four sample paths are demonstrated, we observe the power law phenomenon on all random seeds we tried in the experiment. 

\begin{figure}[ht]
     \centering
     \begin{subfigure}[b]{0.24\textwidth}
         \centering
         \includegraphics[width=\textwidth]{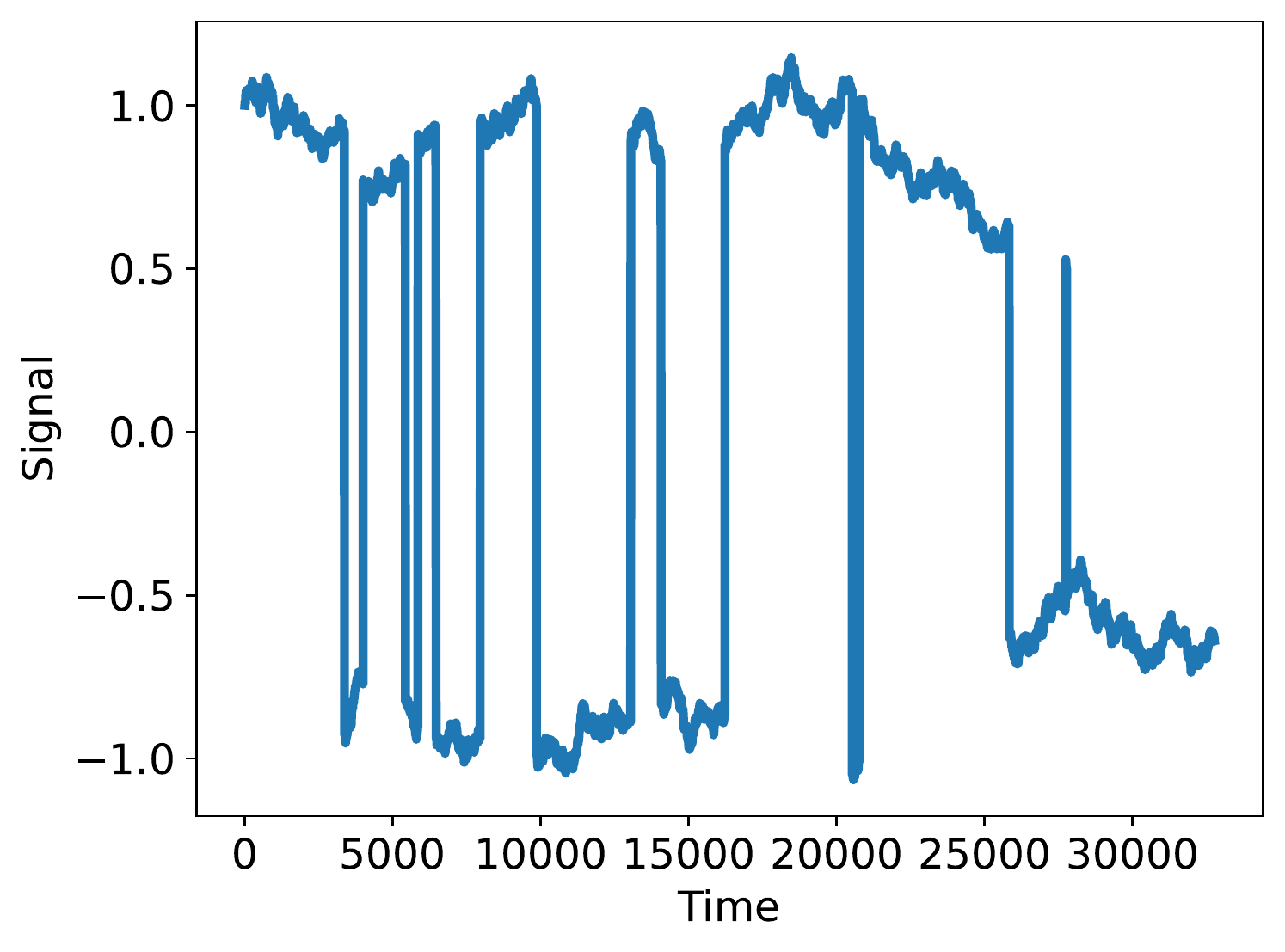}
     \end{subfigure}
     \hfill
     \begin{subfigure}[b]{0.24\textwidth}
         \centering
         \includegraphics[width=\textwidth]{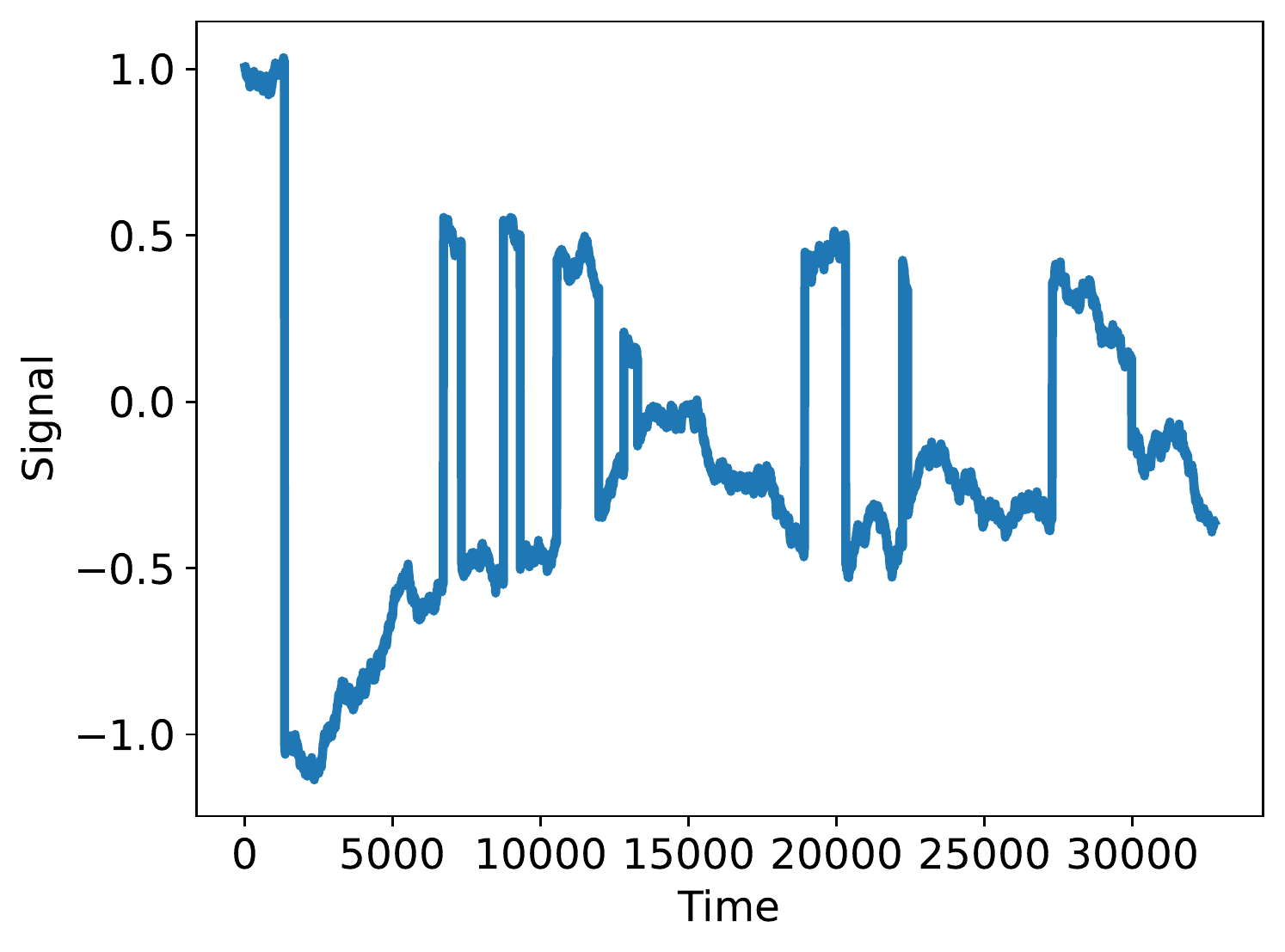}
     \end{subfigure}
     \hfill
     \begin{subfigure}[b]{0.24\textwidth}
         \centering
         \includegraphics[width=\textwidth]{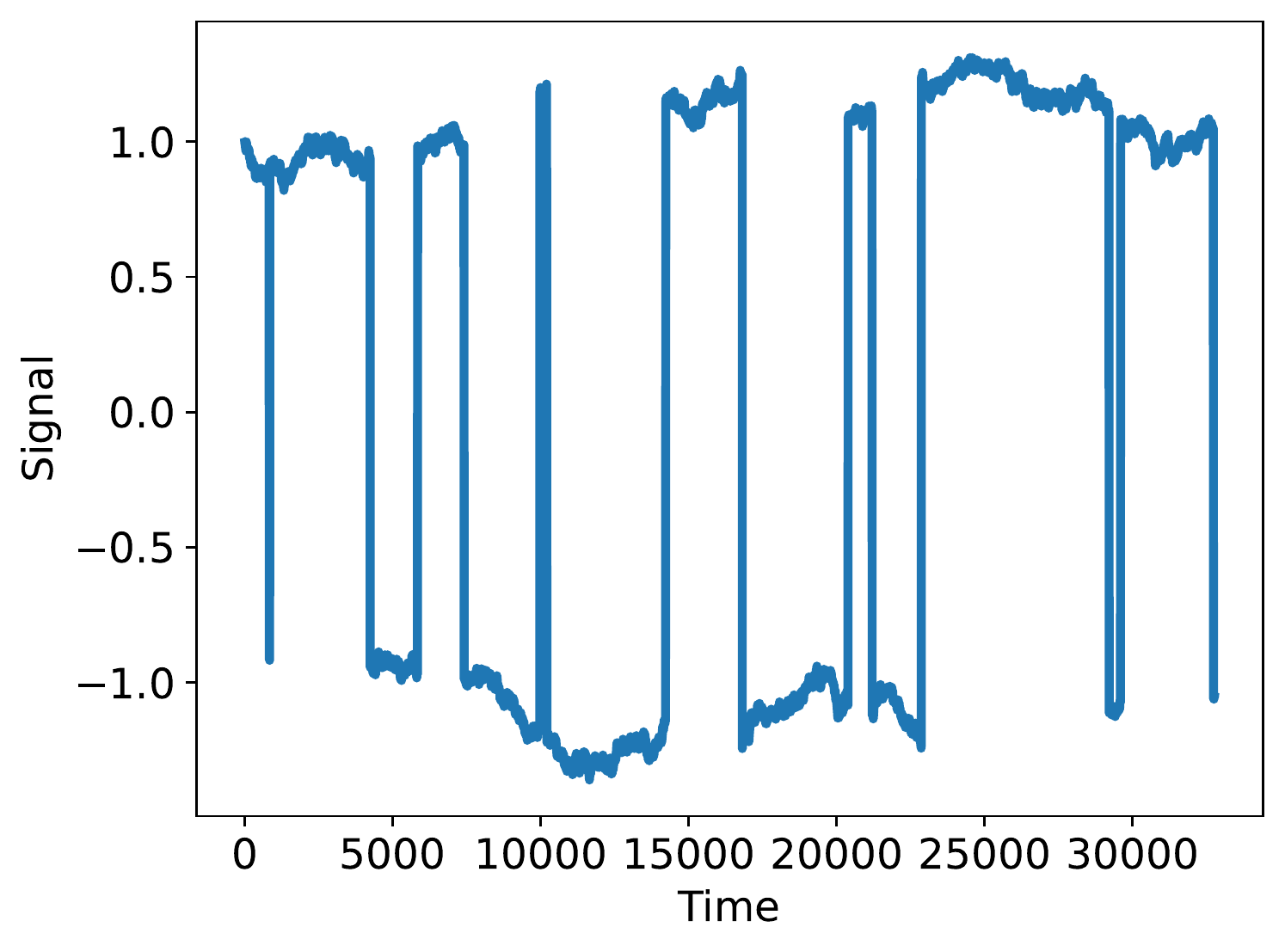}
     \end{subfigure}     
     \hfill
     \begin{subfigure}[b]{0.24\textwidth}
         \centering
         \includegraphics[width=\textwidth]{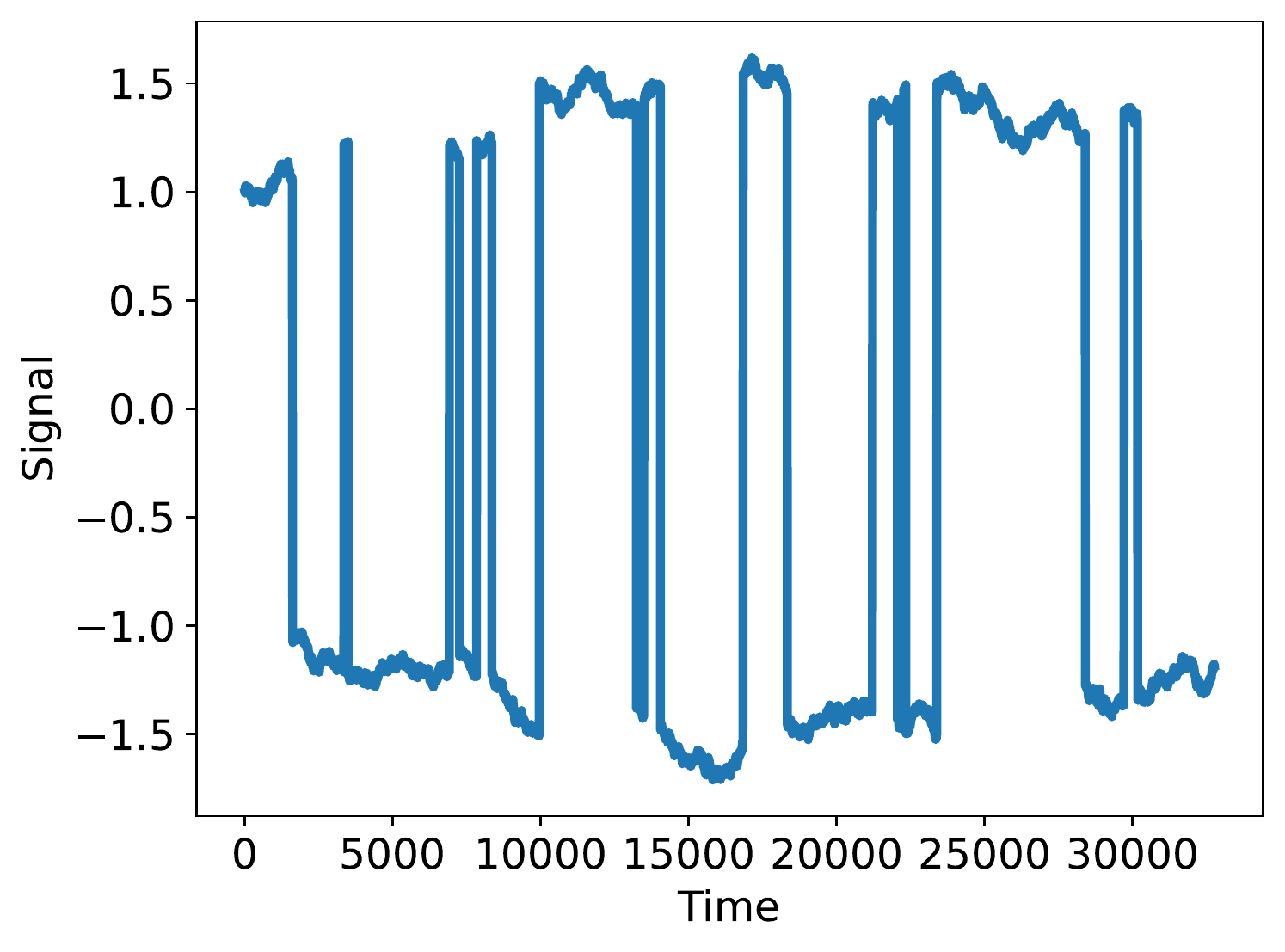}
     \end{subfigure}\\
    \begin{subfigure}[b]{0.24\textwidth}
         \centering
         \includegraphics[width=\textwidth]{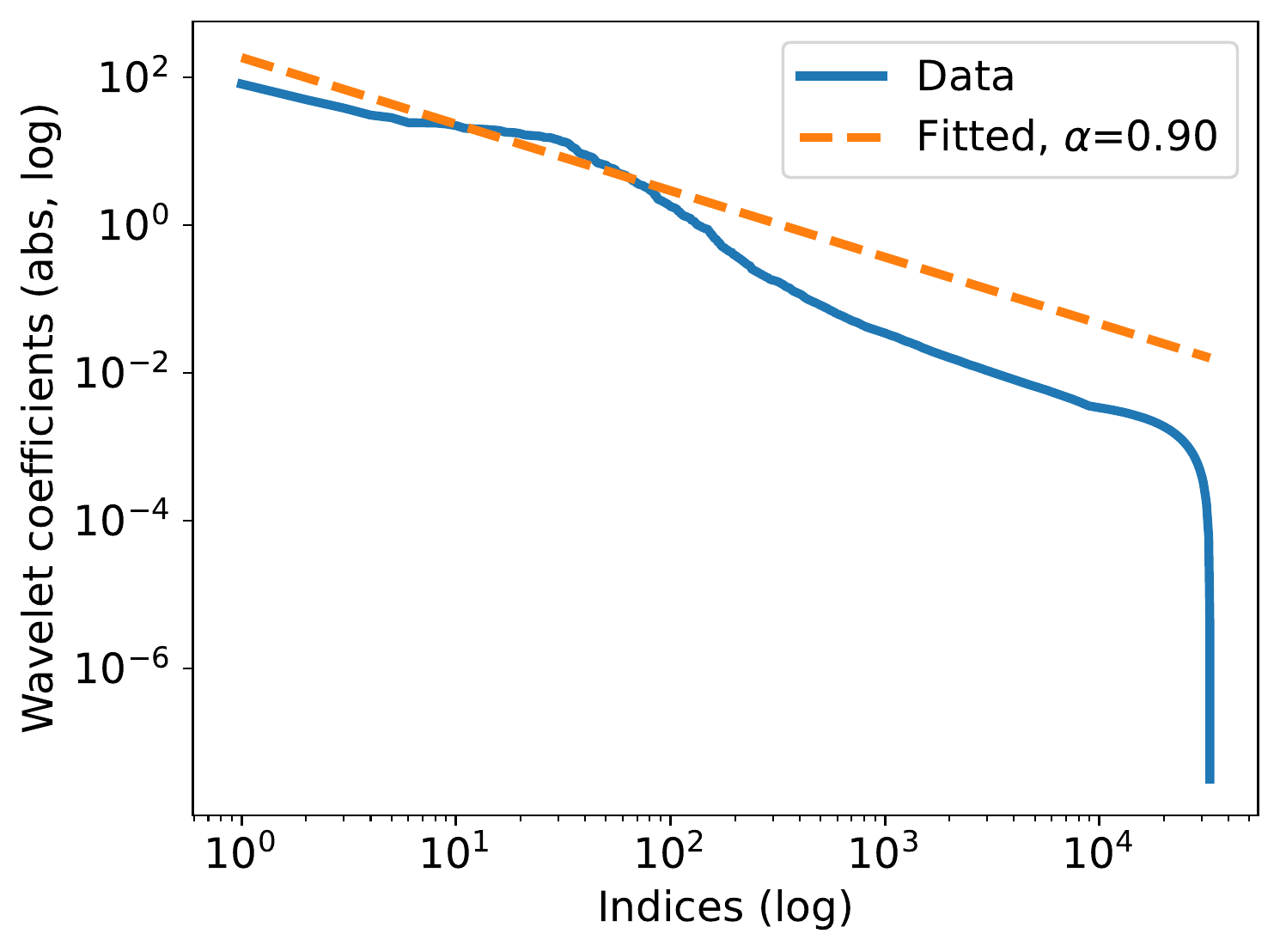}
         \caption{Seed$=2020$.}
     \end{subfigure}
     \hfill
     \begin{subfigure}[b]{0.24\textwidth}
         \centering
         \includegraphics[width=\textwidth]{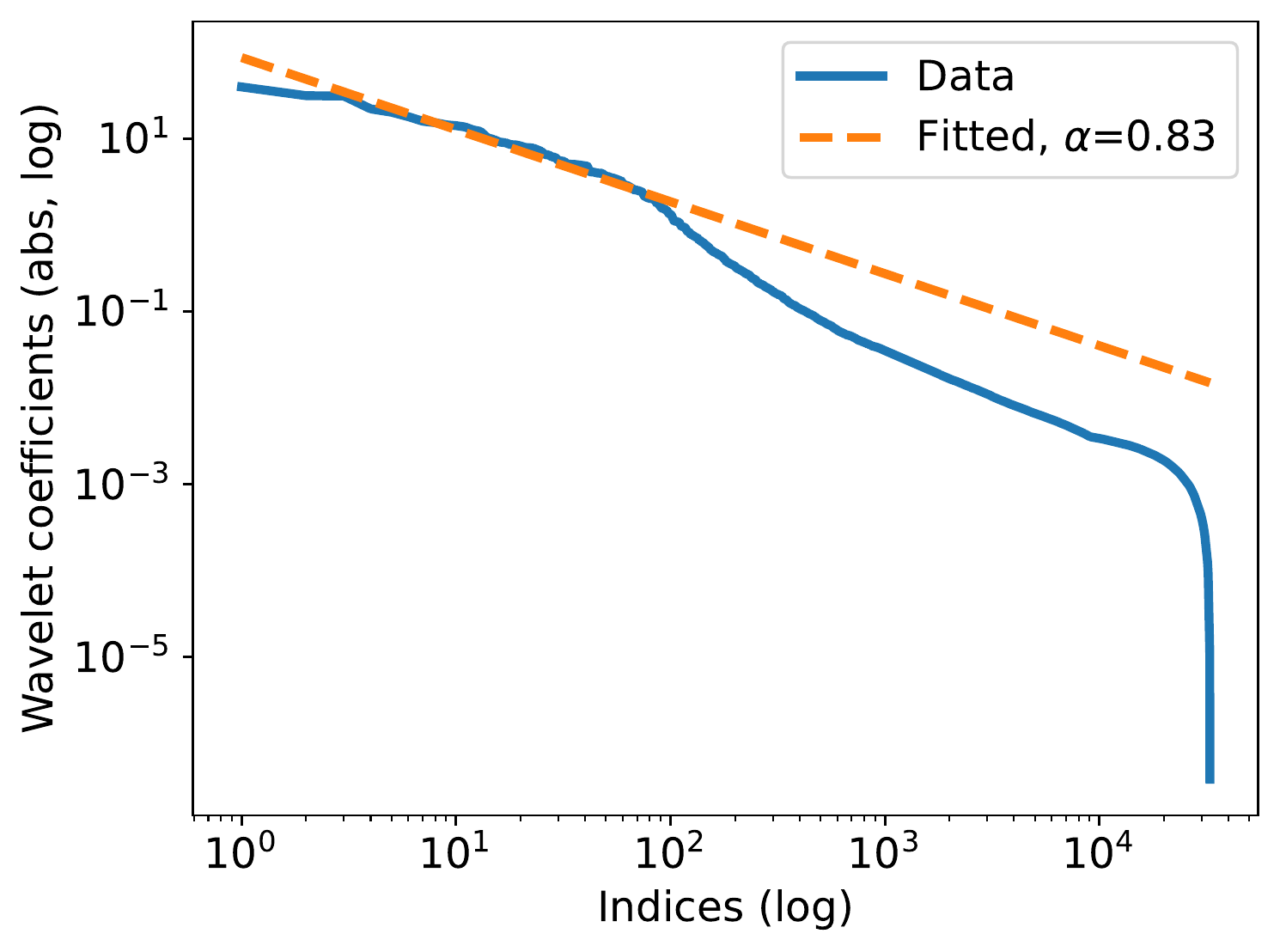}
         \caption{Seed$=2021$.}
     \end{subfigure}
     \hfill
     \begin{subfigure}[b]{0.24\textwidth}
         \centering
         \includegraphics[width=\textwidth]{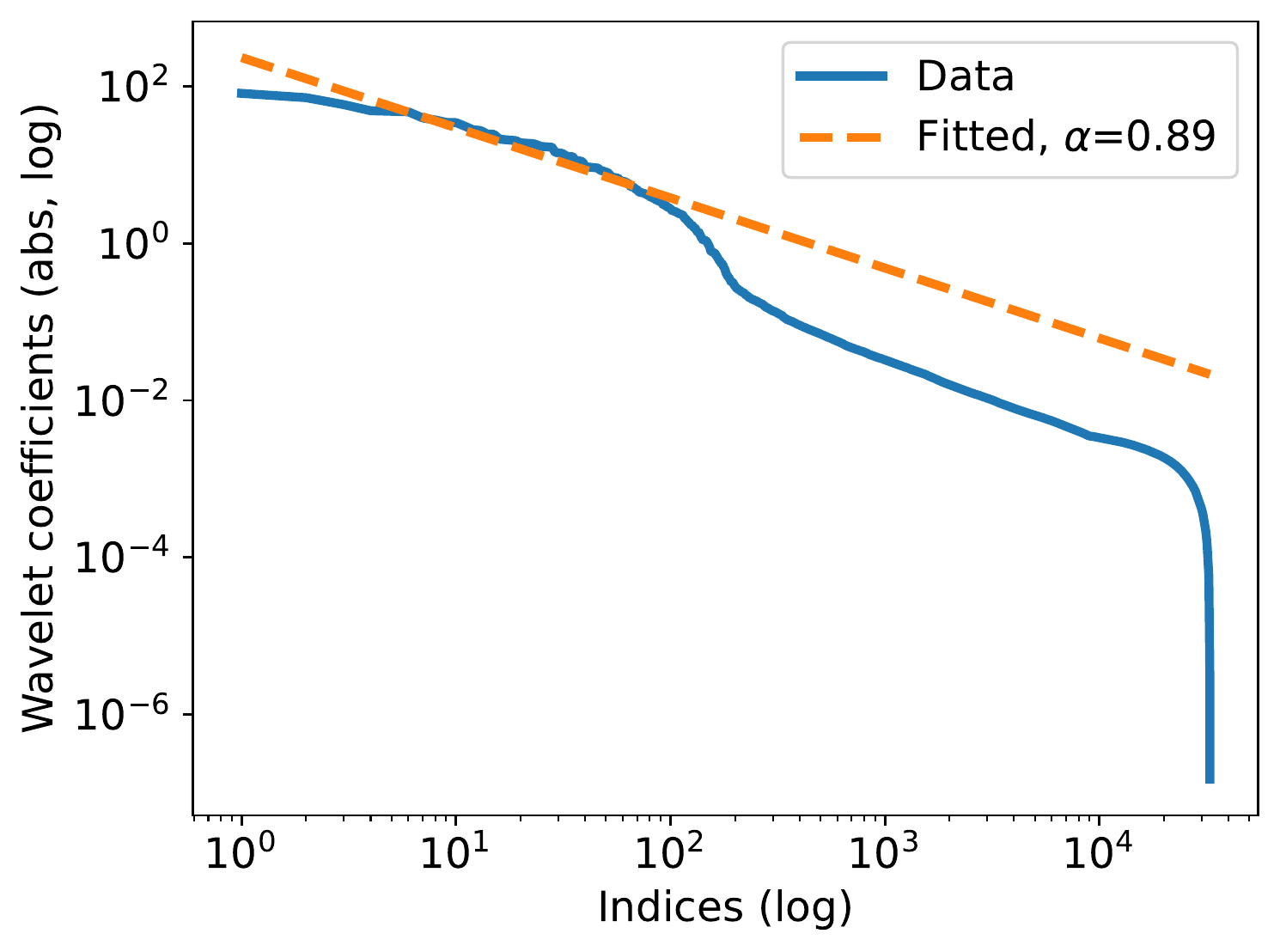}
         \caption{Seed$=2022$.}
     \end{subfigure}     
     \hfill
     \begin{subfigure}[b]{0.24\textwidth}
         \centering
         \includegraphics[width=\textwidth]{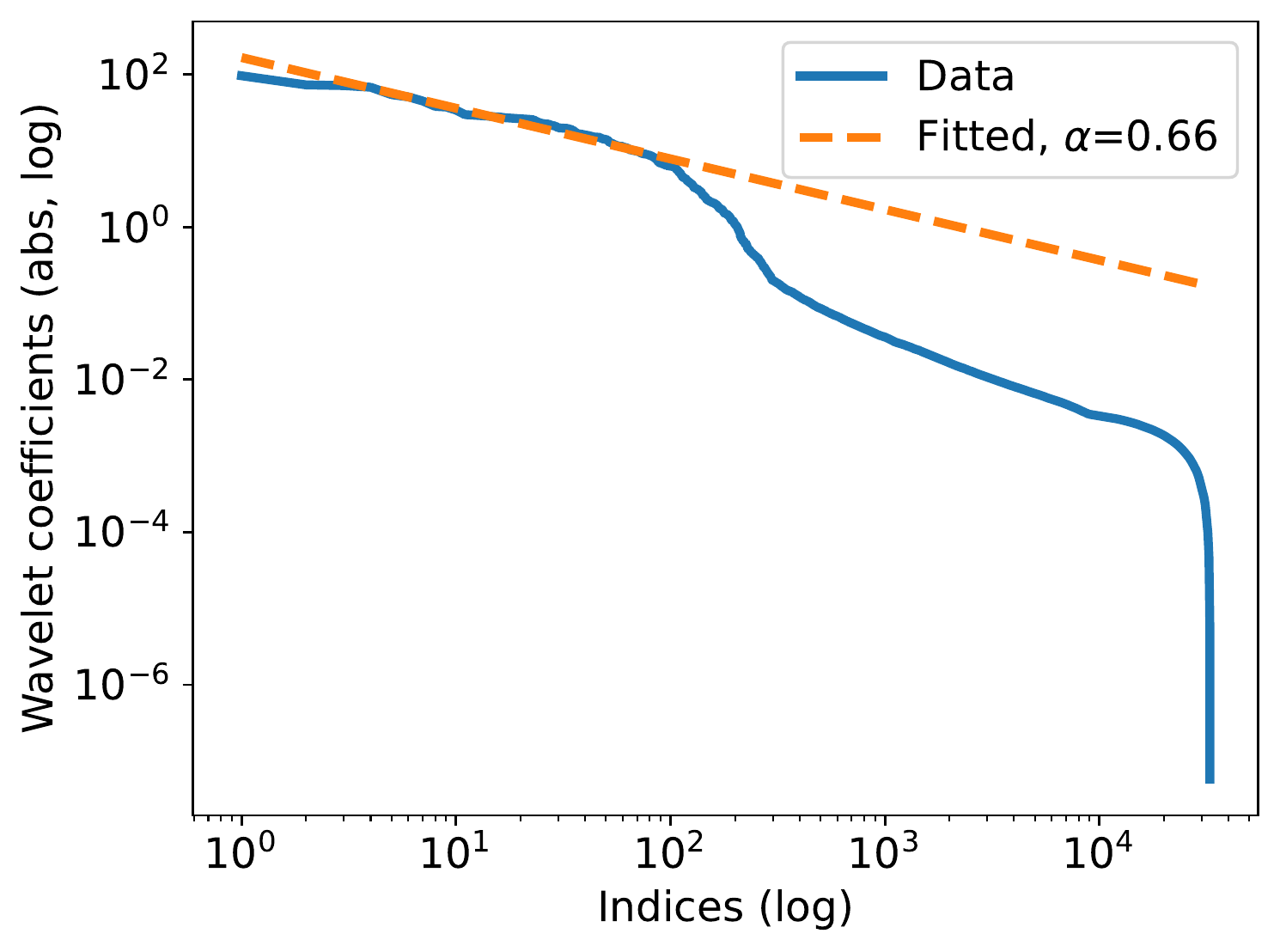}
         \caption{Seed$=2023$.}
     \end{subfigure}
    \caption{Verifying the power law on the Haar Wavelet dictionary. First row: time domain signals. Second row: sorted transform domain coefficients on a log-log plot. The dashed orange line is the best linear fit on the log-log plot, using the largest 100 transform domain coefficients. From left to right: four arbitrary random seeds.}
        \label{fig:PL_wavelet}
\end{figure}

\paragraph{Fourier dictionary} Next, we verify the power law on the Fourier dictionary. Here we use the Jena weather forecasting dataset,\footnote{Available at \url{https://www.bgc-jena.mpg.de/wetter/}.} which records the weather data at a German city, Jena, every 10 minutes. We take the data from Jan 1st, 2010 till July 1st, 2022, consisting of $T=656956$ time steps. Two different modalities, namely the temperature and the humidity, are considered. The actual temperature and humidity sequences are plotted in Figure~\ref{fig:time_domain}.

\begin{figure}[ht]
     \centering
     \begin{subfigure}[b]{0.49\textwidth}
         \centering
         \includegraphics[width=0.9\textwidth]{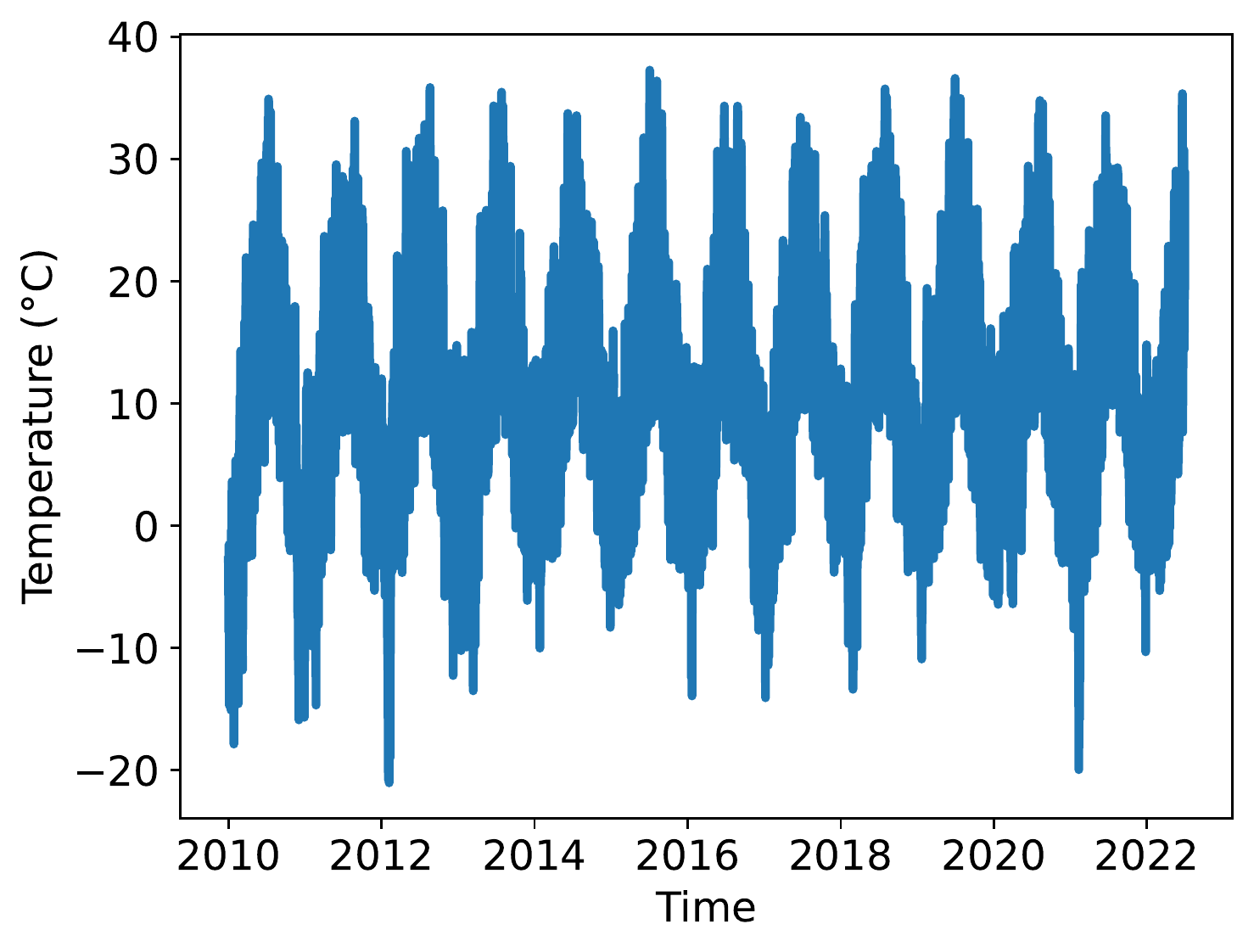}
     \end{subfigure}
     \hfill
     \begin{subfigure}[b]{0.49\textwidth}
         \centering
         \includegraphics[width=0.9\textwidth]{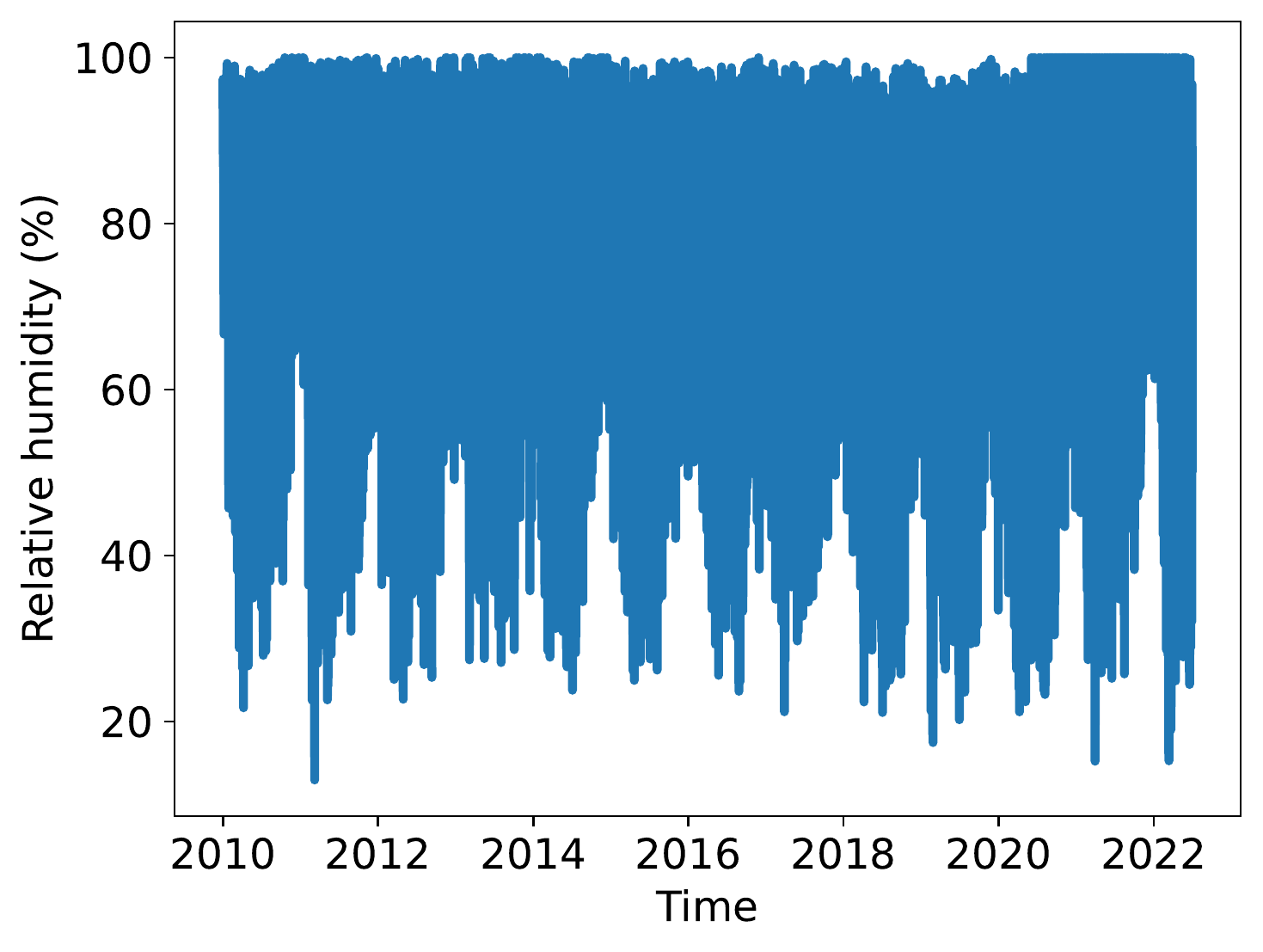}
     \end{subfigure}
    \caption{Time domain behavior of the weather data.}
        \label{fig:time_domain}
\end{figure}

For the sequence of temperature $z_{1:T}$, we perform its \emph{Discrete Fourier Transform} (DFT), which returns $T$ complex number as the frequency domain coefficients. We discard the second half of the coefficients due to symmetry, since the input of the transform is real. For the remaining coefficients, we take their absolute values, sort them and plot the result on a log-log plot. Similar to the wavelet experiment, we also fit a linear model using the largest 100 transform domain coefficients. These are shown as Figure~\ref{fig:PL_fourier} (Left), which exhibit the power law phenomenon. 

\begin{figure}[ht]
     \centering
     \begin{subfigure}[b]{0.49\textwidth}
         \centering
         \includegraphics[width=0.9\textwidth]{PL_temperature.pdf}
     \end{subfigure}
     \hfill
     \begin{subfigure}[b]{0.49\textwidth}
         \centering
         \includegraphics[width=0.9\textwidth]{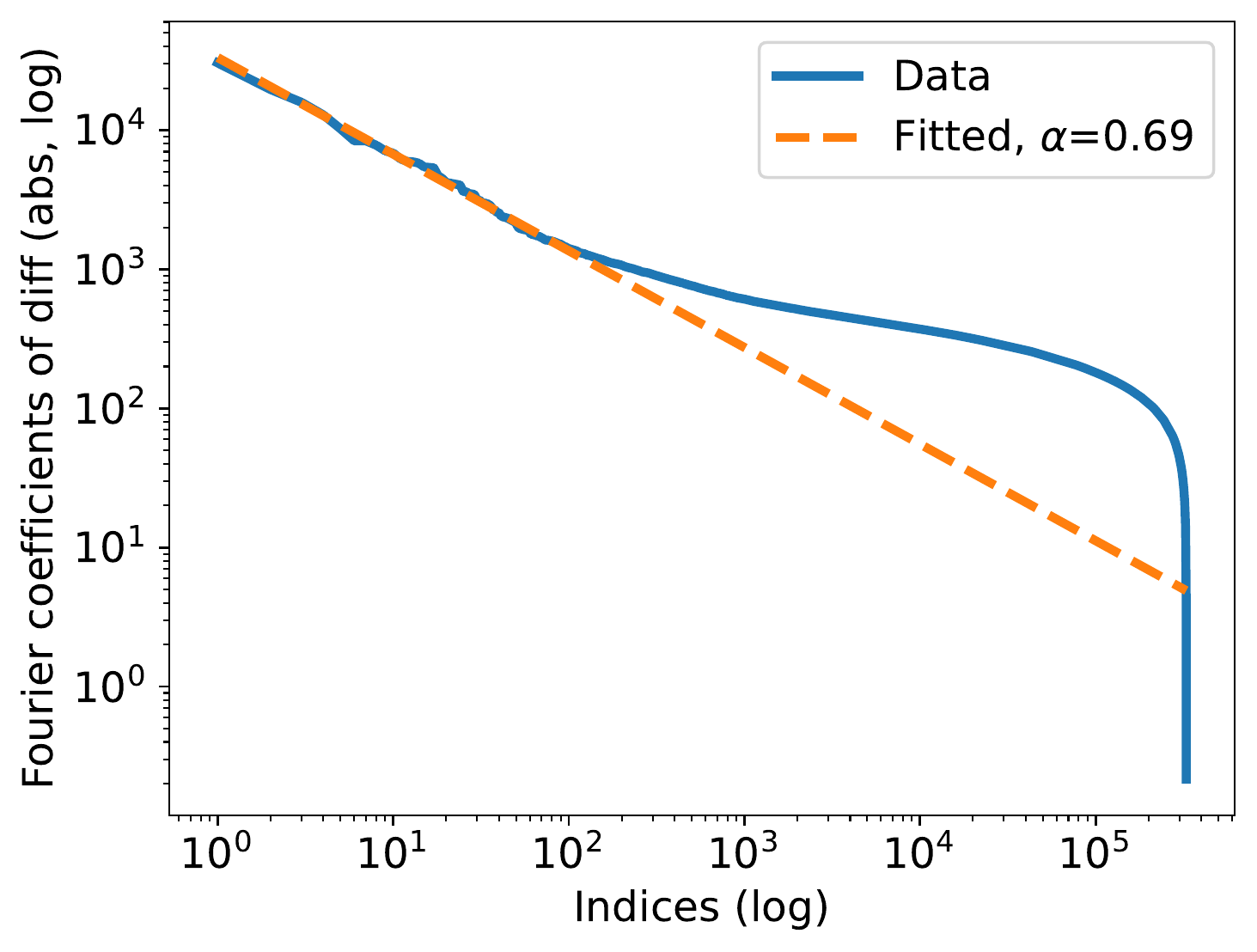}
     \end{subfigure}
    \caption{Verifying the power law on the Fourier dictionary. Left: the DFT of the temperature sequence. Right: the DFT of the temperature difference sequence. }
        \label{fig:PL_fourier}
\end{figure}

Furthermore, we perform the same procedure on the temperature difference sequence $\{z_{t+1}-z_{t}\}$, where the $t$-th entry is the change of temperature from the $t$-th time step to the $t+1$-th time step. The result is shown as Figure~\ref{fig:PL_fourier} (Right). Although the tail is heavier, we can still observe similar power-law phenomenon for large transform domain coefficients. 

Now, let us discuss again the implication of the observed power law in time series forecasting. First, consider forecasting $z_{1:T}$ without $\A$. Given the power law of $z_{1:T}$ itself, the Fourier version of our forecaster guarantees sublinear total loss. Next, consider forecasting $z_{1:T}$ with $\A$ being the \emph{zeroth-order hold} forecaster, i.e, $a_t=z_{t-1}$. The power law of the difference sequence $\{z_{t+1}-z_t\}$ implies good forecasting performance of our framework in this context. 

Parallel results on the humidity sequence are reported in Figure~\ref{fig:PL_humidity}, with a similar qualitative behavior. It illustrates the prevalence of the power law across different types of the data. 

\begin{figure}[ht]
     \centering
     \begin{subfigure}[b]{0.49\textwidth}
         \centering
         \includegraphics[width=0.9\textwidth]{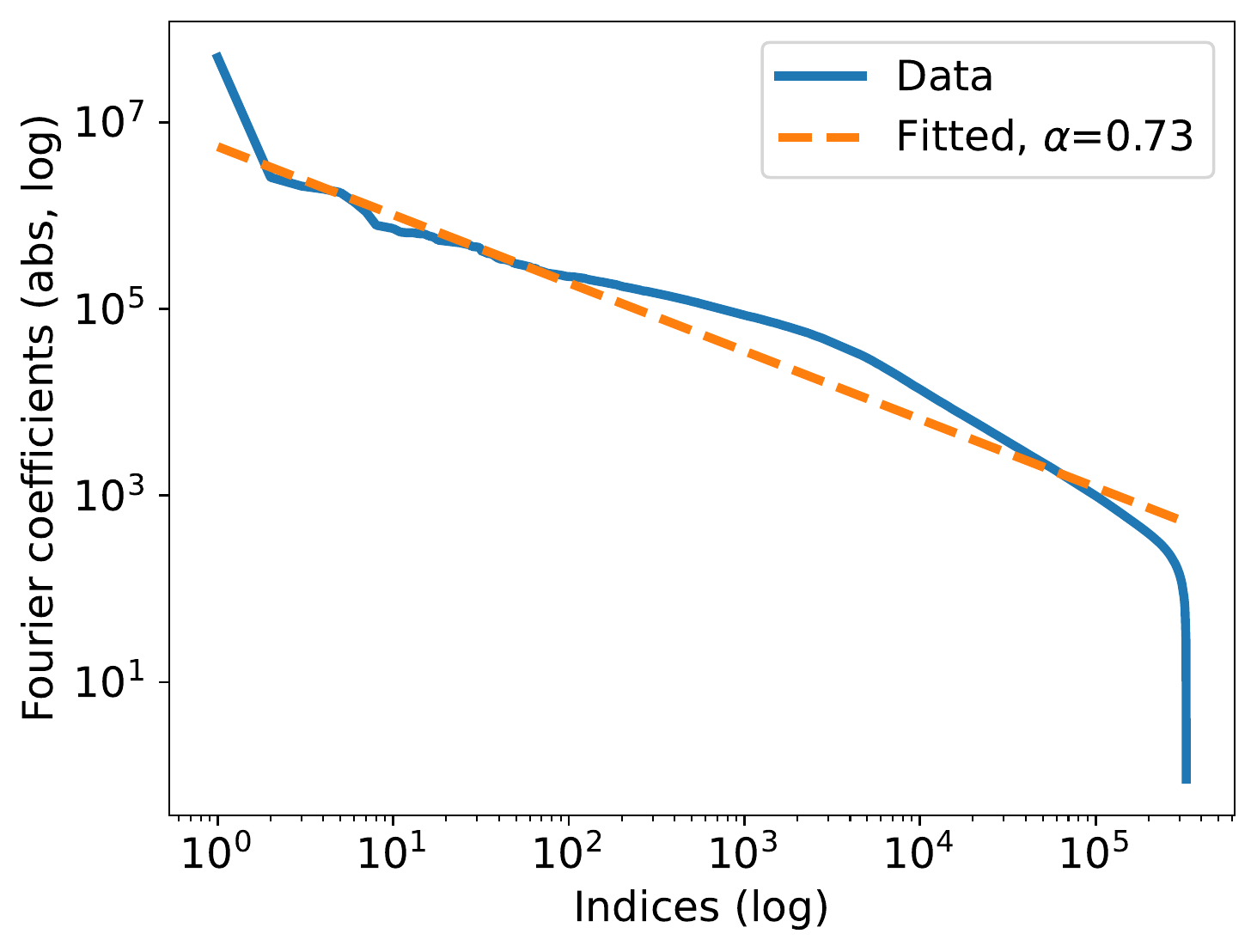}
     \end{subfigure}
     \hfill
     \begin{subfigure}[b]{0.49\textwidth}
         \centering
         \includegraphics[width=0.9\textwidth]{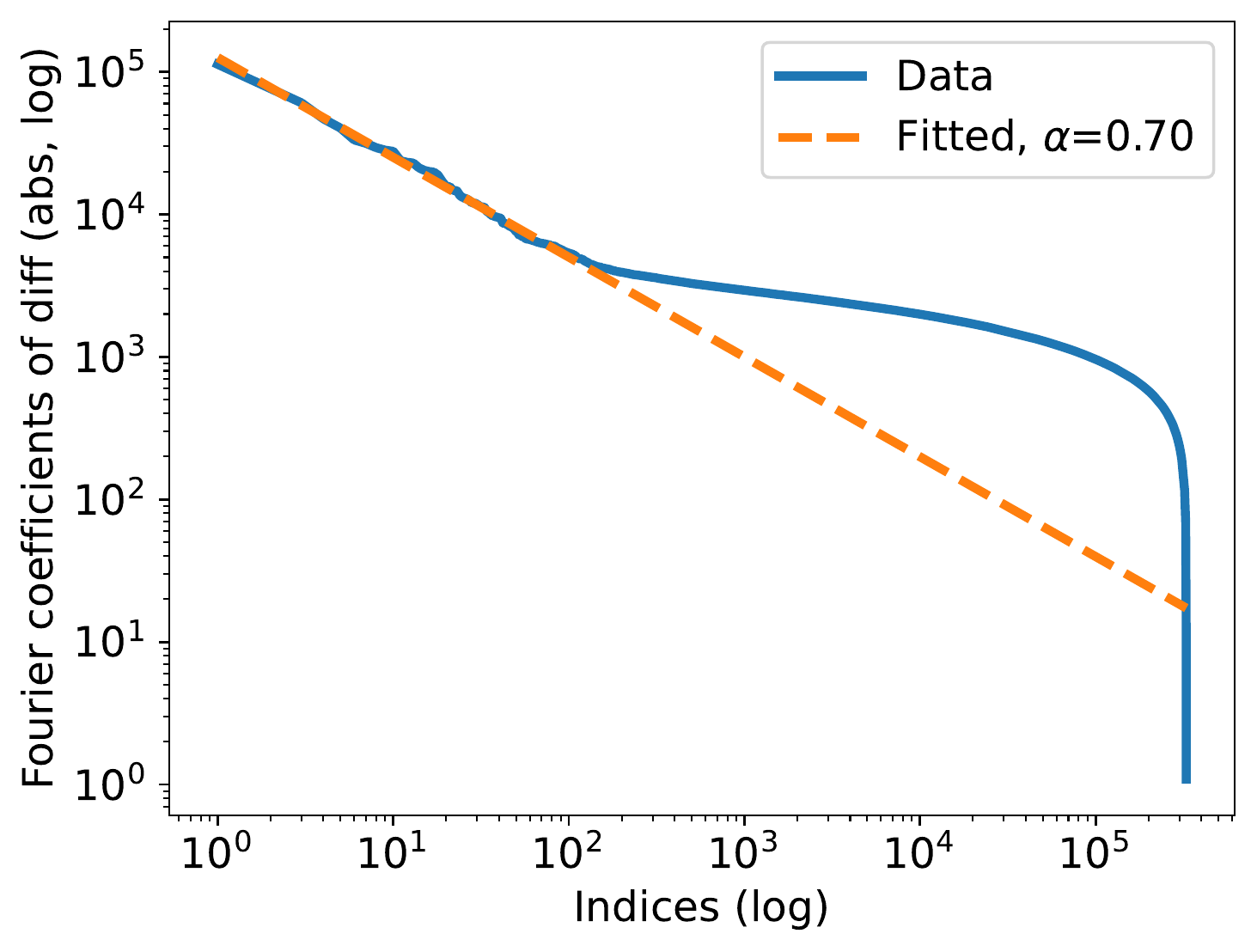}
     \end{subfigure}
    \caption{Verifying the power law on the Fourier dictionary. Left: the DFT of the humidity sequence. Right: the DFT of the humidity difference sequence. }
        \label{fig:PL_humidity}
\end{figure}

\subsection{Fine-tuning forecaster}\label{subsection:forecasting}

Finally, we test the performance of our forecasting framework on the synthetic switching data and the actual temperature sequence. For the first case, our framework is equipped with the Haar wavelet dictionary. The Fourier dictionary is adopted in the second case. In both cases, we compare our algorithm against the baseline from \cite{jacobsen2022parameter}. More specifically, we take our online learning algorithm (Algorithm~\ref{alg:sizen}) and the algorithm from \cite{jacobsen2022parameter}, plug them both into the time series forecasting workflow introduced at the beginning of this section, and then compare their total forecasting loss. 

Concretely, let us start from the wavelet dictionary. 

\paragraph{Wavelet dictionary} In this case, consider the setting without the external forecaster $\A$. We run both online learning algorithms (our Algorithm~\ref{alg:sizen} and the baseline \cite[Algorithm~2]{jacobsen2022parameter}), and use their outputs directly as the time series predictions. Our Algorithm~\ref{alg:sizen} is equipped with the Haar wavelet dictionary defined in Section~\ref{subsection:haar_definition}. The configurations of the time series model are the same as the previous subsection, with $T=2^{15}=32768$, $p=0.0005$ and $q=0.005$. The loss functions $l_t$ are the absolute loss. 

Both algorithms require a confidence hyperparameter $\eps$, and we set it to $1$. Since the time series data Eq.(\ref{eq:wavelet_time_series}) is random, we run both algorithms on 10 random seeds, and calculate their total loss. Our algorithm achieves a total loss of 44048, which is considerably lower than the baseline's total loss 62465. This is consistent with the theoretical results developed so far. 

\paragraph{Fourier dictionary} Next, we turn to the task of temperature forecasting. The data is reported in the previous subsection. We take its first $T=50000$ entries, and assign it to the true time series $z_{1:T}$; the loss functions $l_t$ are the absolute loss. The black box forecaster $\A$ is assigned to the zeroth-order hold forecaster, i.e., $a_t=z_{t-1}$. 

For our framework, we need to specify the dictionary. Although using the entire DFT matrix could lead to low regret guarantees (as demonstrated by the power law), this is computationally challenging. Instead, we exploit the fact that the weather is naturally periodic, with the period of one day. Picking the base frequency $\omega$ accordingly, we define features indexed by $k$ (the harmonic order) as
\begin{equation*}
h_{t,2k-1}=\cos(k\omega t),
\end{equation*}
\begin{equation*}
h_{t,2k}=\sin(k\omega t).
\end{equation*}
By specifying the maximum order $K$, we obtain $2K$ features $\{h_{t,2k-1},h_{t,2k}\}_{k\in[1:K]}$ from this construction. An all-one feature is further added, making the dictionary size $N=2K+1$.

Again, we set the confidence hyperparameter $\eps=1$ for our algorithm. The total loss as a function of the dictionary size $N$ is plotted in Figure~\ref{fig:Dynamic_ours}. Notably, the case of $N=0$ is equivalent to trivially following the advice of the given forecaster $\A$: $x_t=a_t=z_{t-1}$. It can be seen that our fine-tuning framework ($N>0$) actually results in better performance, due to exploiting the structures in the error sequence $z_{1:T}-a_{1:T}$. 

\begin{figure}[ht]
     \centering
    \includegraphics[width=0.45\textwidth]{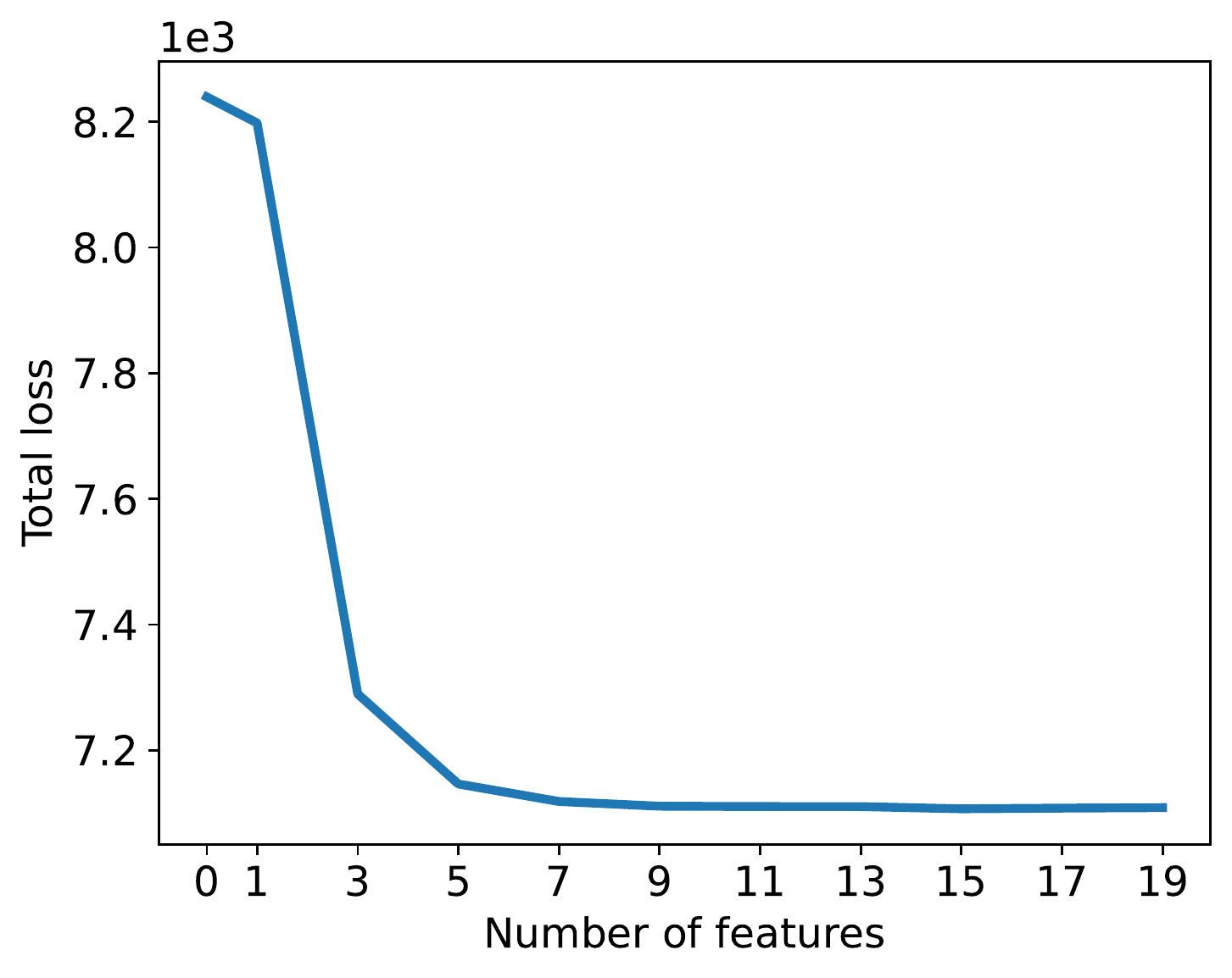}
    \caption{Testing our algorithm for temperature forecasting.}
        \label{fig:Dynamic_ours}
\end{figure}

We also test the fine-tuning performance of the algorithm from \cite{jacobsen2022parameter}. Same as the above, we set $\eps=1$. The total loss achieved by this alternative algorithm is 8238, which is around the same as $\A$ itself, and significantly higher than the total loss of our algorithm with moderate amount of features ($N>5$). This fits the intuition from this paper: the environment contains persistent dynamics, which the algorithm from \cite{jacobsen2022parameter} cannot handle. 

\end{document}